\newtheorem{lemma}{Lemma}
\newtheorem{rem}{Remark}
\newtheorem{theorem}{Theorem}
\newtheorem{proposition}{Proposition}
\newtheorem{definition}{Definition}
\newcommand{\blind}{0}
\date{}
\begin{document}
%\maketitle
%%%%%%%%%%%%%%%%%%%%%%%%%%%%%%%%%%%%%%%
%\bibliographystyle{natbib}

\def\spacingset#1{\renewcommand{\baselinestretch}%
{#1}\small\normalsize} \spacingset{1}

%%%%%%%%%%%%%%%%%%%%%%%%%%%%%%%%%%%%%%%%%%%%%%%%%%%%%%%%%%%%%%%%%%%%%%%%%%%%%%

\if0\blind
{
  \title{\bf High-Dimensional Penalized Bernstein Support Vector Machines}
  \author{Rachid Kharoubi{
 }\hspace{.2cm}\\
    Department of Mathematics, Universit\'e du Qu\'ebec \`A Montr\'eal\\
    Abdallah Mkhadri\\
    Departement of Mathematics,Cadi Ayyad University ,Faculty of Sciences Semlalia\\
    and \\
    Karim Oualkacha \\
   Department of Mathematics, Universit\'e du Qu\'ebec \`A Montr\'eal\\
   }
  \maketitle
} \fi

\if1\blind
{
  \bigskip
  \bigskip
  \bigskip
  \begin{center}
    {\LARGE\bf High-Dimensional Penalized Bernstein Support Vector Machines}
\end{center}
  \medskip
} \fi

\bigskip
\begin{abstract}
\noindent The support vector machines (SVM) is a powerful classifier used for binary classification to improve the prediction accuracy. However, the non-differentiability of the SVM hinge loss function can lead to computational difficulties in high dimensional settings. To overcome this problem, we rely on Bernstein polynomial and propose a new smoothed version of the SVM hinge loss called the Bernstein support vector machine (BernSVM), which is suitable for the high dimension $p >> n$ regime. As the BernSVM objective loss function is of the class $C^2$, we propose two efficient algorithms for computing the solution of the penalized BernSVM. The first algorithm is based on coordinate descent with maximization-majorization (MM) principle and the second one is IRLS-type algorithm (iterative re-weighted least squares). Under standard assumptions, we derive a cone condition and a restricted strong convexity to establish an upper bound for the weighted Lasso BernSVM estimator. Using a local linear approximation, we extend the latter result to penalized BernSVM with non convex penalties SCAD and MCP. Our bound holds with high probability and achieves a rate of order $\sqrt{s\log(p)/n}$, where $s$ is the number of active features. Simulation studies are considered to illustrate the prediction accuracy of BernSVM to its competitors and also to compare the performance of the two algorithms in terms of computational timing and error estimation. The use of the proposed method is illustrated through analysis of three large-scale real data examples.

\end{abstract}
%\begin{keywords}
{\bf Keywords} : SVM, Classification, Bernstein polynomial, Variables Selection, Non asymptotic Error Bound.
%\end{keywords}
%However, the hinge loss is not smooth and the Hubert loss his smoothed version 
%is not differentiable enough to develop the asymptotic properties in high dimensions. 

%\noindent In this paper, we propose a new smoothed version of the hinge loss called Bernstein loss.
%It is constructed using the Bernstein polynomial approximation.

%\noindent We present an L2 upper bound for the sparse estimators of the Bernstein penalized  SVM  with 
%convex and non-convex penalty

\newpage
\spacingset{1.75} % DON'T change the spacing!
\section{Introduction}
The SVM, \citep{cortes1995support}, is a powerful classifier used for binary classification to perform the 
prediction accuracy. Its motivation comes from geometric considerations and it seeks a hyperplane that separates two classes of data points by the largest margins (i. e. minimal distances of observations to a hyperplane). The classification rule depends only on a subset of observations, called support vectors, which lie along the lines indicating the width of the margin. It consists of classifying a test observation based on which side of the maximal margin hyperplane it lies. However, SVM can be less efficient in high dimensional setting with a large number of variables with only few of them are relevant for classification. Nowadays, problems with sparse scale data are common in applications such as finance, document classification, image analysis and gene expression analysis. Many sparse regularized SVM approaches are proposed to control the sparsity of the solution and to achieve both variable selection and classification. To list a few, SVM with $\ell_1$ penalty (\citep{bradley1998feature}; \citep{zhu20031}), SVM with elastic net \citep{wang2006doubly}, SVM with the SCAD penalty and SVM with the combination of SCAD and $\ell_2$ penalty \citep{becker2011elastic}. 
	
\noindent The objective function of these regularized SVM approaches is not differentiable at 1, so standard optimization techniques cannot be directly applied. To overcome this problem, recent alternatives are developed based on the main idea of approximating the hinge loss function with a modified smooth function which is differentiable. Then, the use of maximization-minimization (MM) principle together with a coordinate descent algorithm can be employed efficiently to update each component of the vector parameter of the SVM model. This idea was recently implemented in the generalized coordinate descent algorithm (gCDA) by \citep{yang2013efficient} and the SVM cluster correlation-network by \citep{kharoubi2019cluster}. Both authors considered the Huber loss function as a smooth approximation of the hinge loss function. They used the fact that the Huber loss is differentiable with a Lipschitz first derivative to build an upper-bound quadratic surrogate function for the Huber coordinate-wise objective function. Then, they solved the corresponding problem by a generalized coordinate descent algorithm using the MM principle to ensure the descent property. Other smooth approximations of the hinge loss can be found in the $\ell_2$ loss linear SVM of \citep{chang2008impact} and  \citep{lee2001rsvm}. 
Another algorithm that is known to be computationally efficient and might be adapted to solve penalized SVM is the iterative re-weighted least squares (IRLS-type) algorithm, which solves the regularized logistic regression \citep{friedman2010regularization}. However, all the objective functions of the aforementioned methods are not twice differentiable, and thus, efficient IRLS-type algorithms can not be designed to solve such approximate sparse SVM problems.
	
\noindent On the other side, some works on theoretical guaranties of sparse SVM (or $\ell_1$ SVM) are recently proposed, and are essentially based on the assumption that the Hessian of the SVM theoretical (expected) hinge loss is well-defined and continuous in order to use Taylor expansion or the continuity definition. For example, \citep{koo2008bahadur} studied the theoretical SVM loss function to overcome the differentiability of the hinge loss. Then, some appropriate assumptions about the distribution of the data $(\bm{x}_i,y_i)$, where $\bm{x}_i\in\mathbb{R}^p$ and $y_i\in\{-1,1\}, i=1,...,n$, were considered to ensure the existence and the continuity of the Hessian with respect to the vector of coefficients of the SVM regression model. \citep{koo2008bahadur} established asymptotic properties of the coefficients in low-dimension ($p<n$). \citep{zhang2016variable} used the same assumptions to establish the variable selection consistency in high-dimensions when the true vector of coefficients is sparse. \citep{dedieu2019error} exploited the same assumptions on the theoretical hinge loss in order to develop an upper bound of the error estimation of sparse SVM in high-dimension. 

%\textcolor{brown}{On the other hand, \citep{zhang2017oracle} derived an upper bound of the estimation error of the weighted Lasso in high dimensional additive hazards model. They used martingale to write the objective function in the least squares type loss function. The Hessian matrix is free of the coefficients, the same as the gram matrix in the least square context. They supposed the restricted strong convexity of the Hessian matrix to derive some oracle inequalities.}
%\textcolor{blue}{On the other hand, \citep{zhang2017oracle}, derived an upper bound of the estimation error of the weighted Lasso in high dimensional additive hazards model. They transformed the objective function on a least square loss kind. They suppose that the gram matrix have the restricted strong convexity property \citep{negahban2009unified}}.

\noindent Motivated by the computation problem of the SVM hinge loss and the discontinuity of the second derivative of the hessian of Huber hinge loss, we propose the BernSVM loss function, which is based on the  Bernstein polynomial approximation. Its second derivative is well-defined, continuous and bounded, which simplifies the implementation of a generalized coordinate descent algorithm with strict descent property and the implementation of an IRLS-type algorithm to solve penalized SVM. By construction, the proposed BernSVM loss function has suitable properties, which allows its Hessian to satisfy a restricted strong convexity \citep{negahban2009unified}. This is essential for non asymptotic theoretical guaranties developed in this work. More precisely and contrary to the works cited earlier, we consider the empirical loss function to establish an upper bound of the $\ell_2$ norm of the estimation error of the BernSVM weighted Lasso estimator. Similar to \citep{peng2016error}, we achieve an estimation error rate of order $\sqrt{s\log(p)/n}$.
		
\noindent This paper is organized as follows. In Section \ref{s2}, we give more details about the construction of the BernSVM loss function, using the Bernstein polynomials. Some properties of this function are oulined. Then we describe, in the same section, the two algorithms to solve the solution path of the penalized BernSVM. The first algorithm is a generalized coordinate descent and the second one is an IRLS type algorithm. In Section \ref{s3}, we undertake the theoretical properties of the penalized BernSVM with weighetd lasso penalty.  In particular, we derive an upper bound of the $\ell_2$ norm of the error estimation. Then, we extend this result to penalized BernSVM with a non-convex penalty (SCAD or MPC), using local linear approximation (LLA) algorithm. 
The empirical performance of BernSVM based on simulation studies and application to real datasets is detailed in Section \ref{s4}. Finally, a discussion and some conclusions are given in Section \ref{s5}.
\section{ The penalized BernSVM   }
\label{s2}
In this section we consider the penalized BernSVM in high dimension for binary classification with convex and non-convex penalties. We first present the BernSVM loss function, which is a spline polynomial of degree fourth, as a smooth approximation to the hinge loss function. Then, we present two efficient algorithms for 
computing the solution path of a solution to the penalized BernSVM problem. The first one is based on coordinate descent scheme and the second one is an IRLS type algorithm.

\noindent Assume we observe a training data of $n$ pairs,  $\{(y_1,{\mathbf x}_1),\ldots,(y_n,{\mathbf x}_n)\}$, where ${\mathbf x}_i\in I\!\!R^p$ and $y_i\in\{-1,1\}, i=1,...,n,$ denotes class labels.
\subsection{The fourth degree approximation spline}

We consider the problem of smoothing the SVM loss function $v$ : $ t\mapsto (1-t)_{+}$. 
 The idea is to fix some $\delta>0$, and to construct the simplest polynomial spline
\begin{equation}
\label{loss}
B_{\delta}(t)= \left\{\begin{array}{ll}
v(t)\ , & \mathrm{i}\mathrm{f}\ |t-1|\ >\delta,\\
g_{\delta}(t)\ , & \mathrm{i}\mathrm{f}\ |t-1|\ \leq\delta,
\end{array}\right.
\end{equation}
with $g_{\delta}$ is a decreasing polynomial, convex, and such that $B_{\delta}$ is twice continuously differentiable.
We have in particular $B_{\delta}(t) \rightarrow v(t)$, as $\delta\rightarrow 0$, for all $t \in \mathbb{R}$. These conditions can be rewritten in terms of the polynomial $g_{\delta}$ alone as
\begin{itemize}
\item[C1.] $g_{\delta}(1-\delta)=\delta$, $g_{\delta}^{'}(1-\delta)=-1$, and $g_{\delta}^{''}(1-\delta)=0$;
\item[C2.] $g_{\delta}(1+\delta)=0$, $g_{\delta}^{'}(1+\delta)=0$, and $g_{\delta}^{''}(1+\delta)=0$;
\item[C3.] $g_{\delta}^{''}\geq 0.$
\end{itemize}
Consider the affine transformation $ t\mapsto(t-1+\delta)/2\delta$, which maps the interval $[1-\delta,\ 1+\delta]$ on $[0$, 1$]$, and let
$$
q_{\delta}(x)=g_{\delta}(2\delta x+1-\delta)\ ,\ x\in\ [0,\ 1].
$$
In terms of $q_{\delta}$, the three conditions above become
\begin{itemize}
\item[C1'.] $q_{\delta}(0)=\delta$,  $q_{\delta}^{'}(0)=-2\delta$, and $q_{\delta}^{''}(0)=0$;
\item[C2'.] $q_{\delta}(1)=0$, $q_{\delta}^{'}(1)=0$, and $q_{\delta}^{''}(1)=0$;
\item[C3'.] $q_{\delta}^{''}\geq 0.$
\end{itemize}
These conditions can be dealt with in a simple manner in terms of a Bernstein basis.

\subsubsection{The BernSVM loss function}
The members of the Bernstein basis of degree $m$, $m\geq 0$, are the polynomials
$$
b_{k,m}(x)=\ \left(\begin{array}{l}
m\\
k
\end{array}\right)x^{k}(1-x)^{m-k},\ x\in\ [0,\ 1],
$$
for $0\leq k\leq m$. The coefficients of a polynomial $P$ in the Bernstein basis of degree $m$, will be denoted by $\{c(k,\ m;P)\ :\ k=0,\ .\ .\ .\ ,\ m\}$, and so we have
$$
P(x)=\sum_{k=0}^{m}c(k,\ m;P)b_{k,m}(x)\ ,\ x\in\ [0,\ 1].
$$
Let $\triangle$ be the forward difference operator. Here, when applied to a function $h$ of two arguments: $(k,\ m) \mapsto h(k,\ m)$, it is understood that $\triangle$ operates on the first argument:
%\newpage
$$
\triangle h(k,\ m)=h(k+1,\ m)-h(k,\ m)\quad \mbox{and}
$$
$$
\triangle^{2}h(k,\ m)=h(k+2,\ m)-2h(k+1,\ m)+h(k,\ m), \quad \mbox{for all} \quad (k, m).
$$ 
A basic fact related to the Bernstein basis is the following for $ 0\leq k\leq m,$ we have
$$
b_{k,m}'=m(b_{k-1,m-1}-b_{k,m-1})=-m\triangle b_{k-1,m-1} \quad \mbox{for}\quad  m\geq 1 \quad \mbox{qnd} \quad 0\leq k\leq m,  
$$
with the convention $b_{j,m}=0$ for $j\not\in\{0,\ .\ .\ .\ ,\ m\}.$

A useful consequence shows how derivatives of polynomials represented in the Bernstein basis act on the coefficients. Indeed, we have
\begin{itemize}
\item[(i)] $c(k,\ m-1;P')=m\triangle c(k,\ m;P)$, $0\leq k\leq m-1$,  $m\geq 1$, so that
$$
P^{'}(x)=m\sum_{k=0}^{m-1}\triangle c(k,\ m;P)b_{k,m-1}(x),\ x\in\ [0,\ 1],
$$
\item[(ii)] $c(k,\ m-2;P) =m(m-1)\triangle^{2}c(k,\ m;P)$, $0\leq k\leq m-2, m\geq 2$, so that
$$
P^{''}(x)=m(m-1)\sum_{k=0}^{m-2}\triangle^{2}c(k,\ m;P)b_{k,m-2}(x),\ x\in\ [0,\ 1].
$$
\end{itemize}
%\subsection{ The fourth degree approximating spline}
Then, it is easy to see that it is hopeless to find a cubic spline $B_{\delta}$ satisfying the constraints. If such a spline exists, its second derivative will be a polynomial spline of degree at most one, which must equal zero at the endpoints $ 1-\delta$ and $ 1+\delta$. By continuity of the second derivative, it therefore equals $0$ everywhere. The first derivative is therefore a degree zero polynomial on the entire axis and must equal $-1$ by the first condition. This contradicts the second condition.
%{\color{green}(Cette partie en ROUGE me semble inutile dans la mesure tu la rappelles dans la preuve}. The following theorem outlines the polynomial solution.
%\noindent {\color{purple}Normalement, pour le lissage d'une fonction, on cherche un polynome de degré 3. Cependant, les conditions C1 et C2 sur les dérivées secondes ne le permettent pas. Alors, nous allons chercher un polynome de degré 4. A mon avis, il faut la laisser parce qu'il y a th1 qui la suive et le lecteur ne pose pas la question pourquoi vous cherchez un poly de degré 4 a la place d'une spline.}
\begin{theorem}
The polynomial
\begin{equation}
\label{pdelta}
g_{\delta}(t)=\frac{1}{8\delta^{3}}\{\frac{(1-t+\delta)^{4}}{2}-(1-t-\delta)(1-t+\delta)^{3}\},\ t\in\ [1-\delta,\ 1+\delta]
\end{equation}
is the only degree $m=4$ polynomial satisfying the three conditions C1, C2 and C3.
\end{theorem}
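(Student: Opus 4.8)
The plan is to work entirely with the reparametrized polynomial $q_\delta$ and conditions C1', C2', C3', since the Bernstein machinery (facts (i) and (ii)) is tailored to the interval $[0,1]$. I would write $q_\delta$ in the degree-$4$ Bernstein basis, $q_\delta(x)=\sum_{k=0}^{4}c_k\,b_{k,4}(x)$ with $c_k:=c(k,4;q_\delta)$, and turn the six equality constraints of C1' and C2' into linear equations on the five coefficients $c_0,\dots,c_4$. The point of this representation is that the basis functions $b_{k,m}$ vanish at $0$ except $b_{0,m}(0)=1$ and vanish at $1$ except $b_{m,m}(1)=1$, so endpoint values read off coefficients directly, and facts (i)--(ii) convert derivatives into forward differences of coefficients.

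First I would record $q_\delta$ and its first two derivatives at the endpoints. At $x=0$ this gives $q_\delta(0)=c_0$, $q_\delta'(0)=4\,\triangle c_0=4(c_1-c_0)$, and $q_\delta''(0)=12\,\triangle^2 c_0=12(c_2-2c_1+c_0)$; symmetrically at $x=1$, $q_\delta(1)=c_4$, $q_\delta'(1)=4(c_4-c_3)$, and $q_\delta''(1)=12(c_4-2c_3+c_2)$. Substituting C1' and C2' yields $c_0=\delta,\ c_1=\delta/2,\ c_2=0$ from the left endpoint and $c_4=0,\ c_3=0$ from the right endpoint, while the last constraint $q_\delta''(1)=0$ collapses to $c_2=0$ and is therefore automatically compatible. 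The system is thus overdetermined (six equations, five unknowns) but consistent, so the coefficients are uniquely pinned down. Since every degree-$4$ polynomial has exactly these five Bernstein coefficients, this simultaneously establishes existence and uniqueness of the degree-$4$ solution of C1'--C2', which by the reparametrization transfers to C1--C2 for $g_\delta$.

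The remaining item is the convexity condition C3'. Here I would invoke fact (ii) once more: $q_\delta''(x)=12\sum_{k=0}^{2}\triangle^2 c_k\,b_{k,2}(x)$, whose second-difference coefficients are $\triangle^2 c_0=0$, $\triangle^2 c_1=\delta/2$, and $\triangle^2 c_2=0$. Because all three are nonnegative and the quadratic Bernstein basis is nonnegative on $[0,1]$, convexity is immediate; explicitly $q_\delta''(x)=6\delta\,b_{1,2}(x)=12\delta\,x(1-x)\ge 0$. This is really the crux of the argument and the reason the Bernstein basis is the natural tool: the sign condition C3' is inherited for free from the nonnegativity of the second-difference coefficients, with no further estimation needed.

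Finally I would undo the affine change of variables. With $x=(t-1+\delta)/(2\delta)$, so that $1-x=(1-t+\delta)/(2\delta)$ and $x=-(1-t-\delta)/(2\delta)$, and using the compact form $q_\delta(x)=c_0 b_{0,4}(x)+c_1 b_{1,4}(x)=\delta(1-x)^4+2\delta x(1-x)^3$, a direct substitution reproduces formula \eqref{pdelta}: the term $\delta(1-x)^4$ becomes $\tfrac{1}{8\delta^3}\cdot\tfrac{(1-t+\delta)^4}{2}$ and $2\delta x(1-x)^3$ becomes $-\tfrac{1}{8\delta^3}(1-t-\delta)(1-t+\delta)^3$. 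I do not anticipate any genuine obstacle beyond bookkeeping; the one place needing slight care is keeping the two endpoints' second-derivative conditions distinct, so that the overdetermined system is recognized as consistent rather than contradictory.
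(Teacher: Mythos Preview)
Your proposal is correct and follows essentially the same route as the paper: reparametrize to $q_\delta$ on $[0,1]$, read off the five Bernstein coefficients from the six endpoint constraints C1'--C2' (noting the redundancy at $q_\delta''(1)=0$), verify C3' via nonnegativity of the second differences, and substitute back to recover~\eqref{pdelta}. The only cosmetic difference is that you carry out the final change of variables explicitly, whereas the paper records the chain-rule relations and the closed form of $q_\delta$, $q_\delta'$, $q_\delta''$ without writing out the substitution.
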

\noindent The proof is differed to Appendix {\bf{A}}. 

\noindent The following proposition summarizes the properties of the first and the second derivative of the BernSVM loss function $B_{\delta}(.)$ defined in equation (1).
\begin{proposition}
For all $t\in\mathbb{R}$, we have $|B_{\delta}^{'}(t)|\le 1$ and $0\le B_{\delta}^{''}(t)\le \frac{3}{4\delta}$.
\end{proposition}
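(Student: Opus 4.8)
The plan is to exploit the piecewise definition (\ref{loss}) and reduce everything to the central interval. First I would split $\mathbb{R}$ into the three pieces $\{t<1-\delta\}$, $[1-\delta,\,1+\delta]$, and $\{t>1+\delta\}$. On the two outer pieces $B_{\delta}$ equals $v(t)=1-t$ (so $B_{\delta}'=-1$, $B_{\delta}''=0$) or $v(t)=0$ (so $B_{\delta}'=B_{\delta}''=0$); both inequalities hold trivially there. Since the Theorem guarantees $B_{\delta}\in C^{2}$ with the one-sided derivatives matching at the knots $t=1\pm\delta$ (conditions C1, C2), it suffices to establish $|g_{\delta}'|\le 1$ and $0\le g_{\delta}''\le 3/(4\delta)$ on the closed interval $[1-\delta,\,1+\delta]$; the three pieces then glue into the bounds on all of $\mathbb{R}$.

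For the central interval I would work through the Bernstein representation already set up before the Theorem rather than differentiate the quartic (\ref{pdelta}) by hand. Passing to $q_{\delta}(x)=g_{\delta}(2\delta x+1-\delta)$, $x\in[0,1]$, conditions C1' and C2' pin down the degree-$4$ Bernstein coefficients uniquely: evaluating $q_{\delta}$ and its derivatives at the endpoints through rules (i)--(ii) forces $c(0,4;q_{\delta})=\delta$, $c(1,4;q_{\delta})=\delta/2$, and $c(2,4;q_{\delta})=c(3,4;q_{\delta})=c(4,4;q_{\delta})=0$. Applying rule (ii) then gives the clean factored form
\begin{equation*}
q_{\delta}''(x)=12\,\delta\,x(1-x),
\end{equation*}
while rule (i) gives $q_{\delta}'(x)=-2\delta(1-x)^{2}(1+2x)$. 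Undoing the affine change of variables (so $dx/dt=1/(2\delta)$ and $x=(t-1+\delta)/(2\delta)\in[0,1]$) yields $g_{\delta}''(t)=q_{\delta}''(x)/(4\delta^{2})=3x(1-x)/\delta$ and $g_{\delta}'(t)=q_{\delta}'(x)/(2\delta)=-(1-x)^{2}(1+2x)$.

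The two bounds are now elementary extremal problems on $[0,1]$. For the second derivative, $x(1-x)\in[0,1/4]$ with maximum at $x=1/2$, so $0\le g_{\delta}''(t)=3x(1-x)/\delta\le 3/(4\delta)$, the upper value being attained exactly at $t=1$. For the first derivative, set $\phi(x)=(1-x)^{2}(1+2x)$, so that $g_{\delta}'(t)=-\phi(x)$; a short computation gives $\phi'(x)=-6x(1-x)\le 0$ on $[0,1]$, hence $\phi$ decreases from $\phi(0)=1$ to $\phi(1)=0$, so $0\le\phi\le 1$ and $|g_{\delta}'(t)|=\phi(x)\le 1$. Equivalently, once $g_{\delta}''\ge 0$ is known, $g_{\delta}'$ is monotone on $[1-\delta,\,1+\delta]$ and therefore lies between its endpoint values $g_{\delta}'(1-\delta)=-1$ and $g_{\delta}'(1+\delta)=0$ fixed by C1 and C2.

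I do not anticipate a genuine obstacle: the whole statement reduces to bounding $x(1-x)$ and $(1-x)^{2}(1+2x)$ on $[0,1]$. The only place demanding care is the bookkeeping at the knots --- checking that the constant bound $3/(4\delta)$ and the bound $|B_{\delta}'|\le 1$ survive the transition between the three pieces --- but this is exactly what the $C^{2}$ matching conditions C1 and C2 deliver, so no extra work is needed there.
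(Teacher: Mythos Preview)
Your argument is correct and follows essentially the same route as the paper: compute $g_{\delta}''$ explicitly on the central interval, read off the bound $0\le g_{\delta}''\le 3/(4\delta)$ from its maximum, and then use $g_{\delta}''\ge 0$ together with the boundary values $g_{\delta}'(1-\delta)=-1$, $g_{\delta}'(1+\delta)=0$ from C1--C2 to trap $g_{\delta}'$ in $[-1,0]$. The only cosmetic difference is that the paper differentiates $g_{\delta}$ directly in the $t$-variable to get $g_{\delta}''(t)=\tfrac{3}{4\delta^{3}}[\delta^{2}-(1-t)^{2}]$, while you route through the Bernstein $x$-coordinate and obtain the equivalent $3x(1-x)/\delta$; the two are the same expression under the affine change of variables.
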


\begin{proof}
 \noindent The loss function $B_{\delta}(.)$ is twice continuously differentiable. Its first derivative is given by  
\begin{equation}
\label{loss1}
B_{\delta}^{'}(t)= \left\{\begin{array}{ll}
v^{'}(t)\ , & \mathrm{i}\mathrm{f}\ |t-1|\ >\delta,\\
g_{\delta}^{'}(t)\ , & \mathrm{i}\mathrm{f}\ |t-1|\ \leq\delta,
\end{array}\right.
\end{equation}
where $g_{\delta}^{'}(t) = \frac{(1-t+\delta)^2(1-t-2\delta)}{4\delta^3}$ and $v^{'}(t) = - \mathbf{1}_{\{1 - t > \delta \}} $.
Its second derivative is given by
\begin{equation}
\label{loss2}
B_{\delta}^{''}(t)= \left\{\begin{array}{ll}
0 \ , & \mathrm{i}\mathrm{f}\ |t-1|\ >\delta,\\
g_{\delta}^{''}(t)\ , & \mathrm{i}\mathrm{f}\ |t-1|\ \leq\delta,
\end{array}\right.
\end{equation}
where, $g_{\delta}^{''}(t) = \frac{3}{4\delta^3} [\delta^2 - (1-t)^2]$.
 We have, $0 \leq (1-t)^2 \leq \delta^2$ then, $0 \leq \delta^2 - (1-t)^2 \leq \delta^2$.
Thus, $0 \leq g_{\delta}^{''}(t) \leq \frac{3}{4\delta}$ and $g_{\delta}^{'}(t)$ is an increasing function.
We have also $g_{\delta}^{'}(1-\delta) = -1$ and $g_{\delta}^{'}(1 + \delta) = 0 $, then, 
$-1 \leq p_{\delta}^{'}(t) \leq  0$.
Thus, $\forall t \in \mathbb{R}$ we have $|B_{\delta}^{'}(t)|\leq 1$. \quad $\blacksquare$
\end{proof}
\subsubsection{Graphical comparaison of the three loss functions}
In this section, we examine some properties of the BernSVM loss function through a graphical illustration. We also compare its behavior with the standard SVM loss function $v(t)$ and the hinge loss function  considered in  HHSVM \citep{yang2013efficient}, which is given by 
\begin{displaymath}
\phi_c(t) =
\left\{
	\begin{array}{ll}
		0  & \mbox{, } t \geq 1, \\
		\frac{(1-t)^2}{2\delta} & \mbox{, } 1 -\delta <t\leq 1, \\
		1-t-\frac{\delta}{2} & \mbox{, }  t \leq 1-\delta. \\
	\end{array}
\right.
\end{displaymath}

\begin{figure}[H]
\caption{(a) The Huber loss (red) with $\delta = 0.01$, the BernSVM loss 
(blue) with $\delta = 0.01$ and the SVM loss (black), (b) The Huber loss (red) with $\delta = 2$, the BernSVM loss 
(blue) with $\delta = 2$ and the SVM loss (black)}
\begin{center}
\includegraphics[scale=0.8]{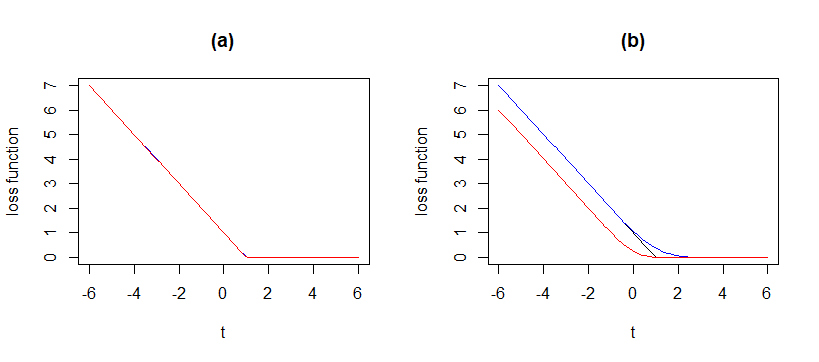} 
\end{center}
\end{figure}

\noindent Figure 1 shows this graphical illustration. When $\delta$ is small (i.e. close to zero), the three loss functions are almost identical (Panel (a)). 
When $\delta \geq 1$ (Panel (b)), both HHSVM and BernSVM loss functions have the same shape that the SVM loss. However, BernSVM approximates better the SVM loss function. Moreover, BernSVM loss function is twice differentiable everywhere. 

%\noindent The loss function $B_{\delta}(.)$ is a twice continuously differentiable function. Its first derivative is given by  
%\begin{equation}
%\label{loss1}
%B_{\delta}^{'}(t)= \left\{\begin{array}{ll}
%v^{'}(t)\ , & \mathrm{i}\mathrm{f}\ |t-1|\ >\delta,\\
%g_{\delta}^{'}(t)\ , & \mathrm{i}\mathrm{f}\ |t-1|\ \leq\delta,
%\end{array}\right.
%\end{equation}
%where $g_{\delta}^{'}(t) = \frac{(1-t+\delta)^2(1-t-2\delta)}{4\delta^3}$ and $v^{'}(t) = - \mathbf{1}_{\{1 - t > \delta \}} $.
%Its second derivative is given by
%\begin{equation}
%\label{loss2}
%B_{\delta}^{''}(t)= \left\{\begin{array}{ll}
%0 \ , & \mathrm{i}\mathrm{f}\ |t-1|\ >\delta,\\
%g_{\delta}^{''}(t)\ , & \mathrm{i}\mathrm{f}\ |t-1|\ \leq\delta,
%\end{array}\right.
%\end{equation}
%where, $g_{\delta}^{''}(t) = \frac{3}{4\delta^3} [\delta^2 - (1-t)^2]$.
% We have, $0 \leq (1-t)^2 \leq \delta^2$ then, $0 \leq \delta^2 - (1-t)^2 \leq \delta^2$.
%Thus, $0 \leq g_{\delta}^{''}(t) \leq \frac{3}{4\delta}$ and $g_{\delta}^{'}(t)$ is an increasing function.
%We have also $g_{\delta}^{'}(1-\delta) = -1$ and $g_{\delta}^{'}(1 + \delta) = 0 $, then, 
%$-1 \leq p_{\delta}^{'}(t) \leq  0$.
%Thus, $\forall t \in \mathbb{R}$ we have $|B_{\delta}^{'}(t)|\leq 1$.
\subsection{Algorithms for solving penalized BernSVM }
In this section, we propose the penalized BernSVM as an alternative to the penalized SVM and the penalized HHSVM for binary classification in high dimension settings. We are assuming $n$ pairs of training data $(\bm{x}_i,y_i)$,
for $i=1,...,n$, with $\bm{x}_i\in \mathbb{R}^p$ predictors and $y_i \in\{-1,1\}$.
We assume that the predictors are standardized:
$$\frac{1}{n}\sum_{i=1}^{n}x_{ij} = 0, \quad \frac{1}{n}\sum_{i=1}^{n}x_{ij}^2 = 1.$$

\noindent We propose to solve the penalized BernSVM problem given by 
\begin{equation}
\label{problem}
\underset{(\beta_0,\bm{\beta})}{\min} \frac{1}{n} \sum_{i=1}^{n}B_{\delta}(y_i(\beta_0 + \bm{x}_i^{\top}\bm{\beta})) + P_{\lambda_1,\lambda_2}(\bm{\beta}),
\end{equation}
where the objective function $B_{\delta}(.)$ is defined in (\ref{loss}) and 
$$
P_{\lambda_1,\lambda_2}(\bm{\beta}) = \sum_{j=1}^{p} P_{\lambda_1}(|\beta_j|) + \frac{\lambda_2}{2} ||\bm{\beta}||_2^2
$$ 
is a penalty function added for obtaining sparse coefficients' estimates. The penalty $P_{\lambda_1}(|\beta_j|)$ takes different forms: 
\begin{itemize}
 \item[$\bullet$] If $P_{\lambda_1}(|\beta_j|) = \lambda_1 |\beta_j|$,  $P_{\lambda_1,\lambda_2}(\bm{\beta})$ is the elastic net penalty (EN);
  \item[$\bullet$] If $P_{\lambda_1}(|\beta_j|) = \lambda_1w_j|\beta_j|$, where the weight $w_j > 0$ for all $j$, then $P_{\lambda_1,\lambda_2}(\bm{\beta})$ is the adaptive Lasso + L2 norm penalty (AEN);
  \item[$\bullet$] If one set 
  $$P_{\lambda_1}(|\beta_j|) =  \lambda_1 |\beta_j| \mathbb{1}_{|\beta_j|\leq \lambda_1} - 
\frac{|\beta_j|^2 - 2a\lambda_1|\beta_j| + \lambda_1^2}{2(a-1)}\mathbb{1}_{\lambda_1 < |\beta_j|\leq a\lambda_1} + \frac{(a+1)\lambda_1^2}{2}\mathbb{1}_{|\beta_j| > 
a \lambda_1},$$
with $a > 2$, then $P_{\lambda_1,\lambda_2}(\bm{\beta})$ is the  SCAD + L2 norm penalty (SCADEN);
 \item[$\bullet$] If one set
 $$
 P_{\lambda_1}(|\beta_j|)=\lambda_1 (|\beta_j| - \frac{|\beta_j|^2}{2\lambda_1 a})
			\mathbb{1}_{|\beta_j|< \lambda_1 a} + \frac{\lambda_1^2 a}{2}\mathbb{1}_{|\beta_j| \geq \lambda_1 a},
			$$ 
where $a > 1,$ then in this case, $P_{\lambda_1,\lambda_2}(\bm{\beta})$ is the  MCP + L2 norm penalty (MCPEN).
\end{itemize}
%\lambda_1 ||\bm{\beta}||_1
Before going through details of the proposed algorithms to solve BernSVM optimization problem, we would like to emphasize that, for small $\delta$, the minimizer of the penalized BernSVM in (\ref{problem}) is a good approximation to the minimizer of the penalized SVM. This is outlined in the following proposition.

%\begin{lemma}
\begin{proposition}
\noindent Let the penalized SVM loss function be defined as follows 
$$R(\bm{\beta},\beta_0) = \frac{1}{n} \sum_{i=1}^{n}v(y_i(\beta_0 + \bm{x}_i^{\top}\bm{\beta})) + P_{\lambda_1,\lambda_2}(\bm{\beta}),$$
and let the penalized BernSVM loss function be given as 
$$R(\bm{\beta},\beta_0|\delta) = \frac{1}{n} \sum_{i=1}^{n} B_{\delta}(y_i(\beta_0 + \bm{x}_i^{\top}\bm{\beta}))+ P_{\lambda_1,\lambda_2}(\bm{\beta}).$$
Then, we have 
$$\inf_{(\bm{\beta},\beta_0)} R(\bm{\beta},\beta_0) \leq \inf_{(\bm{\beta},\beta_0)} R(\bm{\beta},\beta_0|\delta) \leq \inf_{(\bm{\beta},\beta_0)} R(\bm{\beta},\beta_0)  + \delta.$$
%\end{lemma}
\end{proposition}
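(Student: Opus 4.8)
The plan is to reduce the statement about infima to a single pointwise comparison between the hinge loss $v$ and the smoothed loss $B_\delta$, and then push that comparison through the empirical average and the (common) penalty term. Concretely, I would first establish the two-sided bound
\begin{equation*}
v(t)\ \le\ B_{\delta}(t)\ \le\ v(t)+\delta\qquad\text{for all } t\in\mathbb{R}.
\end{equation*}
Once this is in hand, everything else is bookkeeping: for every fixed $(\bm{\beta},\beta_0)$ the margins $t_i=y_i(\beta_0+\bm{x}_i^{\top}\bm{\beta})$ plug into the pointwise bound, so averaging over $i=1,\dots,n$ and adding the identical penalty $P_{\lambda_1,\lambda_2}(\bm{\beta})$ to all three sides yields $R(\bm{\beta},\beta_0)\le R(\bm{\beta},\beta_0\mid\delta)\le R(\bm{\beta},\beta_0)+\delta$.

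The core work is the pointwise inequality, and I would split it along the definition of $B_\delta$ in (\ref{loss}). For $|t-1|>\delta$ we have $B_\delta(t)=v(t)$, so both inequalities hold with zero gap. For $|t-1|\le\delta$ I would substitute $u=1-t\in[-\delta,\delta]$ and simplify the formula (\ref{pdelta}) by factoring out $(1-t+\delta)^3=(u+\delta)^3$, which collapses $g_\delta$ to the compact form $g_{\delta}=\dfrac{(u+\delta)^{3}(3\delta-u)}{16\delta^{3}}$. On $u\in[-\delta,0]$ one has $v=0$ and each factor of $g_\delta$ is nonnegative, giving the lower bound immediately; on $u\in[0,\delta]$ one has $v=u$ and the decisive computation is that $g_{\delta}-u$ factors cleanly as $\dfrac{(\delta-u)^{3}(u+3\delta)}{16\delta^{3}}\ge 0$, which delivers $B_\delta\ge v$. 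The upper bound then follows from the same two expressions, since the gap $B_\delta(t)-v(t)$ is maximized at $t=1$ (i.e.\ $u=0$) with value $3\delta/16\le\delta$.

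Finally I would convert the pointwise-in-$(\bm{\beta},\beta_0)$ inequalities into the infimum statement. For the left inequality, $R\le R(\,\cdot\mid\delta)$ at every point gives $\inf R\le \inf R(\,\cdot\mid\delta)$ directly. For the right inequality, from $\inf R(\,\cdot\mid\delta)\le R(\bm{\beta},\beta_0\mid\delta)\le R(\bm{\beta},\beta_0)+\delta$ holding for all $(\bm{\beta},\beta_0)$, I take the infimum over the right-hand side to obtain $\inf R(\,\cdot\mid\delta)\le \inf R+\delta$. I expect the only genuine obstacle to be the algebraic verification that $g_\delta(t)-v(t)$ is nonnegative throughout $[1-\delta,1+\delta]$; the factorization above makes the sign transparent and simultaneously pins down the sharp constant $3/16$, so no delicate estimate or compactness/attainment argument for the infima is actually needed.
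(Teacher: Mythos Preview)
Your proposal is correct and follows the same overall architecture as the paper: establish the pointwise sandwich $v(t)\le B_\delta(t)\le v(t)+\delta$, average over $i$, add the common penalty, then pass to infima exactly as you describe.

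The only difference lies in how the pointwise bound on $[1-\delta,1+\delta]$ is verified. The paper argues qualitatively from Proposition~1: since $g_\delta'\le 0$, the function $g_\delta$ is decreasing on $[1-\delta,1+\delta]$, hence $0\le g_\delta(t)\le g_\delta(1-\delta)=\delta$, which together with $v(t)\ge 0$ gives the sandwich. You instead substitute $u=1-t$ and work with the explicit factorizations $g_\delta=(u+\delta)^3(3\delta-u)/(16\delta^3)$ and $g_\delta-u=(\delta-u)^3(u+3\delta)/(16\delta^3)$, reading off the signs directly. Your route is slightly more computational but self-contained and delivers the sharp gap $3\delta/16$ as a by-product; the paper's route is shorter because it simply recycles the derivative bounds already in hand.
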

\begin{proof}
\noindent We have
$$ B_{\delta}(t) = v(t)\mathbf{1}_{(|t-1|>\delta)}(t) + g_{\delta}(t)\mathbf{1}_{(|t-1|\leq \delta)}(t),$$
where, $\mathbf{1}_{A}(t) = 1 $ if $t \in A$ and 0 otherwise. Then, one can write
$$B_{\delta}(t) - v(t) = \left\{\begin{array}{ll}
0\ , & \mathrm{i}\mathrm{f}\ |t-1|\ >\delta,\\
g_{\delta}(t) \geq 0\ , & \mathrm{i}\mathrm{f}\ |t-1|\ \leq\delta,
\end{array}\right.$$
which means that $v(t) \leq B_{\delta}(t)$. We have also that $g_{\delta}(t)$ is decreasing for any $t\in [1-\delta,1+\delta]$ because $g_{\delta}^{'}(t)\leq 0$. Thus, one has $g_{\delta}(t) \leq g_{\delta}(1-\delta) = \delta$, which  implies 
$$ v(t) + \delta - B_{\delta}(t) = \left\{\begin{array}{ll}
\delta \geq 0\ , & \mathrm{i}\mathrm{f}\ |t-1|\ >\delta\\
\delta - g_{\delta}(t) \geq 0\ , & \mathrm{i}\mathrm{f}\ |t-1|\ \leq\delta.
\end{array}\right.$$
We conclude that
$$v(t) \leq B_{\delta}(t) \leq v(t) + \delta.$$
This means that 
 $$R(\bm{\beta},\beta_0) \leq R(\bm{\beta},\beta_0|\delta) \leq R(\bm{\beta},\beta_0)  + \delta,$$
% Let $(\hat{\bm{\beta}}^{SD},\hat{\beta}_0^{SD})$ be the minimizer of the standard penalized SVM 
% and $(\hat{\bm{\beta}}^{\delta},\hat{\beta}_0^{\delta})$ is the minimizer of the smoothed penalized 
 %SVM.
 
\noindent which leads to 
$$\inf_{(\bm{\beta},\beta_0)} R(\bm{\beta},\beta_0) \leq \inf_{(\bm{\beta},\beta_0)} R(\bm{\beta},\beta_0|\delta) \leq \inf_{(\bm{\beta},\beta_0)} R(\bm{\beta},\beta_0)  + \delta. \quad \blacksquare
$$
\end{proof}
\noindent In the remaining of section \ref{s2}, we propose two competitor algorithms for solving (\ref{problem}) with convex penalties. The first one is a generalized coordinate descent algorithm based on MM principle, while the second 
one is an IRLS-type algorithm. Moreover, we briefly describe a third algorithm based 
on local linear approximation for solving (\ref{problem}) with non-convex penalties. 
 \subsubsection{The coordinate descent algorithm for penalized BernSVM}
In this section, we present a generalized coordinate descent (GCD) algorithm to solve penalized BernSVM, called BSVM-GCD, similar to the GCD approach proposed by \citep{yang2013efficient}. 
Because the second derivative of the BernSVM objective function is bounded, we approximate the coordinate wise objective function by a surrogate function and we use the coordinate descent to solve the optimization problem in (\ref{problem}). The proposed approach is described next.
 
\noindent Notice, first, that the coordinate objective function of problem (\ref{problem}) can be written as follows
 %Let also $F$ and $G$ be the densities of $\bm{X}$ given $\bm{y} = 1$ and $\bm{y} = -1$.
%\noindent Let $r_i = y_i (\tilde{\beta}_0 + \bm{x}_i^{\top}\tilde{\beta})$.
\begin{equation}
\label{3}
F(\beta_j|\beta_0,\tilde{\beta}_j) :=  \frac{1}{n} \sum_{i=1}^{n}B_{\delta}\{r_i + y_{i}x_{ij}(\beta_j - \tilde{\beta}_j)\} + P_{\lambda_1,\lambda_2}(\beta_j),
\end{equation}
where  $r_i = y_i (\tilde{\beta}_0 + \bm{x}_i^{\top}\tilde{\boldsymbol\beta})$, and $\tilde{\beta}_0$ and $\tilde{\boldsymbol\beta}$ are the current estimates of $\beta_0$ and $\boldsymbol\beta$, respectively.
% \noindent The following Lemma is used to approximate the objective function in (\ref{3})
%\begin{lemma}
%\noindent Let $f$ a loss function convex and twice differentiable. 
%The second order mean value  theorem of $f$ is given by 
%$$f(y) =  f(x) + f^{'}(x)(y - x) + \frac{f^{''}(t)}{2}(y - x)^2,$$
%for some $t$ between $x$ and $y$.
% If $f^{''}(.)$ is bounded by  $L>0$, then 
%$$f(y) \leq f(x) + f^{'}(x)(y - x) + \frac{L}{2}(y - x)^2.$$  
%\end{lemma}
As the loss function, $B_{\delta}(.)$, is convex, twice differentiable and has a second derivative 
bounded by $L = 3/4\delta$, one can rely on the mean value theorem and approximate 
the objective function in (\ref{3}) by a (surrogate) quadratic function, $Q_{\delta}(.)$, and then use the MM principle to solve the new problem for each $\beta_j$ holding $\tilde{\beta}_0$ and $\beta_k=\tilde{\beta}_j$, for $k\ne j$, fixed at the current iteration
\begin{equation}
\label{QFunct}
\underset{\beta_j)}{\min} Q_{\delta}(\beta_j|\tilde{\beta_0},\tilde{\beta}_j),
\end{equation}
where 
\begin{equation}
\label{obj}
Q_{\delta}(\beta_j|\beta_0,\tilde{\beta}_j) = \frac{\sum_{i=1}^{n}B_{\delta}(r_i)}{n} + 
\frac{\sum_{i=1}^{n}B_{\delta}^{'}(r_i)y_ix_{ij}}{n}(\beta_j - \tilde{\beta}_j) + \frac{L}{2}(\beta_j - \tilde{\beta}_j)^2 + P_{\lambda_1,\lambda_2}(\beta_j).
\end{equation}
\noindent  The next proposition gives the explicit solution
of (\ref{QFunct}) for the different forms of $P_{\lambda_1}(\beta_j)$.
\begin{proposition} 
Let $Q_{\delta}(\beta_j|\tilde{\beta}_0,\tilde{\beta}_j)$ be the surrogate loss function defined in (\ref{obj}). Let $P_{\lambda_1}(.)$ be $L_1$-norm or adaptive Lasso penalty. The closed form solution of the minimizer of (\ref{QFunct}) is given as follows	 
%\end{proposition}
\begin{equation}
\label{Elastic}
\hat{\beta}_j^{EN} = \frac{S(Z_j,\lambda_1)}{\omega},
\end{equation} 

\begin{equation}
\label{ada}
\hat{\beta}_j^{AEN} = \frac{S(Z_j,\lambda_1 w_j)}{\omega},
\end{equation} 
%\begin{equation}
%\label{SCADNET}
%\hat{\beta}_j^{Scadnet} = \frac{S(Z_j,\lambda_1)}{\omega} \textbf{I}_{\{|Z_j|\leq \lambda_1(\omega+1)\}} + \frac{(a-1)S(Z_j,\lambda_1\frac{a}{a -1})}{a \omega - \omega -1} \textbf{I}_{\{ \lambda_1 (\omega + 1)<|Z_j|< \lambda_1 \omega a\}} + \frac{Z_j}{\omega}\textbf{I}_{\{|Z_j| \geq a \lambda_1 \omega\}}
%\end{equation}
%\begin{equation}
%\label{MCPNET}
%\hat{\beta}_j^{Mcpnet} = \frac{a S(Z_j,\lambda_1)}{\omega a - 1} \textbf{I}_{\{|Z_j|\leq \lambda_1 \omega a\}}  + \frac{Z_j}{\omega}\textbf{I}_{\{|Z_j| > a \lambda_1 \omega\}}
%\end{equation}
where $S(a,b) = (|a| - b)_{+} sign(a), Z_j = -\frac{\sum_{i=1}^{n}B_{\delta}^{'}(r_i)y_ix_{ij}}{n} + L\tilde{\beta_j}$, and $\omega = \lambda_2 + L$ and $L=3/4\delta$.
\end{proposition}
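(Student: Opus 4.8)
The plan is to treat (\ref{QFunct}) as a purely univariate minimization in $\beta_j$ and to reduce it, by completing the square, to the elementary soft-thresholding problem whose solution is the proximal operator of the $\ell_1$ norm. First I would discard from $Q_\delta$ every term that does not depend on $\beta_j$ (the constant $n^{-1}\sum_i B_\delta(r_i)$ and the pieces of $\frac{L}{2}(\beta_j-\tilde\beta_j)^2$ that survive after isolating the $\beta_j$ dependence), so that, noting the coordinate-wise part of $P_{\lambda_1,\lambda_2}$ is $\lambda_1|\beta_j|+\frac{\lambda_2}{2}\beta_j^2$ in the EN case, the minimization of (\ref{obj}) is equivalent to minimizing
\begin{equation*}
\frac{1}{n}\sum_{i=1}^n B_\delta'(r_i)\,y_i x_{ij}\,(\beta_j-\tilde\beta_j) + \frac{L}{2}(\beta_j-\tilde\beta_j)^2 + \lambda_1|\beta_j| + \frac{\lambda_2}{2}\beta_j^2 .
\end{equation*}

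Next I would collect the two quadratic contributions, namely the majorizing term $\frac{L}{2}(\beta_j-\tilde\beta_j)^2$ and the ridge term $\frac{\lambda_2}{2}\beta_j^2$, into a single quadratic in $\beta_j$ with leading coefficient $\omega=L+\lambda_2$. Writing $d_j=n^{-1}\sum_i B_\delta'(r_i)y_i x_{ij}$ for the gradient term, the coefficient of $\beta_j$ works out to $d_j-L\tilde\beta_j=-Z_j$, which is exactly the quantity $Z_j=-d_j+L\tilde\beta_j$ introduced in the statement. Up to an additive constant the objective thus equals $\frac{\omega}{2}\beta_j^2-Z_j\beta_j+\lambda_1|\beta_j|$, and completing the square rewrites this as $\frac{\omega}{2}\bigl(\beta_j-Z_j/\omega\bigr)^2+\lambda_1|\beta_j|$ plus a constant.

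The final step is to minimize $\frac{\omega}{2}\bigl(\beta_j-Z_j/\omega\bigr)^2+\lambda_1|\beta_j|$ over $\beta_j\in\mathbb{R}$. Since $\omega>0$ this is strictly convex, so the minimizer is unique and characterized by $0$ lying in the subdifferential: $\omega\beta_j-Z_j+\lambda_1\,\partial|\beta_j|\ni 0$. Resolving the three cases $\beta_j>0$, $\beta_j<0$, and $\beta_j=0$ yields $\hat\beta_j^{EN}=S(Z_j,\lambda_1)/\omega$ with $S(a,b)=\mathrm{sign}(a)(|a|-b)_+$, which is (\ref{Elastic}). For the adaptive Lasso penalty the only change is that the coordinate penalty is $\lambda_1 w_j|\beta_j|$, so the identical argument with $\lambda_1$ replaced by $\lambda_1 w_j$ gives $\hat\beta_j^{AEN}=S(Z_j,\lambda_1 w_j)/\omega$, which is (\ref{ada}). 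I do not expect a genuine obstacle here; the only care required is bookkeeping, specifically merging the ridge term correctly into $\omega$ and tracking signs so that the linear coefficient is indeed $-Z_j$.
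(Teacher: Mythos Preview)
Your proposal is correct and essentially coincides with the paper's own argument: the paper writes out $Q_\delta$ with the AEN penalty, differentiates, sets the subgradient to zero to obtain $\omega\beta_j = Z_j - \lambda_1 w_j\,\mathrm{sign}(\beta_j)$, and reads off the soft-thresholding formula (with EN as the special case $w_j=1$). Your completing-the-square rewrite and the three-case subdifferential resolution are just a slightly more explicit packaging of the same first-order optimality condition.
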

%\noindent If $P_{\lambda_1,\lambda_2}(\beta_j)$ is the SCAD penalty, the solution is given by

%\begin{equation}
%\hat{\beta}_j = \frac{S(Z_j,\lambda_1)}{w} \textbf{I}_{\{|Z_j|\leq \lambda_1(w+1)\}} + \frac{(\gamma-1)S(Z_j,\lambda_1\frac{\gamma}{\gamma -1})}{\gamma w - w -1} \textbf{I}_{\{ \lambda_1 (w + 1)<|Z_j|< \lambda_1 w \gamma\}} + \frac{Z_j}{w}\textbf{I}_{\{|Z_j| \geq \gamma \lambda_1 w\}}
%\end{equation}

%\noindent If $P_{\lambda_1,\lambda_2}(\beta_j)$ is the MCP penalty, the solution is given by

%\begin{equation}
%\hat{\beta}_j = \frac{\gamma S(Z_j,\lambda_1)}{w\gamma - 1} \textbf{I}_{\{|Z_j|\leq \lambda_1 w \gamma\}}  + \frac{Z_j}{w}\textbf{I}_{\{|Z_j| > \gamma \lambda_1 w\}}
%\end{equation}
\noindent The proof is outlined in Appendix \textbf{A}. 

\noindent Algorithm \ref{al1} (BSVM-GCD), given next, gives details of steps for solving the objective function in (\ref{obj}) using both coordinate descent and MM principle. 
%\noindent However, in order to develop asymptotic properties of those estimators we use 
%the Bernstein  SVM  with Adapnet.
\begin{algorithm}[H]
	%\SetAlgoLined
	%\KwResult{The GSQR algorithm for GLasso penalty}
	\begin{enumerate}
		\item Initialize $\tilde{\beta}_0$ and $\tilde{\bm{\beta}}$;
		\item Iterate the following updates until convergence:
		\begin{enumerate}
			 
				\item For $j=1,\ldots,p$, update $\tilde{\beta}_j$ \\
				$*$ compute  $r_i = y_i(\tilde{\beta_0} + \textbf{x}^{\top}_{i.}\tilde{\bm{\beta}});$\\
				$*$ set 
				$$
				\tilde{\beta}_{j}^{\mathrm{new}} \longleftarrow \hat{\beta}_j,
				$$
				with $\hat{\beta}_j$ is given by (\ref{Elastic}) for the Elastic Net or (\ref{ada}) for the Adaptive Net;
				
				\item Update $\tilde{\beta}_0$ \\
				$*$ compute  $r_i = y_i(\tilde{\beta_0} + \textbf{x}^{\top}_{i.}\tilde{\bm{\beta}}),$\\
				$*$ set 
				$$
				\tilde{\beta}_0^{\mathrm{new}} \longleftarrow \tilde{\beta_0} - L^{-1}\frac{\sum_{i=1}^{n} B^{'}_{\delta}(r_i)y_i}{n}.
				$$            
			               
		\end{enumerate}
	\end{enumerate}
	\caption{The BSVM-GCD algorithm to solve the BernSVM loss function with elastic net and adaptive net penalties.}
 \label{al1}
\end{algorithm}
\noindent The upper bound of the second derivative of the BernSVM loss function can be reached in 
$t = 1$. This means that $F(t|\beta_0,\tilde{\beta}_j) \leq Q(t|\beta_0,\tilde{\beta}_j)$. To reach a strict inequality, one can relax the upper bound $L$ and use $\tilde{L} = (1 + \epsilon)L$, where $\epsilon = 1e^{-6}$. Hence, Algorithm \ref{al1} is implemented using $\tilde{L}$ in the place of $L$. This leads to the strict descent property of the BSVM-GCD algorithm, which is given in the next proposition.
%\begin{lemma}
\begin{proposition}
The strict descent property of the BSVM-GCD approach is obtained using the upper bound $\tilde{L}$ and given by 
$$F(\beta_j|\beta_0,\tilde{\beta}_j) = Q(\beta_j|\beta_0,\tilde{\beta}_j), \quad \text{if} \quad \beta_j = \tilde{\beta}_j,$$
$$F(\beta_j|\beta_0,\tilde{\beta}_j) < Q(\beta_j|\beta_0,\tilde{\beta}_j), \quad \text{if} \quad \beta_j \neq \tilde{\beta}_j.$$
\end{proposition}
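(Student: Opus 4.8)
The plan is to obtain the majorization relation as an exact identity coming from a second-order Taylor expansion of the loss, and then to read off both the equality case and the strict inequality from that identity. Since $B_\delta$ is of class $C^2$ (by the construction in Theorem~1 and the derivative formulas in the preceding Proposition), I would expand, for each $i$, the term $B_\delta\{r_i + y_i x_{ij}(\beta_j - \tilde\beta_j)\}$ appearing in $F$ in (\ref{3}) to second order about $r_i$ with a Lagrange remainder. Writing the increment as $z_i = y_i x_{ij}(\beta_j - \tilde\beta_j)$, there is a point $\xi_i$ between $r_i$ and $r_i + z_i$ such that
\begin{equation*}
B_\delta(r_i + z_i) = B_\delta(r_i) + B_\delta'(r_i)\,y_i x_{ij}(\beta_j - \tilde\beta_j) + \tfrac12 B_\delta''(\xi_i)\,y_i^2 x_{ij}^2 (\beta_j - \tilde\beta_j)^2 .
\end{equation*}

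Averaging this over $i$, the identical penalty terms in $F$ and $Q$ cancel, while the constant and linear contributions reproduce exactly the first two terms of the surrogate $Q$ in (\ref{obj}). Using $y_i^2 = 1$ and the standardization $\frac1n\sum_{i=1}^n x_{ij}^2 = 1$ (so that $\frac{\tilde L}{2}(\beta_j - \tilde\beta_j)^2 = \frac{\tilde L}{2n}\sum_i x_{ij}^2 (\beta_j - \tilde\beta_j)^2$), the difference collapses to
\begin{equation*}
Q(\beta_j\mid\beta_0,\tilde\beta_j) - F(\beta_j\mid\beta_0,\tilde\beta_j) = \frac{(\beta_j - \tilde\beta_j)^2}{2n}\sum_{i=1}^n \bigl(\tilde L - B_\delta''(\xi_i)\bigr) x_{ij}^2 ,
\end{equation*}
where $\tilde L = (1+\epsilon)L$ is the relaxed curvature constant used in Algorithm~\ref{al1}. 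When $\beta_j = \tilde\beta_j$ the factor $(\beta_j - \tilde\beta_j)^2$ is zero and the whole expression vanishes, giving the claimed equality.

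The only real point requiring care is the \emph{strict} inequality for $\beta_j \ne \tilde\beta_j$. Had one used the exact bound $L$ rather than $\tilde L$, the difference would be merely nonnegative: because $B_\delta''$ attains its maximum value $L = 3/(4\delta)$ at $t=1$ (from the formula $g_\delta''(t) = \tfrac{3}{4\delta^3}[\delta^2 - (1-t)^2]$), one could in principle have $B_\delta''(\xi_i) = L$ for every $i$ and thus equality even with $\beta_j \ne \tilde\beta_j$. Replacing $L$ by $\tilde L$ removes exactly this degeneracy: by the second-derivative bound $B_\delta''(\xi_i) \le L$ one has $\tilde L - B_\delta''(\xi_i) \ge \tilde L - L = \epsilon L > 0$ for every $i$, and the standardization forces $\sum_{i=1}^n x_{ij}^2 = n > 0$, so that
\begin{equation*}
Q(\beta_j\mid\beta_0,\tilde\beta_j) - F(\beta_j\mid\beta_0,\tilde\beta_j) \ge \frac{\epsilon L}{2}(\beta_j - \tilde\beta_j)^2 > 0 \quad\text{whenever}\quad \beta_j \ne \tilde\beta_j .
\end{equation*}
This establishes the strict descent property, and beyond the elementary control of the Taylor remainder no further estimates are needed.
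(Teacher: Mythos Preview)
Your proof is correct and follows essentially the same route as the paper: a second-order Taylor expansion of each $B_\delta$ term, combined with $y_i^2=1$, the standardization $\sum_i x_{ij}^2=n$, and the bound $B_\delta''\le L$. The only difference is one of precision: you keep the Lagrange remainder as an exact identity and explicitly isolate the gap $\tilde L - B_\delta''(\xi_i)\ge \epsilon L>0$, which makes transparent why the relaxed constant $\tilde L$ is needed to turn the nonstrict majorization into a strict one; the paper's proof writes the inequality with $L$ and then asserts strictness without spelling out this step.
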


\begin{proof}
Using Taylor expansion and the fact that the BernSVM loss function is twice differentiable, we have 
\begin{eqnarray*}
\frac{1}{n} \sum_{i=1}^{n}B_{\delta}\{r_i + y_{i}x_{ij}(\beta_j - \tilde{\beta}_j)\}
&=& \frac{1}{n} \sum_{i=1}^{n}B_{\delta}(r_i) + \frac{1}{n} \sum_{i=1}^{n}y_ix_{ij}B_{\delta}^{'}(r_i)(\beta_j - \tilde{\beta}_j) \\
&+& 
\frac{1}{2n} \sum_{i=1}^{n}y_i^2x_{ij}^2B_{\delta}^{''}(r_i)(\beta_j - \tilde{\beta}_j)^2 \\
&\leq& \frac{1}{n} \sum_{i=1}^{n}B_{\delta}(r_i) + \frac{1}{n} \sum_{i=1}^{n}y_ix_{ij}B_{\delta}^{'}(r_i)(\beta_j - \tilde{\beta}_j) + 
\frac{L}{2}(\beta_j - \tilde{\beta}_j)^2,
\end{eqnarray*}
The last inequality holds because we have $y_i^2 = 1$, $\sum_{i=1}^{n}x_{ij}^2 = n$, and $B_{\delta}^{''}(r_i)\leq L$ from Proposition 1. Hence, we have $F(\beta_j|\beta_0,\tilde{\beta}_j) = Q(\beta_j|\beta_0,\tilde{\beta}_j)$ if $\beta_j = \tilde{\beta}_j$, and we have $F(\beta_j|\beta_0,\tilde{\beta}_j) < Q(\beta_j|\beta_0,\tilde{\beta}_j)$, for $\beta_j \neq \tilde{\beta}_j$.
\end{proof}
\noindent This proposition shows that the objective function $F(.)$ decreases after each majorization  update, which means that for $\tilde{\beta}^{new}_j \neq \tilde{\beta}_j$ we have $$F(\beta^{new}_j|\beta_0,\tilde{\beta}_j) < F(\tilde{\beta}_j|\beta_0,\tilde{\beta}_j).$$
This result shows that the BSVM-GCD algorithm enjoys strict descent property.
%%%%%%%%%%%%%%%%%%%%%%%%%%%%%%%%%%%%%%%%%%%%%%%
%%% KO renoves all this paragraph
%\textcolor{purple}{We show next that the BSVM-GCD algorithm converge to the right solution if the update of the coordinate-coefficient not change. Furthermore,}
%\noindent{\color{red} the KKT conditions of the optimization problem in (\ref{obj}) is given by 
%$$\frac{\sum_{i=1}^{n}B_{\delta}^{'}(r_i)y_ix_{ij}}{n} - \tilde{L}\tilde{\beta_j} + \tilde{L}\beta_j + \lambda_1 sign(\beta_j) + \lambda_2 \beta_j  = 0,  \text{if} \quad \beta_j \neq 0,$$

%$$|\frac{\sum_{i=1}^{n}B_{\delta}^{'}(r_i)y_ix_{ij}}{n}| \leq \lambda_1,  \text{if} \quad \beta_j = 0.$$

%If $\beta^{new}_j = \tilde{\beta}_j$, the solution is stable, the KKT conditions become 
%$$\frac{\sum_{i=1}^{n}B_{\delta}^{'}(r_i)y_ix_{ij}}{n} + \lambda_1 sign(\beta_j) + \lambda_2 \beta_j  = 0, \text{if} \quad \beta_j \neq 0,$$

%$$|\frac{\sum_{i=1}^{n}B_{\delta}^{'}(r_i)y_ix_{ij}}{n}| \leq \lambda_1,  \text{if} \quad\beta_j = 0.$$

%This is exactly the KKT conditions of the original problem with the objective function $F(.)$.} \textcolor{brown}{This means that the proposed algorithm converges to the correct stationary point}({\color{green} QUEL EST L'INTERêT DE CETTE PARTIE en Rouge?})
%%%%%%%%%%%%%%%%%%%%%%%%%%%%%%%%%%%%%%%%%%%%%%%

\subsubsection{The IRLS algorithm for penalized BernSVM}
In this section, we present an IRLS-type algorithm combined with coordinate descent, termed BSVM-IRLS, to solve the penalized BernSVM in (\ref{problem}). In their approach, \citep{friedman2010regularization} used IRLS to solve the regularized logistic regression because the second derivative exist ({\tt glmnet} R package). On the other hand, \citep{yang2013efficient} used a GCD algorithm to solve the regularized HHSVM because the Huber loss does not
have the second derivative everywhere ({\tt gcdnet} R packge). Our BernSVM loss function, in contrast, is very smooth and thus one can rely on this advantage and solve (\ref{problem}) using IRLS trick combined with coordinate descent. The BSVM-IRLS approach is described below.
%\begin{equation}
%\label{1}
%\underset{(\beta_0,\bm{\beta})}{\min} L(\beta_0,\bm{\beta}),
%\end{equation}

\noindent Firstly, we consider the non-penalized BernSVM with loss function   
\begin{equation}
\label{2}
\ell{(\beta_0,\bm{\beta})} =  \frac{1}{n} \sum_{i=1}^{n}B_{\delta}(y_i(\beta_0 + \bm{x}_i^{\top}\bm{\beta})) = \frac{1}{n} \sum_{i=1}^{n}B_{\delta}(y_i\bm{X}_i^{\top}\bm{b}),
\end{equation}
where, $\bm{b} = (\beta_0,\bm{\beta}^{\top})^{\top}$ and $\bm{X} = (1,\bm{x}^{\top})^{\top}$. To solve 
$$
\frac{\partial{\ell}}{\partial \bm{b}}(\beta_0,\bm{\beta}) = \frac{1}{n} \sum_{i=1}^{n}y_i\bm{X}_iB^{'}_{\delta}(y_i\bm{X}_i^{\top}\bm{b}) = 0,
$$
one can use the Newton-Raphson algorithm because the loss function has a continuous second derivative defined by 
$$
\frac{\partial^{2}{\ell}}{\partial \bm{b}\partial \bm{b}^{\top}}(\beta_0,\bm{\beta}) = \frac{1}{n} \sum_{i=1}^{n}\bm{X}_iB^{''}_{\delta}(y_i\bm{X}_i^{\top}\bm{b})\bm{X}_i^{\top}.
$$ 
The update of the coefficients is given then by 
$$
\bm{b}^{new} = \tilde{\bm{b}} - (\frac{\partial^{2}{\ell}}{\partial \bm{b}\partial \bm{b}^{\top}}(\beta_0,\bm{\beta}))^{-1} \frac{\partial{\ell}}{\partial \bm{b}}(\beta_0,\bm{\beta})\tilde{\bm{b}}.
$$
Let $u_i = y_i.B^{'}_{\delta}(y_i\bm{X}_i^{\top}\tilde{\bm{b}})$, 
$\phi_{ii} = B^{''}_{\delta}(y_i\bm{X}_i^{\top}\tilde{\bm{b}})$ and $\tilde{\bm{\Phi}} = diag(\phi_{ii})$ for $i=1\ldots, n$.
Then, the update of the Newton-Raphson algorithm can be written as follows,
\begin{eqnarray*}
 \bm{b}^{new}& = &\tilde{\bm{b}} - (\bm{X}^{\top}\bm{\Phi}\bm{X})^{-1}\bm{X}^{\top}\bm{u} \\
 &= &(\bm{X}^{\top}\bm{\Phi}\bm{X})^{-1}\bm{X}^{\top}\bm{\Phi}[\bm{X}\tilde{\bm{b}} - \bm{\Phi}^{-1}\bm{u}]\\
 &= &(\bm{X}^{\top}\bm{\Phi}\bm{X})^{-1}\bm{X}^{\top}\bm{\Phi}\bm{z},
\end{eqnarray*}
where $\bm{z} = \bm{X}\tilde{\bm{b}} - \bm{\Phi}^{-1}\bm{u}$.
Thus, $\bm{b}^{new}$ is the update of a weighted least regression problem with response $\bm{z}$, with the loss function  
$$\ell{(\beta_0,\bm{\beta})} \approx \frac{1}{2n} \sum_{i=1}^{n} \phi_{ii}(z_i - \bm{X}_i^{\top}\bm{b})^2.$$
This problem can be solved by an IRLS algorithm in the case $p < n$. In high dimensional settings, the penalized BernSVM, defined in (\ref{problem}), can be solved by combining IRLS with a cyclic coordinate descent. Thus,
for the Adaptive Elastic Net penalty, the solution is given by 
\begin{equation}
\label{EN}
\hat{\beta}^{AEN}_j = \frac{S(\frac{\sum_{i=1}^{n}\phi_{ii}x_{ij} r_i}{n} + \frac{\sum_{i=1}^{n}\phi_{ii}x_{ij}^2 }{n}\tilde{\beta_j},w_j\lambda_1)}{\lambda_2 + \frac{\sum_{i=1}^{n}\phi_{ii}x_{ij}^2 }{n}} \quad \text{and} \quad
\hat{\beta}^{AEN}_0 = \frac{\sum_{i=1}^{n}\phi_{ii}r_i}{\sum_{i=1}^{n}\phi_{ii}},
\end{equation}
%and $$\hat{\beta}^{AEN}_0 = \frac{\sum_{i=1}^{n}\tilde{w}_{ii}r_i}{\sum_{i=1}^{n}\tilde{w}_{ii}},$$
where $r_i = z_i - \tilde{\beta}_0 - \bm{x}_i^{\top}\tilde{\bm{\beta}}$ and $(\tilde{\beta}_0,\tilde{\bm{\beta}})$ is the estimate of $({\beta}_0,\bm{\beta})$ obtained on the current iteration. Of note, to deal with the zero weights (i.e, $\phi_{ii}=0$ for some $i$), one can replace them by a constant $\xi = 0.001$ or use a modified Neweton-Raphson (NR) algorithm and replace all the weights by an upper bound of the second derivative. In our work, we adopted the modified NR algorithm and set $\phi_{ii} = L$ for $i=1,\ldots,n$, where $L = \frac{3}{4\delta}$ is the upper bound of the second derivative of the proposed loss function $B_{\delta}(.)$ given by (\ref{loss}).

%Thus, the problem to be solved using IRLS combined by coordinate descent is given by 

%$$\ell{(\beta_0,\bm{\beta})} \approx \frac{1}{2n} \sum_{i=1}^{n} L(z_i - \bm{X}_i^{\top}\bm{b}) +P_{\lambda_1,\lambda_2}(\bm{\beta}) .$$

\begin{algorithm}[H]
	%\SetAlgoLined
	%\KwResult{The GSQR algorithm for GLasso penalty}
	\begin{enumerate}
		\item Initialize $\tilde{\bm{b}} = (\tilde{\beta}_0,\tilde{\bm{\beta}}^{\top})^{\top}$ and let 
		$L =  \frac{3}{4\delta}$;
		\item Iterate the following updates until convergence:
		\begin{enumerate}
			    \item Calculate $\bm{u}$;
			    \item Calculate the working response $\bm{z} = \bm{X}\tilde{\bm{b}} - L^{-1}\bm{u}$;
			     \item Using cyclic coordinate descent to solve the penalized weighted Least square
		%	 \begin{enumerate}         
		%		\item For $j=1,\ldots,p$, update $\beta_j$ \\
		%		$*$ compute  $r_i = y_i(\tilde{\beta_0} + \textbf{x}^{\top}_{i.}\tilde{\bm{\beta}});$\\
		%		$*$ set 
		%		$$
		%		\tilde{\beta}_{j}^{\mathrm{new}} \longleftarrow \hat{\beta}_j^{*},
		%		$$
		%		with $\hat{\beta}_j^{*}$ is given by (\ref{EN});
			
		%		\item Update $\beta_0$ \\
		%		$*$ compute  $r_i = y_i(\tilde{\beta_0} + \textbf{x}^{\top}_{i.}\tilde{\bm{\beta}}),$\\
		%		$*$ set 
		%		$$
		%		\tilde{\beta}_0^{\mathrm{new}} \longleftarrow \tilde{\beta_0} - L^{-1}\frac{\sum_{i=1}^{n} S^{'}_{\delta}(r_i)y_i}{n}.
		%		$$            
		%	\end{enumerate}  
		  $$ \hat{\bm{b}} = (\hat{\beta}_0,\hat{\bm{\beta}})^{\top} = \underset{(\beta_0,\bm{\beta})}{arg\min} \ell{(\beta_0,\bm{\beta})},$$
		  where,% $\hat{\beta}_0 = \hat{\beta}^{AEN}_0$ and $\hat{\beta}_j = \hat{\beta}^{AEN}_j$
		  $(\hat{\beta}_0,\hat{\beta}_j), j=1,\ldots,p$, are given by (\ref{EN});
			   \item $\tilde{\bm{b}} = \hat{\bm{b}}$.             
		\end{enumerate}
	\end{enumerate}
	\caption{The BSVM-IRLS to solve the  BernSVM with elastic net.}
\end{algorithm}
\subsubsection{Local linear approximation algorithm for BernSVM with non-convex penalties}
We consider in this section the penalized BernSVM with non-convex penalty SCAD or MCP and we propose to solve the resulting optimization problem using the local linear approximation (LLA) of \citep{zou2008one}.

\noindent Without loss of generality, we assume in this section that $\lambda_2 = 0$. The LLA approach is an appropriate approximation of SCAD or MCP penalty. It is based on the first order Taylor expansion of the MCP or SCAD penalty functions around $|\tilde{\beta}_j|$, and adapts both penalties to be solved as an iterative adaptive Lasso penalty; the weights are updated/estimated at each iteration. In our context, this means that the penalized BernSVM with SCAD or MCP penalty can be solved iteratively as follows 
\begin{equation}
\label{weighted1}
(\tilde{\beta}_0,\tilde{\bm{\beta}})^{new} = \arg\min_{(\beta_0,\bm{\beta})} \frac{1}{n} \sum_{i=1}^{n}B_{\delta}(y_i(\beta_0+\bm{x}_i^{\top}\bm{\beta})) + \lambda_1 ||\hat{\bm{W}}\bm{\beta}||_1,
\end{equation}
where $\hat{\bm{W}}= diag\{\hat{w}_j\}$, with $\hat{w}_j = P^{'}_{\lambda_1}(\tilde{\beta_j})/\lambda_1, j=1,\ldots,p,$ are the current weights, and $P^{'}_{\lambda_1}(.)$ is the first derivative of the SCAD or MCP penalty given respectively by 
$$
P^{'}_{\lambda_1}(t) = \lambda_1 \mathbb{1}_{\{t \leq \lambda_1\}} + \frac{(a\lambda_1 - t)_+}{a - 1}\mathbb{1}_{\{t > \lambda_1\}},
$$
and 
$$
P^{'}_{\lambda_1}(t) = (\lambda_1 - \frac{t}{a})_+.
$$
To sum up, penalized BernSVM with LLA penalty is an iterative algorithm, whcih solves at each iteration BSVM-IRLS (or BSVM-GCD). The steps of the LLA algorithm for penalized BernSVM are described in Algorithm \ref{al3} below.

\begin{algorithm}[H]
	%\SetAlgoLined
	%\KwResult{The GSQR algorithm for GLasso penalty}
	\begin{enumerate}
		\item Initialize $\tilde{\beta}_0$ and $\tilde{\bm{\beta}}$ and compute $\hat{w}_j^{0} = P^{'}_{\lambda_1}(|\tilde{\beta_j}|)/\lambda_1$ for $j=1,\ldots,p$;
		\item Iterate the following updates until convergence:
		\begin{enumerate}
			
		%	\item For $j=1,\ldots,p$, update $\beta_j$ and $w_j$ \\
		%	$*$ compute  $r_i = y_i(\tilde{\beta_0} + \textbf{x}^{\top}_{i.}\tilde{\bm{\beta}});$\\
		%	$*$ set 
		%	$$
		%	\tilde{\beta}_{j}^{\mathrm{new}} \longleftarrow \hat{\beta}_j^{*},
		%	$$
		%	with $\hat{\beta}_j^{*}$ is given by (\ref{ada});
		 \item 	For $j=0,1,\ldots,p$, update $\beta_0$ and $\beta_j$ by solving the problem in (\ref{weighted1}) using either BSVM-IRLS or BSVM-GCD algorithms;
		 \item Set $\hat{w}_j^{\mathrm{new}} = P^{'}_{\lambda_1}(\tilde{\beta}_{j}^{\mathrm{new}})/\lambda_1$.
		%	\item Update $\beta_0$ \\
		%	$*$ compute  $r_i = y_i(\tilde{\beta_0} + \textbf{x}^{\top}_{i.}\tilde{\bm{\beta}}),$\\
		%	$*$ set 
		%	$$
		%	\tilde{\beta}_0^{\mathrm{new}} \longleftarrow \tilde{\beta_0} - \frac{4\delta}{3}\frac{\sum_{i=1}^{n} S^{'}_{\delta}(r_i)y_i}{n}.
	%		$$            
			
		\end{enumerate}
	\end{enumerate}
	\caption{The local linear approximation algorithm to solve penalized BernSVM with SCAD or MCP penalty.}
 \label{al3}
\end{algorithm}
\section{Theoretical guaranties for penalized BernSVM estimator}
\label{s3}
In the next section, we develop an upper bound of the $\ell_2$ norm of the estimation error $||\hat{\bm{\beta}} - \bm{\beta}^{\star}||_2$ for penalized BernSVM with  weighted lasso ($\lambda_2 = 0$) and SCAD or MCP penalties. Here, 
$\hat{\bm{\beta}}$ is the minimizer of the corresponding penalized BernSVM and 
$\bm{\beta}^{\star}$ is the minimizer of the population version of (\ref{problem}) without the penalty term; i.e., $\bm{\beta}^{\star}$ is the minimizer of $\mathbb{E}[B_{\delta}(y\bm{x}^\top\bm{\beta})]$. For seek of simplicity, the intercept is omitted for the theoretical development.
 
\subsection{An upper bound of the estimation error for weighted lasso penalty}
\noindent In this section, we derive an upper bound of the estimation error $||\hat{\bm{\beta}} - \bm{\beta}^{\star}||_2$ for penalized BernSVM with weighted lasso penalty using 
the properties of the BernSVM loss function. 

\noindent The weighted penalized BernSVM optimization problem is defined as follows
\begin{equation}
\label{weighted}
\arg\min_{\bm{\beta}\in \mathbb{R}^p}L(\bm{\beta}): = \frac{1}{n} \sum_{i=1}^{n}B_{\delta}(y_i\bm{x}_i^{\top}\bm{\beta}) + \lambda_1 ||\bm{W}\bm{\beta}||_1,
\end{equation}
where $ \bm{W} = diag(\bm{w})$ is a diagonal matrix of known weights, with $w_j \geq 0$, for $j=1\ldots p$.
% $$
% \hat{\bm{\beta}} = argmin_{\bm{\beta}\in \mathbb{R}^{p}}L(\bm{\beta}).
% $$ 
The (unweighted) Lasso is a special case of (\ref{weighted}), with $w_j = 1, j\in \{1,\ldots,p\}$. \\
Let $\hat{\bm{\beta}}$ be a minimizer of (\ref{weighted}), and assume that the population-level minimizer $\bm{\beta}^{\star}$ defined above is sparse and unique. Define $S\subset \{1,2,...,p\}$ the set of non-zero elements of $\bm{\beta}^{\star}$, with $|S| = s$, and $S^{c}$ is the complement of $S$.\\
In order to obtain an upper bound of the estimation error, the BernSVM loss function needs to satisfy the strongly convex assumption. This means that the minimum of the eigenvalues of its corresponding Hessian matrix, defined as 
$\bm{H} = \bm{X}^{\top}\bm{D}\bm{X}/n$, is lower bounded 
by a constant $\kappa > 0$; the matrix $\bm{D} = diag\{B_{\delta}^{"}(y_i\bm{x}_i^{\top}\bm{\beta}^{\star})\}$, $i=1,\ldots,n$.
In high-dimensional settings, with $p > n$, $\bm{H}$ has rank lower or equal to $n$. Therefore, we can not 
reach the strong convexity assumption of the Hessian matrix. To overcome this problem, one can look for a restricted strong convexity assumption to be verified on a restricted set $\mathcal{C}\subset \mathbb{R}^p$ \citep{negahban2009unified}, which is defined next.

%\noindent Let $\bm{W} = diag(w_j)$  with $w_j \geq 0$ are unknown weights.

%\noindent Like in Huang and Zhang (2012), we define 
%$$\Omega_0 = \{\hat{w}_j \leq w_j \quad \forall j \in \bm{S}\} \cup \{ w_j \leq \hat{w}_j\quad \forall j \in \bm{S}^{c}\}.$$
\begin{definition}[Restricted Strong Convexity (RSC)]
A design matrix $\bm{X}$ satisfies the RSC condition on  $\mathcal{C}$ with $\kappa > 0$ if
$$\frac{||\bm{X}\bm{h}||_2^2}{n} \geq \kappa ||\bm{h}||_2^2, \quad \forall \bm{h} \in \mathcal{C}.$$
\end{definition}

%Firstly, we give a result about the sub-Gaussian random variables
%\begin{lemma}
%$z_1,z_2,...,z_p$ are $p$ real random variables with $z_j \sim subG(\sigma^2)$ not necessarily independent, then for all $t>0$ $$\mathbb{P}(max_{j=1,...,p}|z_j|>t)\leq 2p\exp(-t^2/2\sigma^2).$$
%\end{lemma}
%We have the loss function $B_{\delta}(.)$ is Lipschitz with constant $L = \frac{3}{4\delta}$, then 
%our assumption 1 is the same as assumption  3 in Dedieu (2019).
\noindent We also state the following assumptions:
\begin{itemize}
\item[\textbf{A1:}]
%$$\sum_{i=1}^{n} - B_{\delta}^{'}(y_i\bm{x}_{i}^{\top}\bm{\beta}^{\star})y_ix_{ij} \sim subG(nL^2M^2) \quad \forall j.$$
Let $$\mathcal{C}(S) = \{\bm{h}\in \mathbb{R}^{p}:||\bm{W}_{\bm{S}^{c}}\bm{h}_{\bm{S}^{c}}||_1 \leq  \gamma ||\bm{h}_S||_1 \},$$ where $\gamma > 0$ and $\bm{h}_{\mathcal{A}}=(h_j:j\in \mathcal{A})$, with $\mathcal{A}\subset \{1,\ldots,p\}$, is a sub-vector of $\bm{h}$.
%$$\forall i \quad  \in \{1,...,N\}\quad  ||\bm{x}_i||_{\infty} \leq 1.$$\
\noindent We assume that the design matrix, for all $\bm{h} \in \mathcal{C}$, satisfies the condition 
$$\frac{||\bm{X}\bm{h}||_2^2}{n} \geq \kappa_1 ||\bm{h}||_2^2 - \kappa_2 \frac{\log(p)}{n}||\bm{h}||^2_1,$$ with probability $ 1- \exp(-c_0n)$, for some $c_0>0$, 
where $\bm{x}_i, i= 1,\ldots, n,$ are i.i.d  Gaussian or Sub-Gaussian.
\end{itemize}

\begin{itemize}
\item[\textbf{A2:}]
There exist a ball, $\bm{B}(\bm{x}_0,r_0)$, centered at a point $\bm{x}_0$, with radius $r_0$, such as 
$B^{''}_{\delta}\circ f(\bm{x}_0)  > 0$, with $f(\bm{x}_0) = y \bm{x}^{\top}_0 \bm{\beta}^{\star}$, and for any $
\bm{x} \in \bm{B}(\bm{x}_0,r_0)$, we have $B^{''}_{\delta}\circ f(\bm{x}) > \kappa_3$, for some constant $ \kappa_3> 0$.
%\noindent For some $i$, say $i_0$, we assume that the true parameter $\bm{\beta}^{\star}$ such as $t^{\star}_{i_0} = y_{i_0}\bm{x}_{i_0}^{\top}\bm{\beta}^{\star}$  satisfies $S^{"}_{\delta}(t^{\star}_{i_0}) > 0$.
% \in \mathcal{B}(R) = \{\bm{\beta}: ||\bm{\beta}||_1 \leq R\}$. 

% \noindent The continuity of $S^{"}_{\delta}(.)$ implies the existence of a Ball $\bm{B}(t^{\star}_{i_0},r^{\star})$ centred at $t^{\star}_{i_0}$ with radius $r^{\star}> 0$ such as 
% $\forall t \in \bm{B}(t^{\star},r^{\star})$ we have $S^{"}_{\delta}(t) > A > 0$.
\end{itemize}

\begin{itemize}
\item[\textbf{A3:}] 
 The columns of the design matrix $\bm{X}$ are bounded:
 $$\forall j\in\{1,...,p\} \quad ||\bm{x}_{j}||_2 \leq M.$$ 
\item[ \textbf{A4:}]
 The densities of $\bm{X}$ given $\bm{y}=1$ and $\bm{y}=-1$ are continuous and have at least the first moment.
\end{itemize}
\noindent \citep{raskutti2010restricted}  show that Assumption \textbf{A1} is satisfied with high probability for 
Gaussian random matrices.  \citep{rudelson2012reconstruction}  extend this result for Sub-Gaussian random matrices.
The existence of $\bm{x}_0$ in Assumption \textbf{A2} is given by the behavior of the second derivative $B^{''}_{\delta}(.)$ (see Proposition 1). In fact we have $B^{''}_{\delta}(1) = \frac{3}{4\delta}> 0$, then if we let $f(\bm{x}_0) = 1$,
we can choose $\bm{x}_0 \in f^{-1}(\{1\}) \subset \mathbb{R}^{p}$. Hence by the continuity of $B^{''}_{\delta}\circ f(.)$ we can choose $\kappa_3 = \frac{3}{4\delta}-\epsilon>0$ for some small $\epsilon>0$.
Note that this assumption is similar to Assumptions \textbf{A2} and \textbf{A4} in \citep{koo2008bahadur}, to ensure the positive-definiteness of the Hessian around $\bm{\beta}^{\star}$.
Assumption \textbf{A3} is needed to ensure that the first derivative of the loss function  is a bounded  random variable around $\bm{\beta}^{\star}$. Moreover, Assumption \textbf{A4} is needed to ensure the existence 
 of the first derivative of the population version of the loss function. 
 Assumption \textbf{A4} is similar to \textbf{A1} in \citep{koo2008bahadur}. However, in Assumption \textbf{A2} of \citep{koo2008bahadur}, one needs at least two moments to ensure the continuity  of the Hessian around $\bm{\beta}^{\star}$. 
 
\noindent The next lemma establishes that the hessian matrix $\bm{H}$ satisfies the RSC condition with high probability.
\begin{lemma}
\label{lem1}
%If $\delta = a + R$ for some $1 < a < R$ and $A = \frac{3}{4\delta^3}(a^2 - 1)$. 
Under assumptions $\bm{A} 1$ and $\bm{A} 2$, if $$n > \frac{2\kappa_2 s \log(p)(1 + \gamma ||\bm{W}_{S^{c}}^{-1}||_{\infty} )^2}{\kappa_1},$$ 
then the Hessian matrix $\bm{H}$ satisfies the RSC condition with 
$\kappa = \frac{\kappa_3\kappa_1}{2}$ on $\mathcal{C}$ 
%\cap \bm{B}(\bm{x}_0,r_0)$ 
with probability $ 1- \exp(-c_0n)$, for some universal constant $c_0>0$.
\end{lemma}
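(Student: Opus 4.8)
The plan is to reduce the claim about the Hessian $\bm{H} = \bm{X}^\top\bm{D}\bm{X}/n$ to the design-matrix RSC inequality of Assumption \textbf{A1}, exploiting that the diagonal weights $D_{ii} = B_\delta''(y_i\bm{x}_i^\top\bm{\beta}^\star)$ are nonnegative by Proposition 1 and bounded below by $\kappa_3$ through Assumption \textbf{A2}. Writing the quadratic form as $\bm{h}^\top\bm{H}\bm{h} = \frac{1}{n}\sum_{i=1}^n B_\delta''(y_i\bm{x}_i^\top\bm{\beta}^\star)(\bm{x}_i^\top\bm{h})^2$, the goal is to factor out $\kappa_3$ and obtain $\bm{h}^\top\bm{H}\bm{h}\geq \kappa_3\,\|\bm{X}\bm{h}\|_2^2/n$, and then to show the right-hand side dominates $\frac{\kappa_1}{2}\|\bm{h}\|_2^2$ on the cone $\mathcal{C}(S)$. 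Multiplying the two bounds produces exactly $\kappa=\kappa_3\kappa_1/2$.

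First I would establish the $\ell_1$--$\ell_2$ comparison on the cone. On $\mathcal{C}(S)$ one has $\|\bm{h}_{S^c}\|_1 \leq \|\bm{W}_{S^c}^{-1}\|_\infty\,\|\bm{W}_{S^c}\bm{h}_{S^c}\|_1 \leq \gamma\|\bm{W}_{S^c}^{-1}\|_\infty\,\|\bm{h}_S\|_1$, so that $\|\bm{h}\|_1 = \|\bm{h}_S\|_1 + \|\bm{h}_{S^c}\|_1 \leq (1+\gamma\|\bm{W}_{S^c}^{-1}\|_\infty)\|\bm{h}_S\|_1$. Since $\|\bm{h}_S\|_1\leq\sqrt{s}\,\|\bm{h}\|_2$ by Cauchy--Schwarz over the support of size $s$, this yields $\|\bm{h}\|_1^2 \leq s(1+\gamma\|\bm{W}_{S^c}^{-1}\|_\infty)^2\|\bm{h}\|_2^2$.

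Next I would substitute this into Assumption \textbf{A1} to absorb the $\|\bm{h}\|_1^2$ term:
\[
\frac{\|\bm{X}\bm{h}\|_2^2}{n} \geq \Big(\kappa_1 - \kappa_2\frac{s\log(p)}{n}(1+\gamma\|\bm{W}_{S^c}^{-1}\|_\infty)^2\Big)\|\bm{h}\|_2^2.
\]
The sample-size hypothesis $n > 2\kappa_2 s\log(p)(1+\gamma\|\bm{W}_{S^c}^{-1}\|_\infty)^2/\kappa_1$ is precisely what forces the parenthesized factor above $\kappa_1/2$, giving $\|\bm{X}\bm{h}\|_2^2/n \geq \frac{\kappa_1}{2}\|\bm{h}\|_2^2$ on $\mathcal{C}(S)$ with the probability $1-\exp(-c_0 n)$ inherited from \textbf{A1}. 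Combining this with the weight lower bound closes the argument.

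The main obstacle is the step that factors $\kappa_3$ out of the weighted sum. Assumption \textbf{A2} only guarantees $B_\delta''\circ f > \kappa_3$ on a ball $\bm{B}(\bm{x}_0, r_0)$, not uniformly over all design points, whereas the clean inequality $\bm{h}^\top\bm{H}\bm{h}\geq\kappa_3\|\bm{X}\bm{h}\|_2^2/n$ demands $D_{ii}\geq\kappa_3$ for every $i$. I would bridge this gap by restricting the sum to the index set $I=\{i:\bm{x}_i\in\bm{B}(\bm{x}_0,r_0)\}$, where the lower bound is legitimate, and then arguing, using the continuity and first-moment conditions of \textbf{A4} together with a concentration argument, that a positive fraction of observations fall in the ball, so that the restricted Gram matrix $\bm{X}_I^\top\bm{X}_I/n$ still inherits an RSC bound of the form in \textbf{A1}. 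This localization-plus-concentration step, rather than the algebra of the cone condition, is the delicate part; the remaining manipulations are routine.
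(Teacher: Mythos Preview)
Your approach is exactly the paper's: factor $\kappa_3$ out of the Hessian to reduce to $\|\bm{X}\bm{h}\|_2^2/n$, bound $\|\bm{h}\|_1$ on the cone via $\|\bm{h}_{S^c}\|_1\le\|\bm{W}_{S^c}^{-1}\|_\infty\|\bm{W}_{S^c}\bm{h}_{S^c}\|_1\le\gamma\|\bm{W}_{S^c}^{-1}\|_\infty\|\bm{h}_S\|_1$ and $\|\bm{h}_S\|_1\le\sqrt{s}\,\|\bm{h}\|_2$, substitute into \textbf{A1}, and use the sample-size hypothesis to force the constant above $\kappa_1/2$.

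The one point of divergence is the step you flag as the ``main obstacle.'' The paper does \emph{not} carry out any localization-plus-concentration argument; it simply writes $\bm{D}\succeq\kappa_3\bm{I}$ under \textbf{A2} and proceeds, i.e.\ it treats the lower bound as if it held at every design point. So you have correctly identified a subtlety that the paper's own proof glosses over. Your proposed fix (restrict to indices with $\bm{x}_i\in\bm{B}(\bm{x}_0,r_0)$ and recover RSC for the sub-design via concentration) is a reasonable way to make the argument rigorous, but it is additional work beyond what the paper presents, and it would also require verifying that the constants in \textbf{A1} survive the restriction. For the purpose of matching the paper, you can drop that last paragraph and simply invoke $\bm{D}\succeq\kappa_3\bm{I}$ as the authors do.
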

\noindent The proof of Lemma \ref{lem1} is given in Appendix {\bf B}.

%\noindent Let $\bm{A}_{\delta} = \{i: 1-\delta < y_i(\beta^{\star}_0 + \bm{x}_i^{\top}\bm{\beta}^{\star}) < 1 + \delta\}$.

%\noindent For $i \notin \bm{A}_{\delta}$ we have $B_{\delta}^{"}(y_i(\beta^{\star}_0 + \bm{x}_i^{\top}\bm{\beta}^{\star})) = 0$.

%\noindent Then, $$\bm{H} = \frac{\bm{X}_{\bm{A}_{\delta}}^{\top}\bm{D}_{\bm{A}_{\delta}}\bm{X}_{\bm{A}_{\delta}}}{n},$$
%where, $\bm{X}_{\bm{A}_{\delta}}$ is the sub-matrix of observations in $\bm{A}_{\delta}$ and $\bm{D}_{\bm{A}_{\delta}}$ is the diagonal matrix of the second derivative all positive.

%\noindent Let $s_{\min} = \min \{B_{\delta}^{"}(y_i(\beta^{\star}_0 + \bm{x}_i^{\top}\bm{\beta}^{\star})): i \in \bm{A}_{\delta}  \} > 0$.
%\noindent Thus, $$H \succeq \bm{H}_{\bm{A}_{\delta}} = \frac{s_{\min}{\bm{X}_{\bm{A}_{\delta}}^{\top}\bm{X}_{\bm{A}_{\delta}}} }{n}.$$
%However, 
%\noindent We suppose that for all non-zero $\bm{h} \in \mathcal{C}$ we have 
%$$\frac{\bm{h}^{\top}\bm{H}_{\bm{A}_{\delta}}\bm{h}}{||\bm{h}||_2^2} \geq \gamma > 0.$$

%\noindent We need to give more information about $\mathcal{C}$ set.

%\noindent Without loss of generality, we assume that $\lambda_2 = 0$.
\noindent The following theorem establishes an upper bound of the penalized BernSVM with Adaptive Lasso.

 \begin{theorem}
 \label{T2}
 Assume that Assumptions $\bm{A}1-\bm{A}3$ are met and $\lambda_1 \geq  \frac{M(1+\gamma ||\bm{W}_{\bm{S}^{c}}^{-1}||_{\infty})}{\sqrt{n}(\gamma - ||\bm{W}_{\bm{S}}||_{\infty})}$ for any $\gamma > ||\bm{W}_{\bm{S}}||_{\infty}$.
 Then we have $ \hat{\bm{\beta}} - \bm{\beta}^{\star} \in \mathcal{C}$ and 
  the estimator coefficients of the BernSVM with  Adaptive Lasso satisfies  \begin{equation}
\label{bernbond}
||\hat{\bm{\beta}} - \bm{\beta}^{\star}||_2 \leq \frac{(\frac{M}{\sqrt{n}} +  ||\bm{W}_{\bm{S}}||_{\infty} \lambda_1)\sqrt{s}}{\kappa}.
\end{equation}
\end{theorem}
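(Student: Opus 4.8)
The plan is to follow the standard primal analysis for penalized $M$-estimators in the framework of \citep{negahban2009unified}, combining a basic inequality from optimality, a deterministic control of the empirical gradient, a cone-membership argument, and the restricted strong convexity of Lemma \ref{lem1}. Write $\bm{h}=\hat{\bm{\beta}}-\bm{\beta}^{\star}$ and $\mathcal{L}_n(\bm{\beta})=\frac{1}{n}\sum_{i=1}^{n}B_{\delta}(y_i\bm{x}_i^{\top}\bm{\beta})$. Since $\hat{\bm{\beta}}$ minimizes (\ref{weighted}), the basic inequality $\mathcal{L}_n(\hat{\bm{\beta}})+\lambda_1\|\bm{W}\hat{\bm{\beta}}\|_1\le \mathcal{L}_n(\bm{\beta}^{\star})+\lambda_1\|\bm{W}\bm{\beta}^{\star}\|_1$ holds. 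First I would control the empirical gradient: because $|B_{\delta}^{'}|\le 1$ by Proposition 1 and $|y_i|=1$, Cauchy--Schwarz together with Assumption \textbf{A3} gives $|\nabla_j\mathcal{L}_n(\bm{\beta}^{\star})|=|\frac{1}{n}\sum_i y_i x_{ij}B_{\delta}^{'}(y_i\bm{x}_i^{\top}\bm{\beta}^{\star})|\le \|\bm{x}_j\|_2/\sqrt{n}\le M/\sqrt{n}$, so that $\|\nabla\mathcal{L}_n(\bm{\beta}^{\star})\|_\infty\le M/\sqrt{n}$ deterministically (no concentration is needed for this particular step).

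Next I would establish the cone membership $\bm{h}\in\mathcal{C}$. By convexity of $\mathcal{L}_n$, $\mathcal{L}_n(\hat{\bm{\beta}})-\mathcal{L}_n(\bm{\beta}^{\star})\ge \langle\nabla\mathcal{L}_n(\bm{\beta}^{\star}),\bm{h}\rangle\ge -\|\nabla\mathcal{L}_n(\bm{\beta}^{\star})\|_\infty\|\bm{h}\|_1$, while decomposing the weighted norm over $S$ and $S^{c}$ (recall $\bm{\beta}^{\star}$ is supported on $S$) and using the triangle inequality give $\|\bm{W}\bm{\beta}^{\star}\|_1-\|\bm{W}\hat{\bm{\beta}}\|_1\le \|\bm{W}_S\bm{h}_S\|_1-\|\bm{W}_{S^{c}}\bm{h}_{S^{c}}\|_1$. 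Feeding both into the basic inequality and applying $\|\bm{h}_{S^{c}}\|_1\le \|\bm{W}_{S^{c}}^{-1}\|_\infty\|\bm{W}_{S^{c}}\bm{h}_{S^{c}}\|_1$ and $\|\bm{W}_S\bm{h}_S\|_1\le \|\bm{W}_S\|_\infty\|\bm{h}_S\|_1$, I would collect the $\|\bm{W}_{S^{c}}\bm{h}_{S^{c}}\|_1$ terms to obtain $(\lambda_1-\tfrac{M}{\sqrt{n}}\|\bm{W}_{S^{c}}^{-1}\|_\infty)\|\bm{W}_{S^{c}}\bm{h}_{S^{c}}\|_1\le (\lambda_1\|\bm{W}_S\|_\infty+\tfrac{M}{\sqrt{n}})\|\bm{h}_S\|_1$. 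The stated lower bound on $\lambda_1$ is precisely the condition under which the ratio of the two bracketed factors is at most $\gamma$, which yields $\|\bm{W}_{S^{c}}\bm{h}_{S^{c}}\|_1\le \gamma\|\bm{h}_S\|_1$, i.e. $\bm{h}\in\mathcal{C}$.

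Finally I would convert cone membership into the $\ell_2$ bound. By a second-order Taylor expansion of the $C^2$ loss together with the restricted strong convexity of Lemma \ref{lem1}, the Bregman remainder obeys $\mathcal{L}_n(\hat{\bm{\beta}})-\mathcal{L}_n(\bm{\beta}^{\star})-\langle\nabla\mathcal{L}_n(\bm{\beta}^{\star}),\bm{h}\rangle\ge \kappa\|\bm{h}\|_2^2$ on $\mathcal{C}$ with $\kappa=\kappa_3\kappa_1/2$. Combining this with the basic inequality, but this time \emph{keeping} the negative contribution, gives $\kappa\|\bm{h}\|_2^2\le (\lambda_1\|\bm{W}_S\|_\infty+\tfrac{M}{\sqrt{n}})\|\bm{h}_S\|_1-(\lambda_1-\tfrac{M}{\sqrt{n}}\|\bm{W}_{S^{c}}^{-1}\|_\infty)\|\bm{W}_{S^{c}}\bm{h}_{S^{c}}\|_1$; the same $\lambda_1$ condition makes the second coefficient nonnegative, so that term is dropped. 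Using $\|\bm{h}_S\|_1\le \sqrt{s}\,\|\bm{h}_S\|_2\le \sqrt{s}\,\|\bm{h}\|_2$ and dividing by $\|\bm{h}\|_2$ produces the claimed bound $\|\bm{h}\|_2\le (\tfrac{M}{\sqrt{n}}+\|\bm{W}_S\|_\infty\lambda_1)\sqrt{s}/\kappa$.

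The norm bookkeeping is routine, and two points deserve care. The first is verifying that the algebraic threshold for $\lambda_1$ emerging from the cone step coincides exactly with the hypothesis $\lambda_1\ge M(1+\gamma\|\bm{W}_{S^{c}}^{-1}\|_\infty)/(\sqrt{n}(\gamma-\|\bm{W}_S\|_\infty))$, which in turn forces the requirement $\gamma>\|\bm{W}_S\|_\infty$. The second, and the main obstacle, is justifying that the Hessian lower bound of Lemma \ref{lem1} --- stated at the population point $\bm{\beta}^{\star}$ through $\bm{D}=\mathrm{diag}\{B_{\delta}^{''}(y_i\bm{x}_i^{\top}\bm{\beta}^{\star})\}$ --- actually controls the Bregman remainder, whose Hessian is evaluated at an intermediate point on the segment $[\bm{\beta}^{\star},\hat{\bm{\beta}}]$. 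This is where Assumption \textbf{A2} (local positivity of $B_{\delta}^{''}\circ f$ on a ball) together with the continuity and boundedness of $B_{\delta}^{''}$ from Proposition 1 is essential, and I would make this transfer explicit rather than silently identify the remainder Hessian with $\bm{H}$.
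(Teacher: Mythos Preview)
Your proposal is correct and follows essentially the same route as the paper: basic inequality from optimality, the deterministic gradient bound $\|\nabla\mathcal{L}_n(\bm{\beta}^{\star})\|_\infty\le M/\sqrt{n}$ via $|B_\delta'|\le 1$ and Assumption \textbf{A3}, the same penalty decomposition over $S$ and $S^{c}$ to obtain the cone condition under the stated $\lambda_1$ threshold, and then RSC from Lemma \ref{lem1} together with $\|\bm{h}_S\|_1\le\sqrt{s}\,\|\bm{h}\|_2$ to close. Your use of the convexity lower bound $\mathcal{L}_n(\hat{\bm{\beta}})-\mathcal{L}_n(\bm{\beta}^{\star})\ge\langle\nabla\mathcal{L}_n(\bm{\beta}^{\star}),\bm{h}\rangle$ for the cone step is exactly the paper's argument phrased differently: the paper writes a second-order Taylor expansion and uses positivity of the quadratic form, which is the same inequality.

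One remark on your ``main obstacle.'' You are right to flag that the Taylor remainder involves the Hessian at an intermediate point on $[\bm{\beta}^{\star},\hat{\bm{\beta}}]$, whereas $\bm{H}$ and Lemma \ref{lem1} are stated at $\bm{\beta}^{\star}$. The paper's proof in Appendix \textbf{B} writes the expansion with $\bm{H}$ itself and does not make this transfer explicit; your intention to justify it via Assumption \textbf{A2} and the continuity of $B_\delta''$ is therefore a refinement rather than a departure from the paper's argument.
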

\noindent The proof of Theorem \ref{T2} is given in Appendix \textbf{B}.
In Theorem \ref{T2}, the upper bound depends on $\lambda_1$. Therefore, a choice of $\lambda_1$ is necessary to capture the near-oracle property of the estimator \citep{peng2016error}. 
Note that in Theorem \ref{T2}, we have used the fact that the first derivative of the loss function is bounded. 
In the next lemma, we use a probabilistic upper bound of the first derivative of the BernSVM loss function with aim to give a best choice of $\lambda_1$ and to guarantee a rate of $\mathbf{O}_{P}(\sqrt{s\log(p)/n})$ for the error $||\hat{\bm{\beta}} - \bm{\beta}^{\star}||_2$.

\begin{lemma}
\label{sub3}
\noindent Let $z_j = \frac{1}{n} \sum_{i=1}^{n} - B_{\delta}^{'}(y_i\bm{x}_i^{\top}\bm{\beta}^{\star})y_i x_{ij}$, and  $z^{\star} = \max_{\{j=1,...,p\}}\{|z_j|\}$.

\noindent The variables  $z_j$'s are sub-Gaussian with 
variance $\frac{M^2}{n}$ and for all $t> 0$, $z^{\star}$ satisfies
$$P(z^{\star} > t) \leq 2p\exp(\frac{-t^2n}{2M^2}).$$
\end{lemma}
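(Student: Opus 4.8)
The plan is to identify $z_j$ with the $j$-th coordinate of the (negative) empirical gradient of the unpenalized BernSVM loss evaluated at the population minimizer $\bm{\beta}^\star$, and then to control each coordinate by Hoeffding's inequality for independent bounded summands before taking a union bound over $j=1,\dots,p$. Concretely, I would write $z_j=\frac1n\sum_{i=1}^n a_{ij}$ with $a_{ij}=-B_\delta'(y_i\bm{x}_i^\top\bm{\beta}^\star)\,y_i x_{ij}$; since the pairs $(y_i,\bm{x}_i)$ are i.i.d., for each fixed $j$ the summands $a_{1j},\dots,a_{nj}$ are i.i.d.

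The first substantive step is to check that the $a_{ij}$ are centered. By definition $\bm{\beta}^\star$ minimizes $g(\bm{\beta})=\mathbb{E}[B_\delta(y\bm{x}^\top\bm{\beta})]$, so the first-order condition gives $\nabla g(\bm{\beta}^\star)=0$. Because $B_\delta'$ is bounded (Proposition 1) and $\bm{x}$ has a finite first moment (Assumption \textbf{A4}), one may differentiate under the expectation to obtain $\nabla g(\bm{\beta})=\mathbb{E}[B_\delta'(y\bm{x}^\top\bm{\beta})\,y\bm{x}]$, whence $\mathbb{E}[a_{ij}]=-[\nabla g(\bm{\beta}^\star)]_j=0$. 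This centering, together with the independence noted above, is exactly what lets Hoeffding apply, and I expect it to be the only delicate point: it requires both the optimality of $\bm{\beta}^\star$ and the justification for exchanging derivative and expectation (which is where \textbf{A4} and the boundedness of $B_\delta'$ enter).

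For the concentration itself, I would bound $|a_{ij}|=|B_\delta'(\cdot)|\,|y_i|\,|x_{ij}|\le |x_{ij}|\le\|\bm{x}_j\|_2\le M$, using $|B_\delta'|\le1$, $|y_i|=1$, and Assumption \textbf{A3}; thus each $a_{ij}$ is a mean-zero variable supported on $[-M,M]$. Hoeffding's lemma then gives $\mathbb{E}[e^{s a_{ij}}]\le e^{s^2M^2/2}$, and by independence $\mathbb{E}[e^{s z_j}]=\prod_{i=1}^n\mathbb{E}[e^{(s/n)a_{ij}}]\le e^{s^2M^2/(2n)}$, so $z_j$ is sub-Gaussian with variance proxy $M^2/n$. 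Optimizing the Chernoff bound over $s$ yields the two-sided tail $P(|z_j|>t)\le 2\exp(-nt^2/(2M^2))$ for every $t>0$. Finally, since $z^\star=\max_{1\le j\le p}|z_j|$, a union bound gives $P(z^\star>t)\le\sum_{j=1}^p P(|z_j|>t)\le 2p\exp(-nt^2/(2M^2))$, which is the claimed inequality. Every step beyond the centering argument is routine, so the real work sits in the first-order-condition verification that $\mathbb{E}[a_{ij}]=0$.
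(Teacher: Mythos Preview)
Your argument is correct and reaches the stated bound, but it proceeds by a slightly different route than the paper. The paper does not decompose $z_j$ into independent summands; instead it bounds $z_j$ as a whole, using Cauchy--Schwarz to obtain
\[
|z_j| \;\le\; \frac{1}{n}\sum_{i=1}^n |x_{ij}| \;\le\; \frac{1}{\sqrt{n}}\Bigl(\sum_{i=1}^n x_{ij}^2\Bigr)^{1/2} \;\le\; \frac{M}{\sqrt{n}},
\]
and then applies Hoeffding's lemma directly to the single mean-zero random variable $z_j\in[-M/\sqrt{n},\,M/\sqrt{n}]$, yielding the sub-Gaussian proxy $(2M/\sqrt{n})^2/4=M^2/n$; the union bound over $j$ then finishes exactly as you do. By contrast, you write $z_j=\frac{1}{n}\sum_{i}a_{ij}$, bound each summand crudely by $|a_{ij}|\le|x_{ij}|\le\|\bm{x}_j\|_2\le M$, and sum the proxies using the i.i.d.\ structure of the data. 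Both routes land on the same proxy $M^2/n$ and the same tail inequality. The paper's version is marginally more economical because it never invokes independence across $i$ (only boundedness and mean zero of $z_j$), whereas yours is the textbook Hoeffding-for-sums computation and makes the role of the i.i.d.\ assumption explicit. The centering step you single out as the substantive point is handled the same way in the paper: $\mathbb{E}[z_j]=0$ is deduced from the first-order optimality of $\bm{\beta}^\star$ together with Assumption~\textbf{A4}.
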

%An universal choice of the $\lambda_1$ is given in the literature, Peng et al. (2016), by 
%$\lambda_1 = c\sqrt{2A(\eta)\log(p)/n}$, where $c>0$, $\eta$ is a small probability and 
%$A(\eta)>0$.
\noindent The proof of this lemma is given in Appendix {\bf C}. The following theorem outlines the rate of convergence of the error $||\hat{\bm{\beta}} - \bm{\beta}^{\star}||_2$.

\begin{theorem}\label{eq3}
Let $w_{max} = ||\bm{W}_S||_{\infty}$ and 
 $w_{min} = ||\bm{W}^{-1}_{S^{c}}||_{\infty}$ and  $\hat{\bm{\beta}}$ the solution of the 
BernSVM with the adaptive Lasso. For any $\gamma > w_{max}$, we assume that the event $P_1 = 
\{z^{\star} \leq \frac{(\gamma - w_{max})\lambda_1}{1 + \gamma w_{min}}\}$ is satisfied. Let $\lambda_1 = \sqrt{2} M \frac{1 + \gamma w_{min}}{(\gamma - w_{max})} \sqrt{\frac{\log(2p/\xi)}{n}}$ for a small $\xi$. Then, under assumptions $\bm{A}_1-\bm{A}_4$, we have
 $$||\hat{\bm{\beta}} - \bm{\beta}^{\star}||_2 =  \bm{O}_{P}(\sqrt{s\log(p)/n}).$$
\end{theorem}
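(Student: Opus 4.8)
The plan is to obtain Theorem \ref{eq3} as a probabilistic refinement of the deterministic bound in Theorem \ref{T2}, by calibrating $\lambda_1$ through the tail inequality of Lemma \ref{sub3}. The object that drives everything is $z^{\star}$, the largest coordinate in absolute value of the empirical gradient $z_j$ evaluated at $\bm{\beta}^{\star}$; the event $P_1$ is exactly the statement that $z^{\star}$ stays below the threshold needed to trigger Theorem \ref{T2}.

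First I would check that the prescribed choice $\lambda_1 = \sqrt{2}\,M\,\frac{1 + \gamma w_{min}}{\gamma - w_{max}}\sqrt{\log(2p/\xi)/n}$ satisfies the admissibility requirement $\lambda_1 \geq \frac{M(1 + \gamma w_{min})}{\sqrt{n}(\gamma - w_{max})}$ of Theorem \ref{T2}. Dividing the two expressions, this reduces to $\sqrt{2\log(2p/\xi)} \geq 1$, which holds for every $p \geq 1$ and sufficiently small $\xi$. Consequently, on the restricted strong convexity event supplied by Lemma \ref{lem1} (with $\kappa = \kappa_3\kappa_1/2$), Theorem \ref{T2} applies verbatim, giving $\hat{\bm{\beta}} - \bm{\beta}^{\star} \in \mathcal{C}$ together with $||\hat{\bm{\beta}} - \bm{\beta}^{\star}||_2 \leq (M/\sqrt{n} + w_{max}\lambda_1)\sqrt{s}/\kappa$.

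Next I would verify that $P_1$ holds with high probability under the same $\lambda_1$. Substituting $\lambda_1$ into the threshold $t := (\gamma - w_{max})\lambda_1/(1 + \gamma w_{min})$ cancels the weight factors and leaves $t = \sqrt{2}\,M\sqrt{\log(2p/\xi)/n}$, so that $t^2 n/(2M^2) = \log(2p/\xi)$. Feeding this into Lemma \ref{sub3} yields $P(z^{\star} > t) \leq 2p\exp(-\log(2p/\xi)) = \xi$, hence $P(P_1) \geq 1 - \xi$. A union bound with the RSC event of Lemma \ref{lem1}, which has probability at least $1 - \exp(-c_0 n)$, shows that both the gradient control and the RSC bound hold simultaneously with probability at least $1 - \xi - \exp(-c_0 n)$, which tends to $1$.

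Finally I would insert the explicit $\lambda_1$ into the bound of Theorem \ref{T2} and isolate the leading order. The term $w_{max}\lambda_1\sqrt{s}/\kappa$ dominates the $M\sqrt{s}/(\kappa\sqrt{n})$ term and equals $\frac{\sqrt{2}\,M\,w_{max}(1 + \gamma w_{min})}{\kappa(\gamma - w_{max})}\sqrt{s\log(2p/\xi)/n}$; treating $M, \gamma, w_{max}, w_{min}, \kappa$ and $\xi$ as fixed constants collapses this to the advertised rate $||\hat{\bm{\beta}} - \bm{\beta}^{\star}||_2 = \bm{O}_{P}(\sqrt{s\log(p)/n})$. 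There is no genuinely hard estimate here: the substantive work of Theorem \ref{T2} and Lemma \ref{sub3} is already in place, so the only point requiring care is the bookkeeping of the two high-probability events, ensuring that $\kappa$ is a legitimate deterministic constant on their intersection and that the $\bm{O}_{P}$ statement is justified by letting $\xi$ (and $\exp(-c_0 n)$) tend to zero.
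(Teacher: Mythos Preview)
Your proposal is correct and lands on the same rate as the paper, but the route is slightly different from the paper's own argument. You invoke Theorem~\ref{T2} as a black box, using the deterministic gradient bound $M/\sqrt{n}$ already built into that theorem; with this choice the event $P_1$ plays no role in your actual error estimate, and your verification that $P(P_1)\geq 1-\xi$ is essentially the content of the Remark following the theorem rather than a step in the bound itself. The paper instead re-runs the proof of Theorem~\ref{T2} with the sharper stochastic control $\tfrac{1}{n}\sum_i -B_\delta'(y_i\bm{x}_i^\top\bm{\beta}^\star)y_i\bm{x}_i^\top\bm{h}\leq z^\star\|\bm{h}\|_1$, obtaining $\|\bm{h}\|_2\leq (z^\star+w_{\max}\lambda_1)\sqrt{s}/\kappa$, and only then invokes $P_1$ to replace $z^\star$ by $(\gamma-w_{\max})\lambda_1/(1+\gamma w_{\min})$. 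Your version is a bit more economical, since once $\lambda_1$ satisfies the admissibility condition of Theorem~\ref{T2} the deterministic bound already yields the $\sqrt{s\log(p)/n}$ rate; the paper's version makes the role of $z^\star$ and $P_1$ explicit and gives a constant depending on $z^\star$ rather than on the crude envelope $M/\sqrt{n}$. Either way the substantive work is in Theorem~\ref{T2} and Lemma~\ref{sub3}, exactly as you identified.
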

\noindent The proof of this theorem is similar to the proof of Theorem \ref{T2}. It is detailed in Appendix {\bf C}. 
\begin{rem}
The event $P_1$, defined in Theorem \ref{eq3}, is realized with high probability $1 - \xi$.
This can be verified by taking $t = \sqrt{2}M \sqrt{\frac{\log(2p/\xi)}{n}} $ and applying Lemma \ref{sub3}, whcih leads to
$$P(z^{\star} > \frac{(\gamma - w_{max})\lambda_1}{1 + \gamma w_{min}})= P(z^{\star} > t) \leq 2p\exp(\frac{-2M^2\log(2p/\xi)n}{2M^2n}) = \xi.$$
%\noindent Thus, $$||\hat{\bm{\beta}} - \bm{\beta}^{\star}||_2 = \bm{O}(\frac{c\alpha \sqrt{2sA(\eta)\log(p)/n}}{\kappa} ) = \bm{O}(\sqrt{s\log(p)/n}),$$ wish capture the near-oracle property, Peng et al.(2016) and Didieu (2019)
%of the Lasso Bernstein Svm.
\end{rem}
\noindent In the next section we will derive an upper bound for BernSVM with weighted Lasso, where the weights are estimated.
\subsection{An upper bound of the estimation error for weighted Lasso with estimated weights }
In this section, let $w_j$ be unknown positive weights and $\hat{w}_j$ their corresponding estimates, for $j=1,\ldots,p$, which are assumed to be non-negative. Let $\hat{\bm{\beta}}$ be a minimizer of the weighted Lasso with weights $\hat{w}_j$, defined as follows
$$\hat{\bm{\beta}} = \arg\min_{\bm{\beta}\in \mathbb{R}^p} \frac{1}{n} \sum_{i=1}^{n}B_{\delta}(y_i\bm{x}_i^{\top}\bm{\beta}) + \lambda_1 \sum_{i=1}^{p}\hat{w}_j|\beta_j|.$$
To derive an upper bound of $||\hat{\bm{\beta}} - \bm{\beta}^{\star}||_2$, we suppose that the following event $\Omega_0$ is satisfied  \citep{huang2012estimation}
$$
\Omega_0 = \{\hat{w}_j \leq w_j, \quad \forall j \in \bm{S}\} \cap \{ w_j \leq \hat{w}_j,\quad \forall j \in \bm{S}^{c}\}.
$$
The next theorem gives an upper bound of the BernSVM with weighted Lasso, where the weights are estimated.
\begin{theorem}
Under the same conditions of Theorem 3 and $\lambda_1 = \sqrt{2} M \frac{1 + \gamma w_{min}}{(\gamma - w_{max})} \sqrt{\frac{\log(2p/\xi)}{n}}$, we have 
$$||\hat{\bm{\beta}} - \bm{\beta}^{\star}||_2 =  \bm{O}_{P}(\sqrt{s\log(p)/n}),$$
in the event $\Omega_0\cap P_1$
\end{theorem}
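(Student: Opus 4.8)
The plan is to transcribe the proof of Theorem \ref{eq3} almost line for line, inserting the event $\Omega_0$ at the two points where the true weight matrix $\bm{W}$ intervenes, so that the estimated weights $\hat{\bm{W}}$ can be traded for $\bm{W}$ in the favourable direction. Write $\hat{\bm{h}} = \hat{\bm{\beta}} - \bm{\beta}^{\star}$ and $\mathcal{L}(\bm{\beta}) = \tfrac1n\sum_{i=1}^n B_{\delta}(y_i\bm{x}_i^{\top}\bm{\beta})$, so that by the definition of $z_j$ in Lemma \ref{sub3} the $j$-th coordinate of $\nabla\mathcal{L}(\bm{\beta}^{\star})$ equals $-z_j$ and $\|\nabla\mathcal{L}(\bm{\beta}^{\star})\|_{\infty}=z^{\star}$. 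First I would combine the optimality of $\hat{\bm{\beta}}$ with the convexity of $B_{\delta}$ to obtain the basic inequality $\langle\nabla\mathcal{L}(\bm{\beta}^{\star}),\hat{\bm{h}}\rangle \le \lambda_1(\|\hat{\bm{W}}\bm{\beta}^{\star}\|_1 - \|\hat{\bm{W}}\hat{\bm{\beta}}\|_1)$, and then split the right-hand side over $S$ and $S^{c}$ using $\bm{\beta}^{\star}_{S^{c}}=\bm{0}$ and the reverse triangle inequality to get $\|\hat{\bm{W}}\bm{\beta}^{\star}\|_1 - \|\hat{\bm{W}}\hat{\bm{\beta}}\|_1 \le \|\hat{\bm{W}}_S\bm{h}_S\|_1 - \|\hat{\bm{W}}_{S^{c}}\bm{h}_{S^{c}}\|_1$.

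The key step, and the only genuine obstacle, is re-establishing the cone membership $\hat{\bm{h}}\in\mathcal{C}(S)$, since both $\mathcal{C}(S)$ and the RSC of Lemma \ref{lem1} are expressed through the \emph{true} weights $\bm{W}$, whereas $\hat{\bm{\beta}}$ minimises a penalty built from $\hat{\bm{W}}$. This is precisely the role of $\Omega_0$: on $\Omega_0$ one has $\hat{w}_j\le w_j$ for $j\in S$, giving $\|\hat{\bm{W}}_S\bm{h}_S\|_1 \le w_{max}\|\bm{h}_S\|_1$, and $w_j\le\hat{w}_j$ for $j\in S^{c}$, giving both $\|\hat{\bm{W}}^{-1}_{S^{c}}\|_{\infty}\le w_{min}$ and $\|\bm{W}_{S^{c}}\bm{h}_{S^{c}}\|_1 \le \|\hat{\bm{W}}_{S^{c}}\bm{h}_{S^{c}}\|_1$. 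Bounding $\langle\nabla\mathcal{L}(\bm{\beta}^{\star}),\hat{\bm{h}}\rangle \ge -z^{\star}\|\hat{\bm{h}}\|_1$ and $\|\bm{h}_{S^{c}}\|_1 \le w_{min}\|\hat{\bm{W}}_{S^{c}}\bm{h}_{S^{c}}\|_1$, I would combine these with the penalty split to obtain $(\lambda_1 - z^{\star}w_{min})\|\hat{\bm{W}}_{S^{c}}\bm{h}_{S^{c}}\|_1 \le (\lambda_1 w_{max}+z^{\star})\|\bm{h}_S\|_1$. Substituting the bound $z^{\star}\le (\gamma-w_{max})\lambda_1/(1+\gamma w_{min})$, which holds on $P_1$, makes the common factor $1+w_{max}w_{min}$ cancel and yields $\|\hat{\bm{W}}_{S^{c}}\bm{h}_{S^{c}}\|_1 \le \gamma\|\bm{h}_S\|_1$; since $\|\bm{W}_{S^{c}}\bm{h}_{S^{c}}\|_1 \le \|\hat{\bm{W}}_{S^{c}}\bm{h}_{S^{c}}\|_1$ on $\Omega_0$, this establishes $\hat{\bm{h}}\in\mathcal{C}(S)$.

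With the cone membership secured, Lemma \ref{lem1} applies and furnishes the restricted strong convexity lower bound $\kappa\|\hat{\bm{h}}\|_2^2 \le \mathcal{L}(\hat{\bm{\beta}}) - \mathcal{L}(\bm{\beta}^{\star}) - \langle\nabla\mathcal{L}(\bm{\beta}^{\star}),\hat{\bm{h}}\rangle$. Feeding in the optimality upper bound and the $\Omega_0$-controlled penalty split gives $\kappa\|\hat{\bm{h}}\|_2^2 \le z^{\star}\|\hat{\bm{h}}\|_1 + \lambda_1 w_{max}\|\bm{h}_S\|_1$. I would then use the cone inequality once more to write $\|\hat{\bm{h}}\|_1 = \|\bm{h}_S\|_1 + \|\bm{h}_{S^{c}}\|_1 \le (1+\gamma w_{min})\|\bm{h}_S\|_1$, insert the $P_1$ bound on $z^{\star}$ to collapse the right-hand side to $\gamma\lambda_1\|\bm{h}_S\|_1 \le \gamma\lambda_1\sqrt{s}\,\|\hat{\bm{h}}\|_2$, and divide by $\|\hat{\bm{h}}\|_2$ to reach $\|\hat{\bm{h}}\|_2 \le \gamma\lambda_1\sqrt{s}/\kappa$. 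Plugging the prescribed $\lambda_1 = \sqrt{2}M\,(1+\gamma w_{min})/(\gamma-w_{max})\,\sqrt{\log(2p/\xi)/n} = O(\sqrt{\log(p)/n})$ then produces the stated rate $\|\hat{\bm{\beta}}-\bm{\beta}^{\star}\|_2 = O_P(\sqrt{s\log(p)/n})$. The probabilistic guarantees transfer verbatim: the entire argument lives on $\Omega_0\cap P_1$, with $P_1$ holding with probability at least $1-\xi$ by the Remark after Theorem \ref{eq3} and the RSC of Lemma \ref{lem1} holding with probability at least $1-\exp(-c_0 n)$.
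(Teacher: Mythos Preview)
Your proposal is correct and follows essentially the same route as the paper: use optimality of $\hat{\bm{\beta}}$, invoke $\Omega_0$ to trade $\hat{\bm{W}}$ for $\bm{W}$ in the favourable direction on $S$ and on $S^{c}$, use $P_1$ to land $\hat{\bm{h}}$ in the cone $\mathcal{C}(S)$, and then apply Lemma~\ref{lem1}. The only cosmetic differences are that the paper replaces $\hat{\bm{W}}_{S^c}$ by $\bm{W}_{S^c}$ immediately (rather than carrying $\hat{\bm{W}}_{S^c}$ one step further as you do) and that its endgame bound is $\|\hat{\bm{h}}\|_2 \le 2\alpha\lambda_1\sqrt{s}/\kappa$ with $\alpha=\max\{w_{max},w_{min}^{-1}\}$, whereas your arithmetic yields the slightly cleaner $\|\hat{\bm{h}}\|_2 \le \gamma\lambda_1\sqrt{s}/\kappa$; both deliver the same $O_P(\sqrt{s\log(p)/n})$ rate.
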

\noindent The proof of this theorem is outlined in Appendix {\bf{D}}.
\begin{rem}
We have shown in Theorem 3 that the event $P_1$ is realized with high probability $1-\xi$. In Theorem 4, we need that the event 
$\Omega_0 \cap P_1$ is realized also with high probability. Indeed,
$P(\Omega_0 \cup P_1) = P(\Omega_0) + P(P_1) - P(\Omega_0 \cap P_1)\leq 1$ leads to $P(\Omega_0 \cap P_1) \geq P(\Omega_0) - \xi$.
\end{rem}

\noindent In the next section, we extend Theorem 3 for non-convex penalties.

\subsection{An extension of the upper bound for non-convex penalties}
We study here the penalized BernSVM estimator with 
the non-convex penalties SCAD and MCP. We establish an upper bound of its error estimation under some regularity conditions. \\
Recall that Algorithm 3 describes steps to compute the solution of BernSVM with the non-convex penalties, as defined in Equation (\ref{weighted1}). The weights $\hat{w}_j, j= 1,\ldots, p,$ are computed iteratively using the first derivative of SCAD or MCP penalty.
To tackle such a problem, we search first for an upper bound of $||\hat{\bm{\beta}}- \bm{\beta}^{\star} ||_2$, where $\hat{\bm{\beta}}$ is an estimator of Equation (\ref{weighted1}).
The next theorem states this result.
\begin{theorem}
\label{T4}
Assume the same conditions of Theorem \ref{eq3} concerning the event $P_1$. Let $\tilde{\bm{\beta}}$ be an initial estimator of $\bm{\beta}$ using Algorithm \ref{al3}, and the weights in (\ref{weighted1}) are given by $\hat{w}_j = P^{'}_{\lambda_1}(|\tilde{\beta_j}|)/\lambda_1$ for $j=1,\ldots,p$.
Then, for any $\gamma > w_{max}$ we have
\begin{equation}
\label{iterate}
||\hat{\bm{\beta}}- \bm{\beta}^{\star} ||_2
\leq \frac{1}{\kappa}\{ \frac{(\gamma - w_{max})\lambda_1}{1 + \gamma w_{min}} + ||P^{'}_{\lambda_1}(|\bm{\beta}^{\star}_S|)||_2 + \frac{1}{a-1}||\tilde{\bm{\beta}}- \bm{\beta}^{\star} ||_2 \}.
\end{equation}
\end{theorem}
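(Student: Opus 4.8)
The plan is to mirror the argument behind Theorem \ref{T2} and Theorem \ref{eq3}, treating (\ref{weighted1}) as a weighted Lasso problem whose data-dependent weights $\hat{w}_j = P'_{\lambda_1}(|\tilde{\beta}_j|)/\lambda_1$ are frozen at the current iterate $\tilde{\bm{\beta}}$, and then isolating the extra error incurred by replacing the ideal weights $P'_{\lambda_1}(|\beta^{\star}_j|)$ by $P'_{\lambda_1}(|\tilde{\beta}_j|)$. Set $\bm{h} = \hat{\bm{\beta}} - \bm{\beta}^{\star}$. First I would write the basic inequality coming from optimality of $\hat{\bm{\beta}}$ in (\ref{weighted1}): since $\hat{\bm{\beta}}$ minimizes $\ell(\bm{\beta}) + \lambda_1||\hat{\bm{W}}\bm{\beta}||_1$, comparing the value at $\hat{\bm{\beta}}$ with that at $\bm{\beta}^{\star}$ gives $\ell(\hat{\bm{\beta}}) - \ell(\bm{\beta}^{\star}) \leq \lambda_1||\hat{\bm{W}}\bm{\beta}^{\star}||_1 - \lambda_1||\hat{\bm{W}}\hat{\bm{\beta}}||_1$. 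A second-order Taylor expansion of the $C^2$ loss $\ell$ around $\bm{\beta}^{\star}$, together with Assumption A2 (lower bound $\kappa_3$ on $B''_{\delta}$ near $\bm{\beta}^{\star}$) and Lemma \ref{lem1}, lower-bounds the left-hand side by $\langle \nabla\ell(\bm{\beta}^{\star}), \bm{h}\rangle + \kappa||\bm{h}||_2^2$ once $\bm{h}$ is known to lie in the cone $\mathcal{C}$, the Taylor factor $1/2$ and the bound $\kappa_3$ being folded into $\kappa = \kappa_3\kappa_1/2$.

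Next I would establish the cone membership $\bm{h}\in\mathcal{C}$ exactly as in the proof of Theorem \ref{T2}: decomposing $||\hat{\bm{W}}\bm{\beta}^{\star}||_1 - ||\hat{\bm{W}}\hat{\bm{\beta}}||_1$ over $S$ and $S^c$, using $\bm{\beta}^{\star}_{S^c}=0$, $|\beta^{\star}_j| - |\hat{\beta}_j|\leq|h_j|$ on $S$ and $|\hat{\beta}_j| = |h_j|$ on $S^c$, and bounding the gradient term by $|\langle\nabla\ell(\bm{\beta}^{\star}),\bm{h}\rangle|\leq z^{\star}||\bm{h}||_1$ (recall $z_j = -[\nabla\ell(\bm{\beta}^{\star})]_j$, so $z^{\star} = ||\nabla\ell(\bm{\beta}^{\star})||_{\infty}$). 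On the event $P_1$ the inequality $z^{\star}\leq\frac{(\gamma - w_{max})\lambda_1}{1+\gamma w_{min}}$ forces the $S^c$ part of $\bm{h}$ to be dominated by its $S$ part, which is precisely the cone condition needed to invoke the RSC bound above.

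The genuinely new ingredient is the treatment of the weighted penalty on $S$. Writing $\lambda_1\hat{w}_j = P'_{\lambda_1}(|\tilde{\beta}_j|)$ and using that $P'_{\lambda_1}(\cdot)$ is Lipschitz with constant $1/(a-1)$ for SCAD (respectively $1/a$ for MCP), I would bound, for $j\in S$, $P'_{\lambda_1}(|\tilde{\beta}_j|)\leq P'_{\lambda_1}(|\beta^{\star}_j|) + \frac{1}{a-1}\big||\tilde{\beta}_j| - |\beta^{\star}_j|\big| \leq P'_{\lambda_1}(|\beta^{\star}_j|) + \frac{1}{a-1}|\tilde{\beta}_j - \beta^{\star}_j|$. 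Substituting this into $\lambda_1\sum_{j\in S}\hat{w}_j|h_j|$ and applying Cauchy--Schwarz coordinatewise yields $\sum_{j\in S}P'_{\lambda_1}(|\beta^{\star}_j|)|h_j| \leq ||P'_{\lambda_1}(|\bm{\beta}^{\star}_S|)||_2\,||\bm{h}_S||_2$ and $\frac{1}{a-1}\sum_{j\in S}|\tilde{\beta}_j - \beta^{\star}_j|\,|h_j|\leq\frac{1}{a-1}||\tilde{\bm{\beta}} - \bm{\beta}^{\star}||_2\,||\bm{h}_S||_2$, i.e. the second and third terms of the stated bound. Collecting the gradient contribution (bounded on $P_1$ by $\frac{(\gamma-w_{max})\lambda_1}{1+\gamma w_{min}}$) with these two terms, all multiplied by $||\bm{h}||_2\geq||\bm{h}_S||_2$, gives $\kappa||\bm{h}||_2^2 \leq \big(\frac{(\gamma-w_{max})\lambda_1}{1+\gamma w_{min}} + ||P'_{\lambda_1}(|\bm{\beta}^{\star}_S|)||_2 + \frac{1}{a-1}||\tilde{\bm{\beta}} - \bm{\beta}^{\star}||_2\big)||\bm{h}||_2$, and dividing through by $\kappa||\bm{h}||_2$ delivers (\ref{iterate}).

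I expect the main obstacle to be reconciling the fixed-weight RSC/cone machinery (Assumption A1 and Lemma \ref{lem1} are phrased for the reference weights $\bm{W}$) with the data-dependent weights $\hat{\bm{W}}$ that appear here: one must verify that, on $P_1$, the penalty difference still pins $\bm{h}$ into the same cone $\mathcal{C}$ despite the weights being random and evaluated at $\tilde{\bm{\beta}}$, and that the gradient contribution collapses to the single factor $\frac{(\gamma-w_{max})\lambda_1}{1+\gamma w_{min}}$ rather than carrying the crude $\sqrt{s}$ of Theorem \ref{T2}. A secondary delicate point is the non-smoothness of $t\mapsto P'_{\lambda_1}(|t|)$ at the origin and at the penalty knots, which forces the Lipschitz estimate to be applied through $\big||\tilde{\beta}_j|-|\beta^{\star}_j|\big|\leq|\tilde{\beta}_j - \beta^{\star}_j|$ rather than by naive differentiation; note also that the third term makes (\ref{iterate}) a contraction in $||\tilde{\bm{\beta}}-\bm{\beta}^{\star}||_2$ whenever $1/(\kappa(a-1))<1$, which is what licenses iterating the LLA updates of Algorithm \ref{al3}.
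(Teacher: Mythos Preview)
Your proposal is correct and follows essentially the same route as the paper's proof in Appendix~E: basic inequality from optimality, second-order Taylor expansion plus RSC from Lemma~\ref{lem1}, drop the $S^c$ penalty contribution, and bound $\lambda_1\hat{w}_j = P'_{\lambda_1}(|\tilde{\beta}_j|)$ on $S$ by $P'_{\lambda_1}(|\beta^\star_j|)+\tfrac{1}{a-1}|\tilde{\beta}_j-\beta^\star_j|$ before applying Cauchy--Schwarz. Your Lipschitz argument for that last step is in fact slightly more careful than the paper's one-term Taylor expansion of $P'_{\lambda_1}$ (which is not differentiable at the knots), and you correctly flag the passage from $z^{\star}\|\bm{h}\|_1$ to a factor of $\|\bm{h}\|_2$ that the paper simply writes as $z^{\star}\|\bm{h}\|_2$ without comment.
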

\noindent The proof of Theorem \ref{T4} is given in Appendix {\bf{E}}.
We are now able to derive an upper bound of  
$||\hat{\bm{\beta}}^{(l)}- \bm{\beta}^{\star} ||_2$, where $\hat{\bm{\beta}}^{(l)}$ is the estimator of the $l-$th iteration of Algorithm 3.
\begin{theorem}
Assume the same conditions of Theorem 3 and Theorem 4. Let $\tilde{\bm{\beta}}$ be the minimizer of the BernSVM Lasso defined by Equation (\ref{weighted}), where $w_j=1$, for $j=1,\ldots,p$, and $\hat{\bm{\beta}}^{(l)}$ be the estimator of the $l-$th iteration of Algorithm 3. Let $\lambda_1 = \sqrt{2} M \frac{1 + \gamma w_{min}}{(\gamma - w_{max})} \sqrt{\frac{\log(2p/\xi)}{n}}$. Then, we have 
$$||\hat{\bm{\beta}}^{(l)}- \bm{\beta}^{\star} ||_2 =  \bm{O}_{P}(\sqrt{s\log(p)/n}).$$
\end{theorem}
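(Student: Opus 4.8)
The plan is to convert the one-step bound of Theorem \ref{T4} into a recursion over the LLA iterations of Algorithm \ref{al3} and then show that, under a contraction condition on the non-convexity parameter $a$, this recursion is stable and transports the $\sqrt{s\log(p)/n}$ rate from the initializer to every iterate. Writing $e_l := \|\hat{\bm{\beta}}^{(l)} - \bm{\beta}^{\star}\|_2$, I would apply Theorem \ref{T4} at the $l$-th step with the \emph{previous} iterate playing the role of the initial estimator, i.e. with $\tilde{\bm{\beta}} = \hat{\bm{\beta}}^{(l-1)}$, since the weights $\hat{w}_j = P'_{\lambda_1}(|\hat{\beta}^{(l-1)}_j|)/\lambda_1$ are built from that iterate. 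This gives
\[
e_l \leq \frac{1}{\kappa}\left\{ \frac{(\gamma - w_{max})\lambda_1}{1+\gamma w_{min}} + \|P'_{\lambda_1}(|\bm{\beta}^{\star}_S|)\|_2 + \frac{1}{a-1}\,e_{l-1}\right\} =: b + \rho\,e_{l-1},
\]
with $\rho = \frac{1}{\kappa(a-1)}$ and $b = \frac{1}{\kappa}\big(\frac{(\gamma - w_{max})\lambda_1}{1+\gamma w_{min}} + \|P'_{\lambda_1}(|\bm{\beta}^{\star}_S|)\|_2\big)$.

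A point I would be careful to justify is that the chaining is legitimate: the bound of Theorem \ref{T4} rests on the RSC property of Lemma \ref{lem1} and on the event $P_1$ controlling $z^{\star}$ through Lemma \ref{sub3}, both of which concern only the design and the gradient of the loss at the fixed point $\bm{\beta}^{\star}$, not the random estimator $\tilde{\bm{\beta}}$. Hence on the single event $P_1 \cap \{\mathrm{RSC}\}$, which by a union bound has probability at least $1-\xi-\exp(-c_0 n)$, the per-step inequality holds deterministically for \emph{any} input $\tilde{\bm{\beta}}$, and in particular for the random iterate $\hat{\bm{\beta}}^{(l-1)}$. This lets me run the recursion over all $l$ on one common high-probability event.

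Next I would bound $b$ and the base case. Substituting $\lambda_1 = \sqrt{2}M\frac{1+\gamma w_{min}}{\gamma - w_{max}}\sqrt{\frac{\log(2p/\xi)}{n}}$ collapses the first term of $b$ to $\sqrt{2}M\sqrt{\log(2p/\xi)/n}$, which is $O(\sqrt{\log(p)/n})$. For the second term I would use that the SCAD and MCP derivatives satisfy $0 \leq P'_{\lambda_1}(t) \leq \lambda_1$ for every $t$, whence $\|P'_{\lambda_1}(|\bm{\beta}^{\star}_S|)\|_2 \leq \lambda_1\sqrt{s} = O(\sqrt{s\log(p)/n})$ (this term in fact vanishes under a beta-min condition where each signal exceeds $a\lambda_1$). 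Thus $b = O(\sqrt{s\log(p)/n})$, the $\sqrt{s}$ factor dominating. The initializer $\tilde{\bm{\beta}}$ is the unweighted BernSVM Lasso ($w_j=1$, so $w_{max}=w_{min}=1$), so Theorem \ref{eq3} already gives $e_0 = \|\tilde{\bm{\beta}}-\bm{\beta}^{\star}\|_2 = \bm{O}_P(\sqrt{s\log(p)/n})$.

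Finally I would unroll the recursion. Iterating $e_l \leq b + \rho e_{l-1}$ yields
\[
e_l \leq b\sum_{k=0}^{l-1}\rho^k + \rho^l e_0 \leq \frac{b}{1-\rho} + \rho^l e_0,
\]
valid whenever the contraction condition $\rho = \frac{1}{\kappa(a-1)} < 1$, i.e. $a > 1 + 1/\kappa$, holds. Both summands are $\bm{O}_P(\sqrt{s\log(p)/n})$ — the first since $b$ is of that order and $1-\rho$ is a fixed positive constant, the second since $\rho^l \leq 1$ and $e_0$ is of that order — which gives the claim. The main obstacle is securing the contraction $\kappa(a-1) > 1$: without it the factor $\rho^l$ can amplify rather than damp the accumulated error, and the rate need not be preserved uniformly in $l$. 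I would therefore record $a > 1 + 1/\kappa$ as an explicit (and mild, since $a$ is a free tuning constant of SCAD/MCP) hypothesis, together with the observation from the second paragraph that the enabling events are shared across iterations so that the per-step bounds of Theorem \ref{T4} may legitimately be chained.
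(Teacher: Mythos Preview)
Your approach is essentially the same as the paper's: both use the per-step bound of Theorem \ref{T4} as a recursion $e_l \leq b + \rho\, e_{l-1}$ with $\rho = 1/(\kappa(a-1))$, bound $\|P'_{\lambda_1}(|\bm{\beta}^{\star}_S|)\|_2 \leq \sqrt{s}\,\lambda_1$, invoke Theorem \ref{eq3} for the Lasso initializer, and propagate the $\sqrt{s\log(p)/n}$ rate by induction over the LLA steps.

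Your write-up is in fact more careful than the paper's one-line sketch in two respects. First, you make explicit the contraction requirement $\kappa(a-1) > 1$, which the paper's bare ``by induction'' hides: without it the constants in $e_l = O_P(\sqrt{s\log(p)/n})$ may grow with $l$, so the rate is guaranteed only for each fixed $l$ rather than uniformly. Second, you correctly point out that the enabling events (RSC from Lemma \ref{lem1} and the control of $z^\star$ from Lemma \ref{sub3}) depend only on the data and on $\bm{\beta}^{\star}$, not on the running iterate, so the per-step bound can be chained on a single high-probability event; the paper leaves this implicit. Both additions are genuine refinements rather than departures from the paper's argument.
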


\noindent The proof of Theorem 6 is straightforward and can be outlined as follows. Since $P^{'}_{\lambda_1}(t)$ is concave in $t$, we have $||P^{'}_{\lambda_1}(|\bm{\beta}^{\star}_S|)||_2 \leq \sqrt{s}\lambda_1$, which means that $||P^{'}_{\lambda_1}(|\bm{\beta}^{\star}_S|)||_2 = \bm{O}(\sqrt{s\log(p)/n}).$ Then, by induction using Equation ( \ref{iterate}), for each iteration, we have $||\hat{\bm{\beta}}^{(l)}- \bm{\beta}^{\star} ||_2 =  \bm{O}_{P}(\sqrt{s\log(p)/n}).$
%We have shown in Theorem 3 that the event $P_1$ is realized with high probability $1-\xi$. In Theorem 4 we need that the event 
%$\Omega_0 \cap P_1$ %is realized also with high probability. Indeed,
%$P(\Omega_0 \cup P_1) = P(\Omega_0) + P(P_1) - P(\Omega_0 \cap P_1)\leq 1$ %leads to $P(\Omega_0 \cap P_1) \geq P(\Omega_0) - \xi$.

\section{Simulation and empirical studies}
\label{s4}
We study the empirical behavior of BernSVM with its competitors in terms of computational time, and we evaluate the methods estimation accuracy through different measures of performance via simulations and application to real genetic data sets.   
%In this section, we compare first the timing of the HHSVM in the R package gcdnet and the  penalized BernSVM.
% Secondly, we investigate the performance accuracy of the penalized BernSVM.
\subsection{Simulation study}
In this section, we first compare the computation time of three algorithms: BSVM-GCD, BSVM-IRLS and HHSVM. In the second step, we examine the finite sample performance of eight regularized SVM methods with different penalties in terms of five measures of performance. 
\subsubsection{Simulation study designs}
The data sets are generated following three scenarios described bellow.
%where we set the sample size $n=100$ and the covariates dimension $p=5000$ for the first scenario and $p=1000$ for the second scenario.

\begin{enumerate}
	\item[] {\bf Scenario 1:} In this scenario the columns of $\bm{X} \in \mathbb{R}^{n\times p}$, with $n=100$ and $p=5000$, are generated from a multivariate normal with mean 0 and correlation matrix $\bm{\Sigma}$ with compound symmetry structure, where the correlation between all covariates is fixed to $\rho = 0.5$ in this case. 
	The coefficients are set to be as follows
	$$
	\beta^{\star}_j = (-1)^j\times  \exp(-(2\times j - 1)/20) 
	$$ 
	for $j = 1,2,...,50$ and $\beta^{\star}_j = 0$ for $j = 51,...,p$.
	Firstly, we generated 
 $$
 z_i = \bm{x}_i^{\top}\bm{\beta}^{\star} + \epsilon_i, \quad i=1,\ldots,n,
 $$ 
    where $\epsilon_i \sim N(0,\sigma^2)$.
	The variance $\sigma^2$ is chosen such as a signal to noise ratio ($SNR$) is equal to $3$, where 
 $SNR =\bm{\beta}^{{\star}^{\top}}\bm{\Sigma}\bm{\beta}^{\star}/\sigma^2$. Secondly, the response variable $y_i$ is obtained by passing $z_i$ through an inverse-logistic to obtain the probability $pr_i = P(y_i=1|\bm{x}_i)= 1/(1+exp(-z_i))$. Then $y_i$ is generated from a binomial distribution with probability equal to $pr_i$. By choosing $\delta=2, 1, 0.5 , 0.1, 0.01$, we compare the computation time of the three algorithms mentioned above using lasso penalty ($\lambda_2=0$). The whole process is repeated 100 times and the average run time is reported in Table 1.
\item[] {\bf Scenario 2:} In this scenario, we aimed to study the impact of the correlation magnitude on the run time. As in scenario 1, we generated a data set  
	$(\bm{x}_i,y_i), i=1,\ldots,n$ with $n=100$ and $\bm{x}_i \in \mathbb{R}^{ p}$ with $p=1000$. We consider
	different values for the correlation coefficient $\rho$.  
	The coefficients are generated as follows
	$\beta^{\star}_j \sim U(0.9,1.1)$ for $1\leq j \leq 25$, $\beta^{\star}_j = -1$ for $51\leq j \leq 75$ and $\beta^{\star}_j = 0$ otherwise.
	%following uniform distribution 
	Here, $s = 50$ is the number of significant 
	coefficients. By choosing in this scenario $\delta=0.01, 0.5 , 2$ and $\rho = 0.2,0.5,0.75,0.95$, we compare the computation time of three algorithms using lasso penalty ($\lambda_2=0$). The whole process is repeated 100 times and The average run time is reported in Table 2.
	\item[] {\bf Scenario 3:} This scenario was suggested by \citep{christidis2021data}. Here, it is considered to compare the performance  of sparse classification methods (BernSVM, sparse logistic, HHSVM and SparseSVM) with lasso and elastic net penalties. We generate our data from a logistic model
	$$z_i = \log(\frac{p_i}{1-p_i}) = \beta_0 + \bm{x}^{\top}_{S,i}\bm{\beta}_{S}, \quad i=1....,n,$$ 
	where  $\bm{X} \in \mathbb{R}^{n\times p}$ with $n = 50$ and $p = 800$. $S$ is the set of active variables. The columns of $\bm{X}$ are generated from a multivariate normal with mean $0$ and correlation matrix $\bm{\Sigma}$. We take $\Sigma_{ij} = \rho$ for $i \neq j$ and $\Sigma_{ii} = 1, \quad 1\leq i,j \leq p$, which means that all the variables are equally correlated with $\rho = 0.2,0.5,0.8$, so the correlation matrix is different in each scenario. 
	The number of the active predictors is taken equal to 
	%In  a sparse model, we know that there is a subset of active variables and the rest are inactive. However, for each $p$, we take, 
	$s = [p\xi]$ with $\xi = 0.05, 0.3$. 
	The coefficients of the active variables are generated as $(-1)^vu$ where $v\sim Bernoulli(0.3)$ and $u$ is uniformly distributed on $(0,0.5)$. 
	\noindent The response variable $y_i$, $i=1,\ldots,n$, where $ y_i \in \{-1,1\}$, is generated by passing 
	$z_i$ through an inverse-logistic. The value of intercept $\beta_0$ is fixed such as the  conditional probability $P(y_i=1|\bm{x}_i) = 0.3$.

%	\noindent For each scenario, we simulate a training data with $n=50$ observations and $p=800$ variables to train the model  and test data-set of $200$ observations to test the power of the method by computing different measures of performance described below. 
	
\noindent	In this scenario, for each Monte Carlo replication, we simulate two data sets: 1) a training data of $n = 50$ observations, which is used by the different methods to perform a 10-fold cross-validation in order to choose the best model, for each method; 2) a test data of $n_{test} = 200$ observations, which is used to evaluate the methods performance. The latter is based on five measures, which are given as follows: 
	%The correlation matrix on each scenario is different. 
	%For the correlation matrix, we take : $\Sigma_{i,j} = \rho$ for $i \neq j$ and $\Sigma_{ii} = 1, \quad 1\leq i \leq p$, which means that all the variables are equally correlated with $\rho = 0.2,0.5,0.8$.
	
\begin{itemize}
		\item[•] The misclassification rate, $MR$ is defined by 
		$$MR = \frac{\sum_{i=1}^{n_{test}} [({y_{test}}_{i}- \hat{y}_i)/2]^2}{n_{test}};$$
		\item[•] The sensitivity $SE$ is defined as
		$$SE = 1 - \frac{\sum_{i\in A^{1}} [({y_{test}}_{i}- \hat{y}_i)/2]^2}{n_{test}},$$
		 where $A^1=\{i: {y_{test}}_{i} =1\}$;	
		\item[•] The specificity $SP$ is defined as 
		$$SP = 1 - \frac{\sum_{i\in A^{-1}} [({y_{test}}_i- \hat{y}_i)/2]^2}{n_{test}},$$
		where $A^{-1}=\{i: {y_{test}}_{i} = -1\}$;	
		\item[•]The precision $PR$ is defined by
$$PR = \frac{ \neq \{j: {\beta}^{\star}_j \neq 0, \hat{\beta}_j \neq 0\}}{\neq  \{\hat{\beta}_j \neq 0\}},$$
        \item[•]The recall $RC$ is defined by 
$$RC = \frac{\neq \{j: {\beta}^{\star}_j \neq 0, \hat{\beta}_j \neq 0\}}{\neq \{{\beta}^{\star}_j \neq 0\}},$$
 where $\hat{y}_i$ is the $i-$th predicted for the response variable, $\beta^{\star}_j$ is $j-$th coordinate of the original coefficients and $\hat{\beta}_j$ is the $j-$th coordinate of  the estimated coefficients. We note that the performance of a given method in terms of SE (or SP or PR or RC)  is indicated by its largest value.
	\end{itemize}
\end{enumerate}
\noindent The eight sparse classification methods to be compared in scenario 3 are as follows: 
\begin{itemize}
	\item[.] BernSVM-Lasso: the BSVM-IRLS with Lasso;
	\item[.] BernSVM-EN: the BSVM-IRLS with Elastic Net;
	\item[.] Logistic-Lasso: the binomial model with Lasso computed using glmnet R package \citep{friedman2010regularization};
	\item[.] Logistic-EN: the binomial model with Elastic Net computed using {\it glmnet} R package \citep{friedman2010regularization};
	\item[.] HHSVM$_L$ : HHSVM with Lasso computed using gcdnet R package \citep{yang2013efficient};
	\item[.] HHSVM$_{EN}$: HHSVM with Elastic Net computed using gcdnet R package \citep{yang2013efficient};
	\item[.] SparseSVM-Lasso: the sparse SVM with Lasso using the 
	sparseSVM R package \citep{yi2017semismooth};
	\item[.] SparseSVM-EN: the sparse SVM with Elastic Net using the 
	sparseSVM R package \citep{yi2017semismooth}.
\end{itemize}
Of note, we have used two algorithms to solve the penalized BernSVM optimization problem:  the BSVM-GCD algorithm based on an MM combined with coordinate descent and the BSVM-IRLS algorithm based on 
an IRLS  scheme combined with coordinate descent.
\subsubsection{Simulation results}
%Summing up all the scenarios considered here, we have the following observations:

\begin{table}[H]
\begin{center}
%\begin{center}
\begin{tabular}{ |l|l|l|l|}
  \hline
$\delta$ &  \multicolumn{3}{|c|}{ Time}  \\
    \hline

    & BSVM-GCD &BSVM-IRLS &  HHSVM\\                                                           
    \hline 
0.01&7.50&$\bm{3.62}$&5.84 \\
0.1&2.09&$\bm{1.19}$&1.4\\
0.5&1.55&$\bm{0.53}$&0.86 \\
1&1.60&$\bm{0.61}$&0.77\\
2&2.70&$\bm{0.81}$&0.85\\
\hline

\end{tabular}
\end{center}
\caption{ The run times (in seconds) for  BSVM-GCD, BSVM-IRLS and HHSVM for $\delta = 2 , 1, 0.5 , 0.1, 0.01 $ and $\lambda_2 = 0$ for Scenario 1.}
\end{table}

\begin{figure}[H]
\caption{Comparison of the computation time of the three algorithms as a function of the parameter $\delta$, in Scenario 1}
\begin{center}
\includegraphics[scale=1]{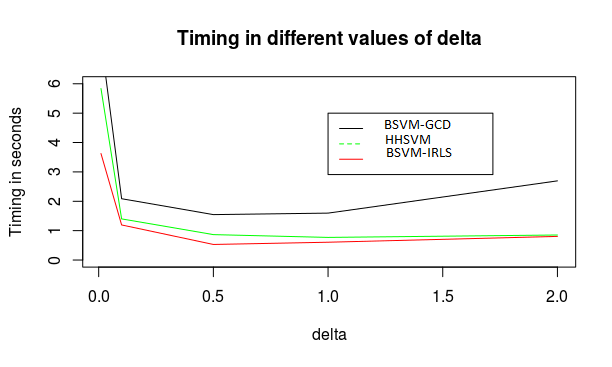} 
\end{center}
\end{figure}  
\noindent Summing up all the scenarios considered here, we have the following observations:

\noindent $\bullet$ Table 1 summarizes the average run time of the HHSVM and the penalized BernSVM with lasso penalty in scenario 1. Figure $1$ provides averaged computation time curve of 100 runs of the three algorithms as a function of $\delta$  values in the same scenario. Our proposed BSVM-IRLS  approach obtains the best computation time followed by 
HHSVM for any $\delta$ value. While BSVM-GCD produces the highest computation time for each delta value. These results are well summarized in Figure 1 by the gaits of the algorithms' computation time curves, where the curve for HHSVM is intermediate between the other two.

$\bullet$ Table 2 summarizes the average run time of the HHSVM and Bernstein penalized SVM with lasso penalty in scenario 2 with different values of $\delta$ and $\rho$. The result is similar to scenario 1 where BSVM-IRLS provides the best run time followed by HHSVM for any value of $\delta$ and $\rho$. It is interesting to notice here that the three algorithms produce their best time for the same $\delta=0.5$ for each value of $\rho$. 

\begin{table}[H]
\begin{center}
%\begin{center}
\begin{tabular}{ |l|l|l|l|l|}
  \hline
 $\rho$&$\delta$ &  \multicolumn{3}{|c|}{Time}  \\
    \hline

  & & BSVM-GCD &BSVM-IRLS &  HHSVM \\
  %\scriptsize{  BernGCD}& \scriptsize{ BernIRLS} &\scriptsize{ HHSVM} &  \scriptsize{BernGCD}&  \scriptsize{BernIRLS} & \scriptsize{HHSVM}\\ 
\hline
&0.01&4.13&$\bm{2.08}$&2.75\\
%0.1&1.325&0.594&0.851&54.117&54.254&54.035&0.422&0.429&0.422  \\
0.2&0.5&1.06&$\bm{0.39}$&0.60 \\
%1&1.067&0.435&0.573&54.086&54.255&54.233&0.411&0.409&0.424 \\
&2&1.86&$\bm{0.55}$ & 0.65 \\
\hline
&0.01&7.70&$\bm{3.52}$&5.50\\
%0.1&2.006&1.158&1.311&53.164&53.342&53.077&0.39&0.426&0.401 \\
0.5&0.5&1.56&$\bm{0.53}$&0.90\\
%1&1.567&0.624&0.839&53.111&53.085&53.56&0.399&0.413&0.412  \\
&2&2.91&$\bm{0.85}$&0.96 \\

\hline
&0.01&11.25&$\bm{4.76}$&7.41 \\
%0.1&3.119&2.719&2.304&54.115&54.236&54.036&0.352&0.357&0.34 \\
0.75&0.5&2.31&$\bm{0.9}$&1.38 \\
%1&2.211&0.909&1.107&54.163&53.996&54.319&0.35&0.341&0.333 \\
&2&4.26&$\bm{1.20}$&1.37 \\

\hline
&0.01&14.69&$\bm{5.84}$&11.31 \\
%0.1&3.738&5.154&2.87&54.932&54.689&55.337&0.286&0.305&0.293 \\
0.9&0.5&2.41&$\bm{1.31}$&1.37\\
%1&2.48&1.35&1.164&55.192&54.817&55.043&0.287&0.307&0.293  \\ 
&2&5.41&$\bm{1.77}$&1.87  \\
\hline

\end{tabular}
\end{center}
\caption{ Timings (in seconds) for the HHSVM and the penalized BernSVM for $\delta = 0.01 , 0.5 , 2 $ and $\lambda_2 = 0$ for different values of $\rho = 0.2,0.5,0.75,0.9$ for Scenario 2} 
\end{table}
   
$\bullet$ Table $3$ report the average of the five performance measures of our methods and the competitor's on Scenario 3, for  $\xi = 0.05,0.3$ and $\rho = 0.2,0.5,0.8$. The results obtained for scenario 3 can be summarized as follows:
\begin{itemize}
	\item[-] For a fixed number of active variables, increasing rho leads to a decrease of about $5-14\%$ in MR, a growth of about $21-40\%$ in $SE$, and a slight increase in $RC$ for methods using the EN penalty(except SparseSVM-EN). While all methods are relatively stable in terms of $SP$ and $PR$ to any increase in $\rho$ and/or in $\xi$.
	\item[-] On the other hand, for any fixed value of $\rho$ and for any increase in $\xi$, $MR$ decreases by about $8-16\%$, $SE$ decreases by about $8-40\%$ and $RC$ increases slightly for only the methods based on the EN penalty and increases greatly of about $11-28\%$ for BernSVM-EN and HHSVM$_{EN}$.
	\item[-] We observe that the methods are relatively comparable in terms of $MR$ with a slight advantage to those using the EN penalty for any pair $(\rho,\xi)$. While for any fixed $\rho$, methods based on standard penalized logistic and SVM provide good performance than the others in terms of $SE$ for any value of $\xi$. The difference with methods based on the smooth approximation of the hinge loss function decreases for increasing the value of $\rho$ and a fixed value of $\xi$.
 In addition, the previous methods are still relatively better in terms of $SP$. While the methods with the lasso penalty (except SparseSVM) and logistic regression with EN penalty are better in terms of $RC$ for any pair $(\rho,\xi)$. Finally, the methods are relatively comparable in terms of $PR$.  
\end{itemize}
Therefore, it can be concluded that our approach produced satisfactory and relatively similar results to HHSVM. 
\subsection{Empirical study}
To illustrate the effectiveness of our BernSVM approach, we also compare classification accuracy, sensitivity and specificity of the latter methods on three large-scale samples of real data sets. 
%In this section, we consider three datasets to compare classification accuracy and the timing of our method and the HHSVM.
The first data set is the DNA methylation measurements studied in \citep{kharoubi2019cluster}, 
and two standard data sets, usually used to evaluate classifiers performance in the literature, namely, the prostate cancer \citep{singh2002gene} and the leukemia  \citep{golub1999molecular} data sets. \\
The DNA methylation measurements are collected around the {\it BLK} gene located in chromosome 8 to detect differentially methylated regions (DMRs, refer to genomic regions with significantly different methylation levels between two groups of samples, e.g. : cases-controls). The dataset contains measurements of DNA methylation levels of 40 different samples of cell types:  B cells (8 samples), T cells (19 samples), and monocytes (13 samples). 
Methylation levels are measured in, $p=5896$, CpG sites (predictors). Since methylation levels are known to be different between B-cell types compared to the T- and Monocyte-cell types around the {\it BLK} gene \citep{kharoubi2019cluster}, we code the cell types as $y=\{-1,1\}$ binary response, with $y=1$ corresponds to B-cells and $y=-1$ corresponds to T- and Monocyte-cell types.
\noindent The second data-set is available in a publicly R package, spls. The prostate data consists of $n = 102$ subjects, 52 patients with prostate tumor and 50 patients used as control subjects.
The data contains expression of $p = 6033$ genes across the subjects.
We code the response variable $y=\{-1,1\}$ binary response, with $y=1$ corresponds to subjects with 
tumor prostate and $y = -1$ corresponds to normal subjects.
\noindent The third one contains $n=72$ observations and $p = 6817$ predictors and comes from a study of gene expression in two types of acute leukemias: acute lymphoblastic leukemia (ALL) and acute myeloid leukemia (AML). Gene expression levels were measured using Affymetrix high density oligonucleotide arrays. Subjects with AML are coded as $y=-1$ class and subjects with ALL as $y=1$ class. In total, we have 25 subjects in class 1 and 47 subjects in class -1. All the datasets were subjects to pre-screening procedure using a two sample t-tests \citep{storey2003statistical} and only $1000$ predictors are retained.

\noindent For the Elastic Net regularization, we take $\lambda_2 = \alpha = 3/4$.
The whole process was repeated 100 times and for each data-set we train the methods using 40$\%$ of the observations to perform a $10$ cross-validation to choose regularization parameters of each method and 60$\%$ of the observations to test the methods and to compute the performance accuracy measures. 
The value of $\delta$ is fixed to 2 for BernSVM and HHSVM.

\noindent We see from Table 4 that on the prostate data, the methods with the EN penalty provide a better result in terms of MR and SP (except SparseSVM). While in terms of SE, all methods provided a comparable result with little advantage to logistic with lasso penalty. For leukemia data, methods with EN penalty (except SparseSVM) dominate for all three performance measures.  For methylation data, the methods with EN penalty produce the best result in terms of MR. While SparseSVM-EN followed by Logistic-EN dominate in terms of SE. More over all methods are relatively comparable in terms of SP, except SparseSVM-Lasso.

\begin{table}[H]
\begin{center}
\begin{tabular}{|c |c|c c c c c|c c c c c|} 
 \hline
& &\multicolumn{5}{|c|}{$\xi = 0.05$ } & \multicolumn{5}{|c|}{$\xi = 0.3$ }\\
 \hline 
$\rho$& Method &MR &SE&SP&RC&PR &MR &SE&SP&RC&PR\\
 \hline
&BernSVM-Lasso&0.32&0.14&0.96&0.04&0.06 &0.22&0.39&0.95&0.04&0.33 \\
&BernSVM-EN&0.30&0.23&0.94&0.44&0.05 &0.163&0.556&0.96&0.576&0.304 \\
&Logistic-Lasso&0.30&0.34&0.89&0.03&0.10 &0.19&0.52&0.93&0.03&0.35 \\
0.2& Logistic-EN&0.29&0.34&0.90&0.05&0.094 &0.17&0.58&0.94&0.05&0.34 \\
&HHSVM$_L$&0.32&0.13&0.95&0.04&0.09 &0.22&0.37&0.96&0.04&0.35  \\
&HHSVM$_{EN}$&0.31&0.18&0.95&0.40&0.06 &0.17&0.52&0.97&0.57&0.30 \\
&SparseSVM-Lasso&0.29&0.34&0.89&0.19&0.06 &0.16&0.61&0.94&0.25&0.30 \\
&SparseSVM-EN&0.29&0.34&0.9&0.20&0.06 &0.16&0.63&0.94&0.26&0.31  \\
\hline
&BernSVM-Lasso&0.25&0.35&0.94&0.03&0.07 &0.12&0.72&0.95&0.04&0.31 \\
&BernSVM-EN&0.23&0.38&0.96&0.43&0.05 &0.08&0.8&0.96&0.54&0.30 \\
&Logistic-Lasso&0.22&0.54&0.90&0.03&0.07 &0.11&0.78&0.94&0.02&0.31  \\
0.5&Logistic-EN&0.21&0.55&0.91&0.04&0.07 &0.09&0.81&0.95&0.05&0.32 \\
&HHSVM$_L$&0.26&0.30&0.95&0.03&0.08 &0.13&0.68&0.95&0.03&0.31 \\
&HHSVM$_{EN}$&0.24&0.34&0.96&0.35&0.06 &0.08&0.79&0.97&0.56&0.31 \\
&SparseSVM-Lasso&0.22&0.52&0.90&0.22&0.06 &0.11&0.75&0.94&0.22&0.32 \\
&SparseSVM-EN&0.23&0.52&0.90&0.23&0.06 &0.11&0.73&0.96&0.22&0.31  \\
\hline
&BernSVM-Lasso&0.19&0.54&0.95&0.02&0.06 &0.08&0.81&0.97&0.03&0.31 \\
&BernSVM-EN&0.17&0.57&0.95&0.43&0.05 &0.05&0.88&0.98&0.55&0.31 \\
&Logistic-Lasso&0.17&0.67&0.91&0.01&0.06 &0.07&0.87&0.96&0.02&0.31 \\
0.8&Logistic-EN&0.16&0.68&0.92&0.03&0.06 &0.05&0.89&0.97&0.05&0.32  \\
&HHSVM$_L$&0.20&0.50&0.95&0.01&0.05 &0.08&0.81&0.97&0.02&0.32  \\
&HHSVM$_{EN}$&0.19&0.50&0.96&0.31&0.06 &0.05&0.88&0.98&0.59&0.31 \\
&SparseSVM-Lasso&0.19&0.60&0.91&0.30&0.06 &0.11&0.78&0.94&0.33&0.32\\
&SparseSVM-EN&0.19&0.61&0.90&0.32&0.06 &0.10&0.78&0.95&0.34&0.31  \\
\hline
\end{tabular}
\end{center}
\caption{Average of the performance measures of our methods and the competitor's on Scenario 3 for  $\xi = 0.05,0.3$ and $\rho = 0.2,0.5,0.8$.}
\end{table}

\begin{table}
\begin{tabular}{ |l|l l l|l l l|l l l |}
  \hline
Data    & \multicolumn{3}{|c|}{ Prostate} & \multicolumn{3}{|c|}{Leukemia} & \multicolumn{3}{|c|}{Methylation} \\
  \hline \hline
  Method   & MR&SE&SP  & MR&SE&SP     &                MR&SE&SP \\ 
    \hline
BernSVM-Lasso&0.08&0.91&0.93& 0.07&0.89&0.94 &0.13&0.28&0.98 \\
BernSVM-EN&0.08&0.9&0.94 & 0.04&0.95&0.96 &0.08&0.68&0.97  \\
Logistic-Lasso&0.09&0.92&0.91 &0.06&0.94&0.94 &0.1&0.57&0.96  \\
Logistic-EN&0.08&0.91&0.92 &0.04&0.95&0.96 &0.07&0.74&0.97    \\
HHSVM$_L$&0.09&0.9&0.92 &0.08&0.89&0.93 &0.12&0.35&0.98   \\
HHSVM$_{EN}$&0.08&0.9&0.94 &0.05&0.95&0.95 &0.09&0.55&0.97   \\
SparseSVM-Lasso&0.12&0.91&0.85 &0.08&0.92&0.92 &0.18&0.39&0.91  \\
SparseSVM-EN&0.12&0.91&0.86 &0.09&0.94&0.9 &0.07&0.78&0.97 \\
\hline
\end{tabular}
%\end{center}
\caption{Average of the three performance measures on three real data sets}
\end{table}
\section{Discussion}
\label{s5}
In this work we aimed to better investigate the binary classification in high dimensional setting using the SVM classifier. We proposed a new smooth loss function, called BernSVM, to approximate the SVM hinge loss function. The BernSVM loss function has nice properties, which allows for efficient implementation of the penalized SVM and helps to derive appropriate theoretical results of the model parameter estimators. We have proposed two algorithms to solve the penalized BernSVM. The first one termed BSVM-GCD and combines the coordinate descent algorithm and the MM principle. The second one, called BSVM-IRLS, and uses an IRLS-type algorithm to solve the underlying optimization problem. We have also derived non asymptotic results of the penalized BernSVM estimator with weighted Lasso, with known weights, and we have extended this result for unknown weights. Furthermore, we have showed that the estimation error of penalized BernSVM with weighted Lasso achieves a rate of order $\sqrt{s\log(p)/n}$. We compared our approach with its competitors through a simulation study and the results showed that our method outperforms its competitors in terms of the computational time while maintaining good performance in terms of prediction and variable selection. The proposed work have shown also accurate results when analyzing three high-dimensional real data sets. 
The penalized BernSVM is implemented in an R package BSVM, which is publicly available for use from Github (lien). \\
%In one hand, the BernSVM loss function can be used with with convex and non-convex penalties. In the other hand, we can used BernSVM loss function with group penalties (such as group Lasso) and the diversity penalty. In our future work we will use this loss function to investigate the ensemble SVM \citep{dietterich2000ensemble}.   
\noindent 
Finally, given the attractive properties of the BernSVM approach to approximate penalized SVM efficiently, one can adapt the proposed method for group penalties, such as group-Lasso/SCAD/MCP. Moreover, recently an interesting penalty has been developed \citep{christidis2021data}, which fits linear regression models that splits the set of covariates into groups using a new diversity penalty. The authors provided also interesting properties of their proposed estimator. Adaptation of the diversity penalty to penalized SVM using the proposed BernSVM approach would be an interesting avenue to investigate. This is left for future work.
  
\section{Appendix}
%%%%%%%%%%%%%%%%%%%%%%%%%%%%%%%%%%%%%%%%%%%
\subsection{Appendix $\bm{A}$ }
\begin{proof} (Theorem 1)
	
\noindent Recall the equivalent problem where
$$
q_{\delta}(x)=g_{\delta}(2\delta x+1-\delta)\,\ x\in\ [0,\ 1],
$$
must satisfy the alternative three Conditions C1', C2' and C3'. Now write
$$
q_{\delta}(x)=\sum_{k=0}^{4}c(k,\ 4;q_{\delta})b_{k,4}(x)\ ,\ x\in\ [0,\ 1],
$$
%\newpage
\noindent and notice that Condition C1' implies, $c(0,4;q_{\delta}) = \delta, c(1,4;q_{\delta}) = \delta/2$ and $c(2,4;q_{\delta}) = 0.$ This will be  seen from the properties of the Bernstein basis above. On the other hand, Condition C2' implies that $c(3,4;q_{\delta})=0=c(4,4;q_{\delta})$. Therefore, we have
$$
q_{\delta}(x)=\delta b_{0,4}(x)+(\delta/2)b_{1,4}(x)=\delta(1-x)^{4}+2\delta x(1-x)^{3},\ x\in\ [0,\ 1].
$$
Notice that Conditions C1' and C2' uniquely determine $q_{\delta}$. It remains to show that it is convex. We have
$$
c(0,2;q_{\delta}) =12\triangle^{2}c(0,4;q_{\delta})=0,\ c(1,2;q_{\delta}) =12\triangle^{2}c(1,4;q_{\delta})=6\delta,
$$
and
$$
c(2,2;q_{\delta}) =12\triangle^{2}c(2,4;q_{\delta})=0.
$$
This shows that $q_{\delta}^{''}(x) > 0$ for all $x \in (0,1)$  and in particular, the Condition C3' is satisfied. 

\noindent In order to compute the derivatives of $g_{\delta}$, we simply use the Bernstein basis properties (again) together with the chain rule as follows
$$ g_{\delta}^{'}(t)=q_{\delta}^{'}((t-1+\delta)/2\delta)/2\delta,$$ 
and 
$$
g_{\delta}^{''}(t)=q_{\delta}^{''}((t-1+\delta)/2\delta)/4\delta^{2},
$$
with
$$
q_{\delta}(x)=\delta b_{0,4}(x)+(\delta/2)b_{1,4}(x)=\delta(1-x)^{4}+2\delta x(1-x)^{3},\ x\in\ [0,\ 1].
$$
Thus, one has 
$$
q_{\delta}^{'}(x)=-2\delta\{b_{0,3}(x)+b_{1,3}(x)\}=-2\delta\{(1-x)^{3}+3x(1-x)^{2}\},
$$
and
$$
q_{\delta}^{''}(x)=6\delta b_{1,2}(x)=12\delta x(1-x). \quad \blacksquare
$$	
\end{proof}
%%%%%%%%%%%%%%%%%%%%%%%%%%%%%%%%%%%%%%%%%%%%%%%%%%%%%
\begin{proof}{(Proposition 3)}

\noindent For the AEN penalty $P_{\lambda_1,\lambda_2}(\beta_j)$, the surrogate function in (\ref{obj}) can be written, for all $j = 1,\ldots,p$, as follows 
$$Q_{\delta}(\beta_j|\beta_0,\tilde{\beta}_j) = \frac{\sum_{i=1}^{n}B_{\delta}(r_i)}{n} + 
\frac{\sum_{i=1}^{n}B_{\delta}^{'}(r_i)y_ix_{ij}}{n}(\beta_j - \tilde{\beta}_j) + \frac{L}{2}(\beta_j - \tilde{\beta}_j)^2 + \lambda_1 w_j |\beta_j|+ \frac{\lambda_2}{2} \beta_j^2.$$ 
Its first derivative is given by 
$$\frac{\partial{Q_{\delta}(\beta_j|\beta_0,\tilde{\beta}_j)}}{\partial{\beta_j}} = \frac{\sum_{i=1}^{n}B_{\delta}^{'}(r_i)y_ix_{ij}}{n} + L (\beta_j-\tilde{\beta}_j ) + \lambda_1 w_j \textit{sign}(\beta_j) + \lambda_2 \beta_j.$$
Then, $\frac{\partial{Q_{\delta}(\beta_j|\beta_0,\tilde{\beta}_j)}}{\partial{\beta_j}} = 0$ implies that 
$\omega\beta_j = Z_j - w_j \textit{sign}(\beta_j)$. Hence, Equation in (\ref{ada}) is obtained using the soft-threshold function $S(a,b)$. Equation (\ref{Elastic}) is a sample case of Equation (\ref{ada}) when the weights $w_j = 1$, for all $j=1\ldots p$.

\end{proof}
%%%%%%%%%%%%%%%%%%%%%%%%%%%%%%%%%%%%%%%%%ùù
\subsection{Appendix $\bm{B}$}
\begin{proof} (Lemma 1) 

\noindent Under assumption $\bm{A} 2$ on $\bm{B}(\bm{x}_0,r_0)$, we have 
$$
\bm{H} = \frac{\bm{X}^{\top}\bm{D}\bm{X}}{n} \succeq  \frac{\kappa_3{\bm{X}^{\top}\bm{X}} }{n},
$$
because $\bm{D} \succeq \kappa_3 \bm{I}$, where $\bm{M}\succeq \bm{N}$ means that $\bm{M} - \bm{N}$ is 
positive semi-definite.
\noindent The true parameter is sparse, then by Cauchy-Schwarz, we have $$||\bm{h}_{S}||_1 = \sum_{j\in \bm{S}} |h_j|. 1 \leq ( \sum_{j\in \bm{S}} |h_j|^2)^{1/2}  (\sum_{j\in \bm{S}} 1)^{1/2} = \sqrt{s} ||\bm{h}_{S}||_2 \leq \sqrt{s} ||\bm{h}||_2.$$
\noindent We have also $\bm{h} \in \mathcal{C}$, so we have $||\bm{W}_{S^{c}}\bm{h}_{S^{c}}||_1\le \gamma||\bm{h}_{S}||_1$. Thus, one has
\begin{eqnarray*}
||\bm{h}||_1 &=&||\bm{h}_{S}||_1 + ||\bm{h}_{S^{c}}||_1 \\
&=& ||\bm{h}_{S}||_1 + ||\bm{W}_{S^{c}}^{-1}\bm{W}_{S^{c}}\bm{h}_{S^{c}}||_1 \\
&\leq & ||\bm{h}_{S}||_1 + ||\bm{W}_{S^{c}}^{-1}||_{\infty} ||\bm{W}_{S^{c}}\bm{h}_{S^{c}}||_1 \\
&\leq &   ||\bm{h}_{S}||_1 + \gamma ||\bm{W}_{S^{c}}^{-1}||_{\infty} ||\bm{h}_{S}||_1 \\
&= & (1 + \gamma ||\bm{W}_{S^{c}}^{-1}||_{\infty} ) ||\bm{h}_{S}||_1 \\
&\leq & \sqrt{s} (1 + \gamma ||\bm{W}_{S^{c}}^{-1}||_{\infty} )||\bm{h}||_2.  \\
\end{eqnarray*}

\noindent Then, one has $$- \frac{\kappa_2\log(p)}{n}||\bm{h}||_1^2 \geq - \frac{\kappa_2  s\log(p)(1 + \gamma ||\bm{W}_{S^{c}}^{-1}||_{\infty} )^2}{n}||\bm{h}||_2^2.$$

\noindent Under Assumption \textbf{A}1, we have also
\begin{eqnarray*}
\frac{\kappa_3||\bm{X}\bm{h}||_2^2}{n} &\geq& \kappa_3 \kappa_1 ||\bm{h}||_2^2 - \kappa_3 \kappa_2 \frac{\log(p)}{n}||\bm{h}||^2_1 \\
&\geq& \kappa_3||\bm{h}||_2^2(\kappa_1 - \frac{\kappa_2 s \log(p)(1 + \gamma ||\bm{W}_{S^{c}}^{-1}||_{\infty} )^2}{n}). \\
\end{eqnarray*} 

\noindent Then, if we take 
$$
n > \frac{2\kappa_2 s \log(p)(1 + \gamma ||\bm{W}_{S^{c}}^{-1}||_{\infty} )^2}{\kappa_1},$$ 
we obtain
$$\kappa_1 - \frac{\kappa_2 s \log(p)(1 + \gamma ||\bm{W}_{S^{c}}^{-1}||_{\infty} )^2}{n} > \kappa_1 - \kappa_1/2 = \kappa_1/2.$$

\noindent Thus, we have 
$$\frac{\kappa_3||\bm{X}\bm{h}||_2^2}{n} \geq \frac{\kappa_3\kappa_1}{2}||\bm{h}||_2^2,$$
which means that the Hessian 
matrix $\bm{H}$ satisfies the RSC with $\kappa = \frac{\kappa_3\kappa_1}{2}$. \quad 
$\blacksquare$
\end{proof}
\begin{proof} (Theorem 2)

\noindent Let $ \bm{h} = \hat{\bm{\beta}} - \bm{\beta}^{\star}$. 
Firstly, we prove that the Lasso BernSVM error satisfies the cone constraint given by 
$$\mathcal{C}(S) = \{\bm{h}\in \mathbb{R}^{p}:||\bm{W}_{\bm{S}^{c}}\bm{h}_{\bm{S}^{c}}||_1 \leq  \gamma||\bm{h}_S||_1 \}.$$

\noindent We have, $L(\hat{\bm{\beta}} ) \leq L(\bm{\beta}^{\star})$ implies that 
$L(\bm{h} + \bm{\beta}^{\star}) \leq L(\bm{\beta}^{\star})$.Thus,

%\noindent Let $\bm{\beta}^{\bm{0}} = \bm{W}\bm{\beta}^{\star}$ and $\bm{h} = \bm{W}\tilde{\bm{h}}$.

$$\frac{1}{n} \sum_{i=1}^{n}B_{\delta}(y_i\bm{x}_i^{\top}(\bm{h} + \bm{\beta}^{\star})) + \lambda_1 ||\bm{W}(\bm{h} + \bm{\beta}^{\star})||_1 \leq \frac{1}{n} \sum_{i=1}^{n}B_{\delta}(y_i\bm{x}_i^{\top} \bm{\beta}^{\star})) + \lambda_1 ||\bm{W}\bm{\beta}^{\star}||_1,$$
which implies that
\begin{equation}
\label{ineq1}
\frac{1}{n} \sum_{i=1}^{n}B_{\delta}(y_i\bm{x}_i^{\top}(\bm{h} + \bm{\beta}^{\star}))- \frac{1}{n} \sum_{i=1}^{n}B_{\delta}(y_i\bm{x}_i^{\top} \bm{\beta}^{\star}))   \leq  \lambda_1( ||\bm{W}\bm{\beta}^{\star}||_1 - ||\bm{W}(\bm{h} + \bm{\beta}^{\star})||_1).
\end{equation}
As the BernSVM loss function is twice differentiable, we can apply a second order Taylor expansion
\begin{equation}
\label{Taylor}
\frac{1}{n} \sum_{i=1}^{n}B_{\delta}(y_i\bm{x}_i^{\top}(\bm{h} + \bm{\beta}^{\star})) - \frac{1}{n} \sum_{i=1}^{n}B_{\delta}(y_i\bm{x}_i^{\top} \bm{\beta}^{\star})) = \frac{1}{n} \sum_{i=1}^{n}B_{\delta}^{'}(y_i\bm{x}_i^{\top}\bm{\beta}^{\star})y_i\bm{x}^{\top}_i\bm{h} + \frac{\bm{h}^{\top}\bm{H}\bm{h}}{2}.
\end{equation}
%\noindent Let $z_j = \frac{1}{n} \sum_{i=1}^{n} - B_{\delta}^{'}(y_i\bm{x}_i^{\top}\bm{\beta}^{\star})y_i x_{ij}$.
%\begin{lemma}[Hoeffding's lemma 1963]
%Let $z$ be a random variable such that $\mathcal{E}[z] = 0$ and $z \in [a,b]$ almost surely. Then 
%for any $t \in \mathbb{R}$, it holds 
%$$\mathbb{E}[e^{tz}] \leq e^{\frac{t^2(b-a)^2}{8}}.$$ 
%In particular, $z \sim subG(\frac{(b-a)^2}{4})$.
%\end{lemma}
%\noindent Hence by applying lemma (5), we conclude that $z_j$ is a sub-Gaussian with 
%variance $\frac{M^2}{n}$.

\noindent We can see that 
\begin{eqnarray*}
\frac{1}{n} \sum_{i=1}^{n} - B_{\delta}^{'}(y_i\bm{x}_i^{\top}\bm{\beta}^{\star})y_i \bm{x}^{\top}_i\bm{h}&\leq& \frac{1}{n} \sum_{i=1}^{n}|B_{\delta}^{'}(y_i\bm{x}_i^{\top}\bm{\beta}^{\star})y_i \bm{x}^{\top}_i\bm{h}|\\
& \leq & \frac{1}{n} \sum_{i=1}^{n} |\bm{x}^{\top}_i\bm{h}| \quad \text{because} \quad |B_{\delta}^{'}(y_i\bm{x}_i^{\top}\bm{\beta}^{\star})y_i| = |B_{\delta}^{'}(y_i\bm{x}_i^{\top}\bm{\beta}^{\star})| \leq 1 \\
&= & \frac{1}{n} \sum_{i=1}^{n} |\sum_{j=1}^{p} x_{ij}h_j |\\
&\leq & \sum_{j=1}^{p} (\frac{1}{n} \sum_{i=1}^{n} |x_{ij}|)|h_j|\\
&\leq &  \sum_{j=1}^{p} (\frac{1}{\sqrt{n}} \sqrt{\sum_{i=1}^{n} |x_{ij}|^2})|h_j| \\
&\leq & \frac{M}{\sqrt{n}} ||\bm{h}||_1,  \quad \text{because} \quad ||\bm{x}_{j}||_2 \leq M, \quad \forall  j.\\
\end{eqnarray*}
%\noindent The third inequality by applying Cauchy-Schwartz and the forth inequality by the first assumption. 

\noindent Then, (\ref{ineq1}) and (\ref{Taylor}) give
\begin{eqnarray*}
\frac{\bm{h}^{\top}\bm{H}\bm{h}}{2} &\leq& \frac{1}{n} \sum_{i=1}^{n} - B_{\delta}^{'}(y_i\bm{x}_i^{\top}\bm{\beta}^{\star})y_i\bm{x}^{\top}_i\bm{h} + \lambda_1(||\bm{W}\bm{\beta}^{\star}||_1 - ||\bm{W}(\bm{h} + \bm{\beta}^{\star})||_1),\\
%&=& \frac{1}{n}\sum_{j=1}^{p} \sum_{i=1}^{n} - B_{\delta}^{'}(y_i\bm{x}_i^{\top}\bm{\beta}^{\star})y_i x_{ij} h_j + \lambda_1( ||\hat{\bm{W}}\bm{\beta}^{\star}||_1 - ||\hat{\bm{W}}(\bm{h} + \bm{\beta}^{\star})||_1),\\
%&=& \sum_{j=1}^{p} z_j h_j + \lambda_1( ||\hat{\bm{W}}\bm{\beta}^{\star}||_1 - ||\hat{\bm{W}}(\bm{h} + \bm{\beta}^{\star})||_1),\\
%&\leq& \sum_{j=1}^{p} |z_j|  |h_j| + \lambda_1( ||\hat{\bm{W}}\bm{\beta}^{\star}||_1 - ||\hat{\bm{W}}(\bm{h} + \bm{\beta}^{\star})||_1),\\
%&\leq& \sum_{j=1}^{p} \max_{\{j=1,...,p\}}\{|z_j|\}  |h_j| + \lambda_1(||\bm{\beta}^{\star}||_1 - ||\bm{\beta}^{\star} + \bm{h}||_1 ),\\
%&=&  \max_{\{j=1,...,p\}}\{|z_j|\} \sum_{j=1}^{p}  |h_j| +\lambda_1( ||\hat{\bm{W}}\bm{\beta}^{\star}||_1 - ||\hat{\bm{W}}(\bm{h} + \bm{\beta}^{\star})||_1),\\
&\leq&  \frac{M}{\sqrt{n}} ||\bm{h}||_1 + \lambda_1(||\bm{W}\bm{\beta}^{\star}||_1 - ||\bm{W}(\bm{h} + \bm{\beta}^{\star})||_1).\\
\end{eqnarray*}
%Let $\bm{\beta}^{\bm{0}} = \bm{W}\bm{\beta}^{\star} $ and $\tilde{\bm{h}} = \bm{W}\bm{h}$.

%\noindent We have by applying the triangular inequality  $||\bm{\beta}^{\bm{0}}||_1 - ||\bm{\beta}^{\bm{0}} + \tilde{\bm{h}}||_1 = ||\bm{\beta}^{\bm{0}}_S||_1 - ||\bm{\beta}^{\bm{0}}_S + \tilde{\bm{h}}_S||_1 - ||\tilde{\bm{h}}_{S^{c}}||_1 \leq || \tilde{\bm{h}}_S||_1 - ||\tilde{\bm{h}}_{S^{c}}||_1$.

\noindent Moreover, we have 
\begin{eqnarray*}
||\bm{W}\bm{\beta}^{\star}||_1 - ||\bm{W}(\bm{h} + \bm{\beta}^{\star})||_1 &=& 
||\bm{W}_{\bm{S}}\bm{\beta}^{\star}_{\bm{S}}||_1 - ||\bm{W}_{\bm{S}}(\bm{h}_{\bm{S}} + \bm{\beta}^{\star}_{\bm{S}})||_1 - ||\bm{W}_{\bm{S}^{c}}\bm{h}_{\bm{S}^{c}}||_1, \\
&\leq & ||\bm{W}_{\bm{S}}\bm{h}_{\bm{S}}||_1 - ||\bm{W}_{\bm{S}^{c}}\bm{h}_{\bm{S}^{c}}||_1, \\
\end{eqnarray*}
where we have used in the first equality  $\bm{\beta}^{\star}_{\bm{S}^{c}} = \bm{0}$, and the second inequality is based on the fact that  
$$||\bm{W}_{\bm{S}}\bm{\beta}^{\star}_{\bm{S}}||_1 - ||\bm{W}_{\bm{S}}(\bm{h}_{\bm{S}} + \bm{\beta}^{\star}_{\bm{S}})||_1 \leq |||\bm{W}_{\bm{S}}\bm{\beta}^{\star}_{\bm{S}}||_1 - ||\bm{W}_{\bm{S}}(\bm{h}_{\bm{S}} + \bm{\beta}^{\star}_{\bm{S}})||_1| \leq ||\bm{W}_{\bm{S}}\bm{h}_{\bm{S}}||_1.$$
\noindent Thus,  we obtain %in the event $\bm{\Omega}_0$,
\begin{eqnarray*}
 0 &< &  \frac{\bm{h}^{\top}\bm{H}\bm{h}}{2}  \\
 &\leq& \frac{M}{\sqrt{n}} ||\bm{h}||_1 +\lambda_1 (||\bm{W}_{\bm{S}}\bm{h}_{\bm{S}}||_1 - ||\bm{W}_{\bm{S}^{c}}\bm{h}_{\bm{S}^{c}}||_1).\\
\end{eqnarray*}

%\noindent However, the lemma 3 implies that for some $t>0$ we have $\max_{\{j=1,...,p\}}\{|z_j|\} \leq t$ with probability $1-2p\exp(-t^2/2\sigma^2)$, where $\sigma^2  = \frac{M^2}{n}$. 

\noindent Because the right hand of inequality is positive, we have, with probability $1-\exp(-c_0 n)$, that 
\begin{eqnarray*}
0 &\leq& \frac{M}{\sqrt{n}}||\bm{h}||_1 + \lambda_1(||\bm{W}_{\bm{S}}\bm{h}_{\bm{S}}||_1 - ||\bm{W}_{\bm{S}^{c}}\bm{h}_{\bm{S}^{c}}||_1)\\
& = & \frac{M}{\sqrt{n}}||\bm{h}_{\bm{S}}||_1 + \frac{M}{\sqrt{n}}||\bm{h}_{\bm{S}^{c}}||_1 + \lambda_1(||\bm{W}_{\bm{S}}\bm{h}_{\bm{S}}||_1 - ||\bm{W}_{\bm{S}^{c}}\bm{h}_{\bm{S}^{c}}||_1)\\
&=&  \frac{M}{\sqrt{n}}||\bm{h}_{\bm{S}}||_1 + \frac{M}{\sqrt{n}}||\bm{W}_{\bm{S}^{c}}^{-1}\bm{W}_{\bm{S}^{c}}\bm{h}_{\bm{S}^{c}}||_1 + \lambda_1(||\bm{W}_{\bm{S}}\bm{h}_{\bm{S}}||_1 - ||\bm{W}_{\bm{S}^{c}}\bm{h}_{\bm{S}^{c}}||_1)\\
&\leq &  \frac{M}{\sqrt{n}}||\bm{h}_{\bm{S}}||_1 +  \frac{M}{\sqrt{n}}||\bm{W}_{\bm{S}^{c}}^{-1}||_{\infty}||\bm{W}_{\bm{S}^{c}}\bm{h}_{\bm{S}^{c}}||_1 + \lambda_1||\bm{W}_{\bm{S}}||_{\infty}||\bm{h}_{\bm{S}}||_1 - \lambda_1||\bm{W}_{\bm{S}^{c}}\bm{h}_{\bm{S}^{c}}||_1\\
& = & (\frac{M}{\sqrt{n}} + \lambda_1 ||\bm{W}_{\bm{S}}||_{\infty})||\bm{h}_{\bm{S}}||_1 - (\lambda_1 -  \frac{M}{\sqrt{n}}||\bm{W}_{\bm{S}^{c}}^{-1}||_{\infty})||\bm{W}_{\bm{S}^{c}}\bm{h}_{\bm{S}^{c}}||_1.\\
\end{eqnarray*}
This implies that the Lasso BernSVM error satisfies the cone constraint given by 
$$\mathcal{C}(S) = \{\bm{h}\in \mathbb{R}^{p}:||\bm{W}_{\bm{S}^{c}}\bm{h}_{\bm{S}^{c}}||_1 \leq  \gamma||\bm{h}_S||_1 \},$$
because $\lambda_1 \geq  \frac{M(1+\gamma ||\bm{W}_{\bm{S}^{c}}^{-1}||_{\infty})}{\sqrt{n}(\gamma - ||\bm{W}_{\bm{S}}||_{\infty})}$ for any $\gamma > ||\bm{W}_{\bm{S}}||_{\infty}$.
%\noindent However, we need another assumption to have $\lambda_1 >  ||\bm{W}_{\bm{S}^{c}}^{-1}||_{\infty}$. 

\noindent Thus, we have 
 that the error $\bm{h}$ belongs to the set $\mathcal{C}(S) $. 

%\noindent \noindent Let $\alpha = \max\{||\bm{W}_{\bm{S}}||_{\infty},||\bm{W}_{\bm{S}^{c}}^{-1}||_{\infty}^{-1})\}$.
\noindent Then, we derive the upper bound of the error $\bm{h}$.
From the inequality 
above  and the definition of RSC
%Assumption \textbf{A}2 
we have 
\begin{eqnarray*}
\kappa ||\bm{h}||_2^2 &\leq& (\frac{M}{\sqrt{n}} + \lambda_1 ||\bm{W}_{\bm{S}}||_{\infty}) ||\bm{h}_S||_1  -  (\lambda_1 -  \frac{M}{\sqrt{n}}||\bm{W}_{\bm{S}^{c}}^{-1}||_{\infty}||) ||\bm{h}_{S^{c}}||_1\\
&\leq& (\frac{M}{\sqrt{n}} + \lambda_1 ||\bm{W}_{\bm{S}}||_{\infty}) ||\bm{h}_S||_1, \\
%&\leq& (1 + \lambda_1 \alpha) ||\bm{h}_S||_1\\
&\leq & (\frac{M}{\sqrt{n}} + \lambda_1 ||\bm{W}_{\bm{S}}||_{\infty})\sqrt{s}||\bm{h}||_2.\\
\end{eqnarray*}
The second inequality comes from the fact that $(\lambda_1 -  \frac{M}{\sqrt{n}}||\bm{W}_{\bm{S}^{c}}^{-1}||_{\infty}||)> 0$. Thus, we obtain an upper bound of the error $\bm{h} = \hat{\bm{\beta}} - \bm{\beta}^{\star}$ given by 

$$||\bm{h}||_2 \leq \frac{(\frac{M}{\sqrt{n}} +  ||\bm{W}_{\bm{S}}||_{\infty} \lambda_1)\sqrt{s}}{\kappa}. \quad \blacksquare $$
\end{proof}
\begin{proof}(Theorem 3)
\noindent In Theorem 2, we used $\textbf{A2}$ and the fact that the BernSVM first derivative is bounded by 1 in order to upper bound the random variables $z_j = \frac{1}{n} \sum_{i=1}^{n} - B_{\delta}^{'}(y_i\bm{x}_i^{\top}\bm{\beta}^{\star})y_i x_{ij}$ for $j=1,\ldots,p$. Lemma 2 shows that the variables  $z_j$ are sub-Gaussian's. Thus, we can see that 
$$\frac{1}{n} \sum_{i=1}^{n} - B_{\delta}^{'}(y_i\bm{x}_i^{\top}\bm{\beta}^{\star})y_i \bm{x}^{\top}_i\bm{h} \leq  z^{\star}||\bm{h}||_1,$$
where $z^{\star}=\max_{j} z_j$.
Following the same steps to prove Theorem 2, we obtain 
$$||\bm{h}||_2 \leq \frac{(z^{\star} +  ||\bm{W}_{\bm{S}}||_{\infty} \lambda_1)\sqrt{s}}{\kappa}.$$
We have also that the event $P_1$ is satisfied with high probability, then we have 
$$||\bm{h}||_2 \leq \frac{(z^{\star} +  w_{\max} \lambda_1)\sqrt{s}}{\kappa} \leq \frac{\gamma(1+w_{\min}w_{\max})\lambda_1\sqrt{s}}{\kappa}.$$
We also have that $\lambda_1 = \bm{O}(\sqrt{\log(p)/n})$. Finally, we obtain $$||\bm{h}||_2 = ||\hat{\bm{\beta}} - \bm{\beta}^{\star}||_2 =  \bm{O}_{P}(\sqrt{s\log(p)/n}). \quad \blacksquare$$
\end{proof}
\subsection{Appendix $\bm{C}$}
\begin{proof} (Lemma 2)

\noindent To prove Lemma 2, we provide some results about the sub-Gaussian random variables
\begin{lemma}
\label{sub2}
$z_1,z_2,...,z_p$ are $p$ real random variables with $z_j \sim subG(\sigma^2)$ not necessarily independent, then for all $t>0$ $$\mathbb{P}(max_{j=1,...,p}|z_j|>t)\leq 2p\exp(-t^2/2\sigma^2).$$
\end{lemma}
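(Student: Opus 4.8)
The plan is to reduce the two-sided maximal deviation to a sum of one-dimensional sub-Gaussian tail bounds via a union bound, which is exactly the device that lets us dispense with any independence assumption on the $z_j$'s. First I would write, for any $t>0$,
$$\mathbb{P}\Big(\max_{j=1,\dots,p}|z_j|>t\Big) = \mathbb{P}\Big(\bigcup_{j=1}^{p}\{|z_j|>t\}\Big) \leq \sum_{j=1}^{p}\mathbb{P}(|z_j|>t),$$
the inequality being subadditivity of the probability measure, which holds regardless of the joint dependence structure of the $z_j$.

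Next I would control each term $\mathbb{P}(|z_j|>t)$ using the defining moment-generating-function bound of a $\mathrm{subG}(\sigma^2)$ variable, namely $\mathbb{E}[e^{\lambda z_j}]\leq e^{\lambda^2\sigma^2/2}$ for all $\lambda\in\mathbb{R}$. A Chernoff argument gives, for $\lambda>0$,
$$\mathbb{P}(z_j>t) \leq e^{-\lambda t}\,\mathbb{E}[e^{\lambda z_j}] \leq e^{-\lambda t + \lambda^2\sigma^2/2},$$
and optimizing the exponent over $\lambda$ by taking $\lambda=t/\sigma^2$ yields $\mathbb{P}(z_j>t)\leq e^{-t^2/(2\sigma^2)}$. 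Applying the identical argument to $-z_j$, which is also $\mathrm{subG}(\sigma^2)$, and adding the two one-sided estimates produces the two-sided bound $\mathbb{P}(|z_j|>t)\leq 2e^{-t^2/(2\sigma^2)}$.

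Finally I would substitute this per-coordinate bound back into the union bound: since each of the $p$ summands is at most $2e^{-t^2/(2\sigma^2)}$,
$$\mathbb{P}\Big(\max_{j=1,\dots,p}|z_j|>t\Big) \leq \sum_{j=1}^{p} 2e^{-t^2/(2\sigma^2)} = 2p\,e^{-t^2/(2\sigma^2)},$$
which is the claim. There is no genuine analytic obstacle here; the only point requiring care is recognizing that the estimate survives the absence of independence, because the union bound uses only the marginal laws of the $z_j$ and the sub-Gaussian tail is a purely marginal property. The choice $\lambda=t/\sigma^2$ in the Chernoff step is the single optimization that delivers the sharp constant $\sigma^2$ in the exponent, and it is what I would take the most care to get right.
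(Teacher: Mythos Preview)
Your argument is correct and is exactly the standard route: union bound to reduce the maximal inequality to marginal tails, then the Chernoff method with the optimal choice $\lambda=t/\sigma^2$ to obtain the two-sided sub-Gaussian tail $\mathbb{P}(|z_j|>t)\leq 2e^{-t^2/(2\sigma^2)}$. The paper itself does not supply a proof of this lemma; it is simply stated in Appendix~\textbf{C} as an auxiliary fact (alongside Hoeffding's lemma) used in the proof of Lemma~\ref{sub3}, so your write-up in fact fills a gap that the paper leaves to the reader.
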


\begin{lemma}[Hoeffding's Lemma 1963]
\label{sub1}
Let $z$ be a random variable such that $\mathcal{E}[z] = 0$ and $z \in [a,b]$ almost surely. Then 
for any $t \in \mathbb{R}$, it holds 
$$\mathbb{E}[e^{tz}] \leq e^{\frac{t^2(b-a)^2}{8}}.$$ 
In particular, $z \sim subG(\frac{(b-a)^2}{4})$.
\end{lemma}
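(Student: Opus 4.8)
The plan is to pass to the cumulant generating function $\psi(t) = \log \mathbb{E}[e^{tz}]$ and prove the equivalent bound $\psi(t) \le t^2(b-a)^2/8$; exponentiating then yields the stated moment-generating-function inequality, and matching $\mathbb{E}[e^{tz}] \le e^{t^2\sigma^2/2}$ with $\sigma^2 = (b-a)^2/4$ gives the ``in particular'' conclusion. First I would record that $\psi$ is well defined and smooth on all of $\mathbb{R}$, since $z$ is bounded and hence $\mathbb{E}[e^{tz}] \in (0,\infty)$ for every $t$; this legitimizes differentiating under the expectation. Evaluating at the origin gives $\psi(0) = 0$ and $\psi'(0) = \mathbb{E}[z] = 0$ by the centering hypothesis.

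The key step is to control $\psi''$ uniformly in $t$. A direct computation gives
\[
\psi''(t) = \frac{\mathbb{E}[z^2 e^{tz}]}{\mathbb{E}[e^{tz}]} - \left(\frac{\mathbb{E}[z e^{tz}]}{\mathbb{E}[e^{tz}]}\right)^2,
\]
which is exactly the variance of $z$ under the exponentially tilted probability measure $d\mathbb{Q}_t = e^{tz}\,d\mathbb{P}/\mathbb{E}[e^{tz}]$. Since $z \in [a,b]$ almost surely under $\mathbb{P}$, it still lies in $[a,b]$ under $\mathbb{Q}_t$, and the variance of any random variable supported on $[a,b]$ is at most $(b-a)^2/4$: centering at the midpoint $c = (a+b)/2$ yields $\mathrm{Var}_{\mathbb{Q}_t}(z) \le \mathbb{E}_{\mathbb{Q}_t}[(z-c)^2] \le (b-a)^2/4$, because $|z-c| \le (b-a)/2$ almost surely. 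Hence $\psi''(t) \le (b-a)^2/4$ for every $t$.

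I would then close the argument with a second-order Taylor expansion of $\psi$ about the origin: for each fixed $t$ there is $\xi$ between $0$ and $t$ with
\[
\psi(t) = \psi(0) + t\,\psi'(0) + \tfrac{t^2}{2}\,\psi''(\xi) \le \tfrac{t^2}{2}\cdot\frac{(b-a)^2}{4} = \frac{t^2(b-a)^2}{8},
\]
using $\psi(0)=\psi'(0)=0$ together with the uniform bound on $\psi''$. Exponentiating recovers $\mathbb{E}[e^{tz}] \le e^{t^2(b-a)^2/8}$, which is the claim.

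The only genuinely delicate points are the justification of differentiating under the expectation and the tilting-variance identity for $\psi''$; both are routine consequences of the boundedness of $z$, so I do not expect a real obstacle. If one prefers to avoid the analytic machinery, an alternative route starts from the convexity bound $e^{tx} \le \tfrac{b-x}{b-a}e^{ta} + \tfrac{x-a}{b-a}e^{tb}$ valid for $x\in[a,b]$; taking expectations and using $\mathbb{E}[z]=0$ reduces the problem to showing that the explicit one-variable function $h \mapsto -ph + \log\bigl(1-p+pe^{h}\bigr)$, with $p=-a/(b-a)$, is bounded by $h^2/8$, which follows from the identical ``value and first derivative vanish at $0$, second derivative $\le 1/4$'' Taylor argument applied to this function.
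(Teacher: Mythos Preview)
Your proof is correct and is one of the standard arguments for Hoeffding's lemma: the tilted-measure interpretation of $\psi''(t)$ as a variance, followed by the elementary bound $\mathrm{Var}(X)\le (b-a)^2/4$ for any $[a,b]$-supported random variable, and a second-order Taylor expansion of $\psi$ about the origin. The alternative convexity route you sketch at the end is also a well-known proof.

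However, there is nothing to compare against: the paper does not supply its own proof of this statement. It simply quotes the lemma as a classical result (attributed to Hoeffding, 1963) and uses it as a black box in the proof of Lemma~\ref{sub3}. So your write-up goes strictly beyond what the paper does; it is a correct, self-contained proof of a lemma the paper takes for granted.
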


\noindent Let $z_j = \frac{1}{n} \sum_{i=1}^{n} - B_{\delta}^{'}(y_i\bm{x}_i^{\top}\bm{\beta}^{\star})y_i x_{ij}$.

\noindent Hence, by applying Lemma 3, we conclude that $z_j$ is a sub-Gaussian with 
variance $\frac{M^2}{n}$.

\noindent We can see that 
\begin{eqnarray*}
|z_j| = |\frac{1}{n} \sum_{i=1}^{n} - B_{\delta}^{'}(y_i\bm{x}_i^{\top}\bm{\beta}^{\star})y_i x_{ij}|&\leq& \frac{1}{n} \sum_{i=1}^{n}|B_{\delta}^{'}(y_i\bm{x}_i^{\top}\bm{\beta}^{\star})||y_i| |x_{ij}|\\
& \leq & \frac{1}{n} \sum_{i=1}^{n} |x_{ij}| \\
&\leq & \frac{1}{\sqrt{n}} \sqrt{\sum_{i=1}^{n} x_{ij}^2} \\
&\leq & \frac{M}{\sqrt{n}}.
\end{eqnarray*}
\noindent The third inequality by applying Cauchy-Schwartz and the fourth inequality by Assumption $\bm{A}3$.

\noindent We conclude that the variables $z_j$ are bounded. Moreover, using the fact that $\bm{\beta}^{\star}$ minimize the population version of the loss function and Assumption $\bm{A} 4$, we have $\mathbb{E}[z_j] = 0$, then, the $z_j$ are sub-Gaussian. $\quad \blacksquare$
\end{proof}
\subsection{Appendix $\bm{D}$}
\begin{proof} (Theorem 4)

\noindent We can see that 

$$ \frac{1}{n} \sum_{i=1}^{n} - B_{\delta}^{'}(y_i\bm{x}_i^{\top}\bm{\beta}^{\star})y_i \bm{x}^{\top}_i\bm{h} \leq  z^{\star}||\bm{h}||_1,$$
where $z^{\star}=\max_{j}\frac{1}{n} \sum_{i=1}^{n} - B_{\delta}^{'}(y_i\bm{x}_i^{\top}\bm{\beta}^{\star})y_i x_{ij}$.

\noindent Then, following the same argument as in the proof of Theorem 2, we have
\begin{eqnarray*}
 0 &< &  \frac{\bm{h}^{\top}\bm{H}\bm{h}}{2}  \\
 &\leq& z^{\star} ||\bm{h}||_1 +\lambda_1 (||\hat{\bm{W}}_{\bm{S}}\bm{h}_{\bm{S}}||_1 - ||\hat{\bm{W}}_{\bm{S}^{c}}\bm{h}_{\bm{S}^{c}}||_1)\\
& \leq & z^{\star}||\bm{h}||_1 +\lambda_1 (||\bm{W}_{\bm{S}}\bm{h}_{\bm{S}}||_1 - ||\bm{W}_{\bm{S}^{c}}\bm{h}_{\bm{S}^{c}}||_1).\\
\end{eqnarray*}
%\noindent However, the lemma 3 implies that for some $t>0$ we have $\max_{\{j=1,...,p\}}\{|z_j|\} \leq t$ with probability $1-2p\exp(-t^2/2\sigma^2)$, where $\sigma^2  = \frac{M^2}{n}$. 

\noindent In the fact that the right hand of inequality is positive, and the second inequality is because the event $\Omega_0$ is realized.
 Thus, we have with probability $1-\exp(-c_0 n)$ that 

\begin{eqnarray*}
0 &\leq& z^{\star} ||\bm{h}||_1 + \lambda_1(||\bm{W}_{\bm{S}}\bm{h}_{\bm{S}}||_1 - ||\bm{W}_{\bm{S}^{c}}\bm{h}_{\bm{S}^{c}}||_1),\\
& = & z^{\star} ||\bm{h}_{\bm{S}}||_1 + z^{\star} ||\bm{h}_{\bm{S}^{c}}||_1 + \lambda_1(||\bm{W}_{\bm{S}}\bm{h}_{\bm{S}}||_1 - ||\bm{W}_{\bm{S}^{c}}\bm{h}_{\bm{S}^{c}}||_1)\\
&=& z^{\star} ||\bm{h}_{\bm{S}}||_1 + z^{\star} ||\bm{W}_{\bm{S}^{c}}^{-1}\bm{W}_{\bm{S}^{c}}\bm{h}_{\bm{S}^{c}}||_1 + \lambda_1(||\bm{W}_{\bm{S}}\bm{h}_{\bm{S}}||_1 - ||\bm{W}_{\bm{S}^{c}}\bm{h}_{\bm{S}^{c}}||_1)\\
&\leq & z^{\star} ||\bm{h}_{\bm{S}}||_1 + z^{\star} w_{min}||\bm{W}_{\bm{S}^{c}}\bm{h}_{\bm{S}^{c}}||_1 + \lambda_1 w_{max}||\bm{h}_{\bm{S}}||_1 - \lambda_1||\bm{W}_{\bm{S}^{c}}\bm{h}_{\bm{S}^{c}}||_1)\\
& = & (z^{\star} + \lambda_1 w_{max})||\bm{h}_{\bm{S}}||_1 - (\lambda_1 - z^{\star} w_{min})||\bm{W}_{\bm{S}^{c}}\bm{h}_{\bm{S}^{c}}||_1.\\
\end{eqnarray*}

\noindent which implies that the Lasso BernSVM error satisfies the cone constraint given by 
$$\mathcal{C}(S) = \{\bm{h}\in \mathbb{R}^{p}:||\bm{W}_{\bm{S}^{c}}\bm{h}_{\bm{S}^{c}}||_1 \leq  \gamma||\bm{h}_S||_1 \},$$
because the event $P_1$ is realized with high probability, which means that  
\begin{equation*}
%\label{gama}
\gamma > \frac{(z^{\star} + \lambda_1 w_{max})}{(\lambda_1 - z^{\star}w_{min})}.
\end{equation*}
%\noindent However, we need another assumption to have $\lambda_1 >  ||\bm{W}_{\bm{S}^{c}}^{-1}||_{\infty}$. 

\noindent Thus, we have 
 that the error $\bm{h}$ belongs to the set $\mathcal{C}(S) $. 

%\noindent \noindent Let $\alpha = \max\{||\bm{W}_{\bm{S}}||_{\infty},||\bm{W}_{\bm{S}^{c}}^{-1}||_{\infty}^{-1})\}$.

\noindent  Let $\alpha = \max\{w_{max},w_{min}^{-1})\}$.
Moreover, from the inequality 
above  and  the RSC definition,
%Assumption $(\bm{A} 2)$, 
we have 
\begin{eqnarray*}
\kappa ||\bm{h}||_2^2 &\leq& (z^{\star} + \lambda_1 w_{max})||\bm{h}_{\bm{S}}||_1 - (\lambda_1 - z^{\star} w_{min})||\bm{W}_{\bm{S}^{c}}\bm{h}_{\bm{S}^{c}}||_1\\
&\leq& (w_{max} + w_{min}^{-1}) \lambda_1 ||\bm{h}_S||_1 , \\
&\leq & 2 \alpha \lambda_1\sqrt{s}||\bm{h}||_2.\\
\end{eqnarray*}
\noindent Then, we obtain that 
	$$||\bm{h}||_2 \leq  \frac{2 \alpha \lambda_1\sqrt{s}}{\kappa}.$$
\noindent Thus, 
$$||\hat{\bm{\beta}} - \bm{\beta}^{\star}||_2 =  \bm{O}_{P}(\sqrt{s\log(p)/n}). \quad \blacksquare$$
\end{proof}
\subsection{Appendix $\bm{E}$}
\begin{proof} (Theorem 5)

\noindent We have 
\begin{eqnarray*}
 \kappa||\bm{h}||^2_2 &< &  \frac{\bm{h}^{\top}\bm{H}\bm{h}}{2}  \\
 &\leq& z^{\star} ||\bm{h}||_2 +\lambda_1 (||\hat{\bm{W}}_{\bm{S}}\bm{h}_{\bm{S}}||_1 - ||\hat{\bm{W}}_{\bm{S}^{c}}\bm{h}_{\bm{S}^{c}}||_1)\\
&\leq& z^{\star} ||\bm{h}||_2 +\lambda_1 ||\hat{\bm{W}}_{\bm{S}}\bm{h}_{\bm{S}}||_1.\star\\
\end{eqnarray*}
For all $j \in S$, we have $\lambda_1\hat{w}_j = P^{'}_{\lambda_1}(|\tilde{\beta_j}|)$.
We have also from Taylor expansion that
$$P^{'}_{\lambda_1}(|\tilde{\beta_j}|) = P^{'}_{\lambda_1}(|\beta^{\star}_j|) + P^{''}_{\lambda_1}(|\beta^{\star}_j|)(\tilde{\beta}_j-\beta^{\star}_j)\leq P^{'}_{\lambda_1}(|\beta^{\star}_j|) + \frac{1}{a-1}(\tilde{\beta}_j-\beta^{\star}_j).$$
Thus from ($\star$), we obtain 
\begin{eqnarray*}
 \kappa||\bm{h}||^2_2 &< &  ||\bm{h}||_2(z^{\star} + ||P^{'}_{\lambda_1}(|\bm{\beta}^{\star}_{S}|)||_2 + \frac{1}{a-1} ||\tilde{\bm{\beta}}-\bm{\beta}^{\star}||_2). 
\end{eqnarray*}
Therefore,
$$||\bm{h}||_2 \leq \frac{1}{\kappa}(\frac{(\gamma - w_{max})\lambda_1}{1 + \gamma w_{min}} + ||P^{'}_{\lambda_1}(|\bm{\beta}^{\star}_{S}|)||_2 + \frac{1}{a-1} ||\tilde{\bm{\beta}}-\bm{\beta}^{\star}||_2). \quad \blacksquare$$
\end{proof}
\newpage
%\printbibliography
\bibliographystyle{apalike}
\bibliography{biblio}
\end{document}